\documentclass{article}

\usepackage{natbib}
\usepackage{iclr2027_conference,times}

\iclrfinalcopy

\usepackage{amssymb}
\usepackage{amsthm}
\usepackage{bm}
\usepackage{booktabs}
\usepackage{dsfont}
\usepackage{enumitem}
\usepackage{graphicx}
\usepackage{mathtools}
\usepackage{microtype}
\usepackage{nicefrac}
\usepackage{physics2}
\usepackage{silence}
\usepackage{thmtools}
\usepackage{titletoc}
\usepackage{xcolor}
\usepackage{xurl}

\usepackage[hidelinks,hypertexnames=false]{hyperref}
\usepackage{bookmark}
\usepackage[capitalize,nameinlink]{cleveref}

\usepackage[breakable]{tcolorbox}

\title{Adversarially Trained Linear Transformers Are Optimal Robust In-Context Learners for Gaussian Mixtures}

\author{Soichiro Kumano\\
LY Corporation\\
\texttt{sokumano@lycorp.co.jp}}

\colorlet{myred}{red!70!black}
\colorlet{myblue}{blue!70!black}

\hypersetup{
  colorlinks,
  citecolor = myblue,
  linkcolor = myred,
  urlcolor = myred
}

\crefname{assumption}{Assumption}{Assumptions}
\crefname{figure}{Fig.}{Figs.}
\crefname{table}{Tab.}{Tabs.}
\crefname{enumi}{}{}

\declaretheorem[style = plain, name = Theorem, numberwithin = section]{theorem}

\declaretheorem[style = plain, name = Lemma, sibling = theorem]{lemma}

\declaretheorem[style = definition, name = Assumption, sibling = theorem]{assumption}

\tcbset{
  theorem box/.style = {
    breakable,
    sharp corners,
    boxsep = 0mm,
    boxrule = 0mm,
    colframe = white,
    left = .5mm,
    right = .5mm,
  }
}

\tcolorboxenvironment{theorem}{theorem box}
\tcolorboxenvironment{corollary}{theorem box}
\tcolorboxenvironment{lemma}{theorem box}
\tcolorboxenvironment{proposition}{theorem box}
\tcolorboxenvironment{assumption}{theorem box}
\tcolorboxenvironment{definition}{theorem box}
\tcolorboxenvironment{setting}{theorem box}
\tcolorboxenvironment{restatable}{theorem box}
\tcolorboxenvironment{restatable*}{theorem box}

\allowdisplaybreaks

\usephysicsmodule{ab}

\begin{document}

\maketitle

\begin{abstract}
  Adversarial training is one of the most reliable defenses against adversarial attacks, but its high computational cost must generally be paid anew for each task.
  Robust foundation models offer a promising alternative: adversarially pretrain a model once and then transfer its robustness to downstream tasks through lightweight adaptation.
  However, a fundamental question remains open: can robustness acquired during pretraining transfer to unseen tasks without further adversarial training?
  In this study, we answer this question affirmatively.
  A single model adversarially pretrained at scale can achieve optimal robustness on new tasks without additional task-specific training.
  Specifically, we show that, for a family of Gaussian-mixture classification tasks, a sufficiently deep linear transformer adversarially trained across tasks can asymptotically attain the robust Bayes error on previously unseen tasks through in-context learning from clean demonstrations.
  By contrast, a standardly trained model cannot.
  We further analyze convergence under gradient flow, an accuracy--robustness trade-off, and demonstration complexity.
\end{abstract}

\section{Introduction}

Adversarial examples, inputs that are slightly perturbed to cause machine learning models to make incorrect predictions, expose a fundamental vulnerability of deep learning systems~\citep{AE}.
Adversarial training is one of the most reliable defenses against adversarial attacks~\citep{FGSM,PGD}.
It improves robustness by minimizing the loss under worst-case perturbations.
However, the resulting min--max optimization makes adversarial training substantially more expensive than standard training.
While many cheaper defenses have been proposed, some offer only apparent robustness that disappears under stronger attacks~\citep{CW,carlini2017adversarial,BreakICLR18,AutoAttack,AdaptiveAttack}.
Consequently, obtaining reliable robustness generally requires paying the cost of adversarial training anew for each task.

Meanwhile, foundation models have transformed how models are developed and deployed.
Rather than training a separate model for each task, practitioners pretrain a single model at scale and adapt it to downstream tasks using lightweight methods.
This paradigm suggests a compelling possibility: robust foundation models.
Adversarial training would be performed only once during pretraining, and the resulting robustness could then be transferred to downstream tasks at little additional cost.
Although adversarial pretraining incurs an initial cost, model developers~(typically large organizations) could recoup it by licensing the model.
The fundamental question here is:
\emph{Can an adversarially pretrained model adapt to unseen downstream tasks while retaining robustness, without further adversarial training?}

\citet{kumano2026adversarially} provided the first theoretical evidence that such transfer might be possible.
They showed that an adversarially pretrained single-layer transformer with linear attention can adapt robustly to unseen classification tasks through in-context learning.
In-context learning is a particularly lightweight adaptation method that enables transformers to learn from demonstrations in the prompt without updating their parameters~\citep{ICL}.
This result suggests that robustness acquired during pretraining can transfer to new tasks without additional training.
However, their analysis is limited to a restricted data model and a single-layer architecture.
Moreover, they provided only bounds on the expected robust margin.
Thus, it does not provide direct guarantees on robust accuracy or evaluate robust optimality.
Consequently, our central question remains open in broader settings.

In this study, we provide a stronger affirmative answer to this question under substantially more general conditions.
We consider multilayer linear transformers adversarially pretrained on a family of Gaussian-mixture classification tasks.
We focus on the robust zero-one error of these transformers on tasks not encountered during pretraining.
In particular, we investigate whether this error approaches the robust Bayes error, which is the minimum robust error attainable by any classifier, even one with full knowledge of the task distribution.

Specifically, we show that, through in-context learning from clean demonstrations, a sufficiently deep linear transformer can asymptotically attain the robust Bayes error on every task in this Gaussian family, including tasks unseen during pretraining.
In other words, using only standard samples and without task-specific adversarial training or access to adversarial examples, a single linear transformer can asymptotically serve as an optimal robust predictor for each task.
We should note that the transformer’s parameters remain fixed and shared across all downstream tasks.
Nevertheless, the transformer can adapt to each task and implement its optimal robust prediction rule.

Moreover, we show that a standardly pretrained transformer cannot achieve optimal robustness, even with infinite pretraining tasks and in-context demonstrations.
This result indicates that optimal robust in-context learning cannot be attributed solely to model capacity or task diversity; rather, it stems from an inherent advantage of adversarial pretraining.

We further investigate several properties of robust in-context learning.
Whereas the discussion above focuses on empirical-risk minimizers, we also prove that gradient flow in a two-layer architecture converges to a predictor that attains robust Bayes optimality.
We also identify two fundamental challenges in developing robust transformers.
The first is an inherent accuracy--robustness trade-off: adversarial pretraining can degrade clean accuracy even in the infinite-data limit.
The second is greater demonstration complexity: even at the same Bayes excess risk, robust prediction requires more demonstrations than standard prediction.

In summary, we prove the following:

\begin{itemize}[leftmargin = .5\leftmargini]
  \item A sufficiently deep linear transformer asymptotically achieves optimal robustness across a family of Gaussian-mixture classification tasks through in-context learning from clean demonstrations.

  \item Adversarial pretraining makes the transformer an optimal robust in-context learner, whereas standard pretraining does not, even with unlimited data.

  \item Under mild conditions, (i) gradient flow leads the transformer to become such an in-context learner, (ii) an accuracy--robustness trade-off arises, and (iii) robust in-context prediction requires more demonstrations than standard prediction.
\end{itemize}

\section{Related Work}

\textbf{Adversarial training.}
Numerous defenses against adversarial examples~\citep{AE} have been proposed, but many have subsequently been circumvented by stronger attacks~\citep{CW,carlini2017adversarial,BreakICLR18,AutoAttack,AdaptiveAttack}.
Although adversarial training~\citep{FGSM,PGD} is one of the most reliable defenses against a wide range of strong attacks, its computational cost is high due to multiple additional backpropagation steps beyond those needed for standard training.
Prior work has sought to reduce this cost by approximating multi-step backpropagation~\citep{FreeAT,YOPO,ATTA}, reducing the number of backpropagation steps~\citep{GradAlign,FastAT,kim2021understanding,SLAT,jia2022prior}, or adversarially finetuning models after standard training~\citep{AFT,DRA,ARREST,CURE}.
Nevertheless, all these approaches still require adversarial training to be performed separately for each task.
In contrast, we adversarially pretrain a linear transformer once and adapt it to downstream tasks through in-context learning from clean demonstrations.
This approach requires neither new adversarial examples nor adversarial training during downstream-task adaptation.

\textbf{In-context learning.}
In-context learning enables a transformer to learn a task from demonstrations in the prompt, without updating its parameters~\citep{ICL}.
Existing theoretical work on in-context learning has largely focused on the algorithms that transformers can implement in context.
For example, appropriately parameterized transformers can implement one gradient descent step per layer in context, so that propagation through multiple layers corresponds to multiple gradient descent steps~\citep{ahn2023transformers,bai2023transformers,cheng2024transformers,gatmiry2024can,mahankali2023one,zhang2024trained}.
In-context learning can also simulate second-order optimization~\citep{fu2024transformers,giannou2024well}, reinforcement learning algorithms~\citep{lee2023supervised,lin2024transformers}, and Bayesian inference~\citep{xie2022explanation,zhang2025and}.
Unlike these studies, we do not seek to identify the algorithms that in-context learning can simulate.
Instead, we study the ability of in-context learning to transfer robustness acquired in adversarial pretraining to downstream tasks.

\textbf{Adversarial robustness of in-context learning.}
The adversarial robustness of in-context learning remains relatively underexplored from a theoretical perspective.
\citet{anwar2025understanding} studied context hijacking and showed that modifying in-context tokens can cause a transformer to output an arbitrary value for a clean query.
\citet{li2026robustness} showed that deeper transformers are more robust to context hijacking.
\citet{fu2026short} showed that adversarial training with short adversarial suffixes in the context and clean queries can improve robustness to longer adversarial suffixes at test time.
\citet{fu2026understanding} used the theory of in-context learning to analyze continuous adversarial training~\citep{xhonneux2024efficient}.
Both our work and that of \citet{kumano2026adversarially} differ from these studies in terms of problem setting and objective.
Specifically, we ask whether a transformer can learn in context from clean demonstrations to make robust predictions on adversarially perturbed queries.
Following our theoretical results, we provide a detailed comparison with \citet{kumano2026adversarially} in \cref{sec:comparison}.

\section{Theoretical Results}

\textbf{Notation.}
For each $n \in \mathbb{N}$, define $[n] := \{1, \ldots, n\}$.
We denote the $i$-th element of a vector $\bm{a}$ by $a_{i}$ and the entry in row $i$ and column $j$ of a matrix $\bm{A}$ by $A_{i,j}$.
For $m, n \in \mathbb{N}$, let $\bm{0}_{n}$, $\bm{0}_{m,n}$, and $\bm{I}_{n}$ denote the $n$-dimensional zero vector, the $m \times n$ zero matrix, and the $n \times n$ identity matrix, respectively.
For $d \in \mathbb{N}$, define the real orthogonal group in dimension $d$ by $\mathcal{O}(d) := \{\bm{R} \in \mathbb{R}^{d \times d} : \bm{R}^{\top} \bm{R} = \bm{I}_{d}\}$.
We write $\|\,\cdot\,\|_{2}$ for the Euclidean norm and $\|\,\cdot\,\|_{\mathrm{F}}$ for the Frobenius norm.
Let $(X_{\bm{n}})_{\bm{n} \in \mathbb{N}^{m}}$ be nonnegative random variables and $(a_{\bm{n}})_{\bm{n} \in \mathbb{N}^{m}}$ be positive deterministic values; we write $X_{\bm{n}} \lesssim_{\mathbb{P}} a_{\bm{n}}$ if, for every $\delta \in (0, 1)$, there exists $c_{\delta} > 0$ such that $\mathbb{P}(X_{\bm{n}} \leq c_{\delta} a_{\bm{n}}) \geq 1 - \delta$ for all $\bm{n} \in \mathbb{N}^{m}$.

\subsection{Problem Setting}

\textbf{Overview.}
We consider pretraining an $L$-layer linear transformer $f$ on $T$ tasks.
For each $t \in [T]$, the $t$-th task consists of $N$ in-context demonstrations $\mathcal{D}_{t,N} := \{(\bm{x}_{t,n}, y_{t,n})\}_{n=1}^{N}$ and a test pair $(\bm{x}_{t,N+1}, y_{t,N+1})$.
Given the clean demonstrations $\mathcal{D}_{t,N}$, the transformer uses in-context learning to predict the binary label of the adversarially perturbed query $\bm{x}_{t,N+1} + \bm{\Delta}$.
The adversary chooses $\bm{\Delta}$ subject to $\|\bm{\Delta}\|_{2} \leq \epsilon \sqrt{d}$.
We train the transformer parameters $\bm{\Theta}_{L}$ by minimizing the empirical risk $\mathcal{R}_{\epsilon,T,N}(\bm{\Theta}_{L})$.
We then evaluate the resulting predictor on a freshly sampled test task using the taskwise zero-one error $\mathcal{E}_{\epsilon,\bm{U}}(f(\,\cdot\,; \bm{\Theta}_{L}, \mathcal{D}_{N}))$, where $\bm{U}$ denotes the task-specific variable.
Our main interest is the excess error $\mathcal{E}_{\epsilon,\bm{U}}(f(\,\cdot\,; \bm{\Theta}_{L}, \mathcal{D}_{N})) - \mathcal{E}^{\star}_{\epsilon,\bm{U}}$, where $\mathcal{E}^{\star}_{\epsilon,\bm{U}}$ denotes the corresponding taskwise Bayes error.

\textbf{Task structure.}
Let $T, N \in \mathbb{N}$ denote the number of pretraining tasks and the number of in-context demonstrations per task, respectively.
For each $t \in [T]$, the $t$-th task consists of $N$ demonstrations $\mathcal{D}_{t,N} := \{(\bm{x}_{t,n}, y_{t,n})\}_{n=1}^{N} \subset \mathbb{R}^{d} \times \{\pm 1\}$ and a test pair $(\bm{x}_{t,N+1}, y_{t,N+1})$.
Let $\bm{U}_{t} := [\bm{u}_{t,1}, \ldots, \bm{u}_{t,d}] \in \mathcal{O}(d)$ denote an orthogonal matrix whose columns form a task-specific orthonormal basis.
For each $(t, n) \in [T] \times [N+1]$, let $\bm{a}_{t,n} := [a_{t,n,1}, \ldots, a_{t,n,d}]^{\top} \in \mathbb{R}^{d}$ denote the coefficient vector for the $n$-th sample of the $t$-th task, and define the corresponding input vector as
\begin{equation}
  \bm{x}_{t,n} := y_{t,n} \bm{U}_{t} \bm{a}_{t,n}
  = y_{t,n}(a_{t,n,1} \bm{u}_{t,1} + \cdots + a_{t,n,d} \bm{u}_{t,d}).
  \label{eq:task-structure}
\end{equation}

\textbf{Task distribution.}
The matrices $\{\bm{U}_{t}\}_{t=1}^{T}$ are independently drawn according to the normalized Haar measure on $\mathcal{O}(d)$.
In other words, the columns of each $\bm{U}_{t}$ form an orthonormal basis chosen uniformly at random.
The labels $\{y_{t,n}\}_{t \in [T], n \in [N+1]}$ are independently drawn from the uniform distribution on $\{\pm 1\}$.
The coefficient vectors $\{\bm{a}_{t,n}\}_{t \in [T], n \in [N+1]}$ are independently drawn from the multivariate normal distribution with mean $\bm{\mu} \in \mathbb{R}^{d}$ and covariance $\bm{\Sigma} \in \mathbb{R}^{d \times d}$.
These three collections are mutually independent.
The parameters $\bm{\mu}$ and $\bm{\Sigma}$ are common across tasks.

\textbf{Dimension of the Krylov subspace.}
Let $r \in [d]$ denote the dimension of $\mathcal{K}_{d}(\bm{\Sigma}, \bm{\mu})$, the order-$d$ Krylov subspace generated by $\bm{\Sigma}$ and $\bm{\mu}$.
More precisely, define
\begin{equation}
  \mathcal{K}_{d}(\bm{\Sigma}, \bm{\mu})
  := \operatorname{span}(\bm{\mu}, \bm{\Sigma} \bm{\mu}, \bm{\Sigma}^{2} \bm{\mu},
  \ldots, \bm{\Sigma}^{d-1} \bm{\mu}), \qquad
  r := \dim \mathcal{K}_{d}(\bm{\Sigma}, \bm{\mu}).
  \label{eq:krylov-dim}
\end{equation}
Equivalently, $r$ is the largest integer $k \in [d]$ such that the vectors $\bm{\mu}, \bm{\Sigma}\bm{\mu}, \ldots, \bm{\Sigma}^{k-1}\bm{\mu}$ are linearly independent.
Consequently, $\mathcal{K}_{d}(\bm{\Sigma}, \bm{\mu}) = \mathcal{K}_{r}(\bm{\Sigma}, \bm{\mu})$.
Moreover, $r$ equals the number of distinct eigenvalues of $\bm{\Sigma}$ for which $\bm{\mu}$ has a nonzero projection onto the corresponding eigenspace.
If $\bm{\Sigma} = \bm{I}_{d}$, so that the Gaussian coordinates are mutually independent and have unit variance, then $r = 1$.
If $\bm{\mu}$ is the all-ones vector and $\bm{\Sigma}$ is diagonal, then $r$ equals the number of distinct diagonal entries of $\bm{\Sigma}$.
In general, $r$ characterizes the representational complexity of the Bayes classifier for the Gaussian data $\bm{z} := y \bm{a}$.
Without adversarial perturbations, this classifier is $(\bm{\Sigma}^{-1} \bm{\mu})^{\top} \bm{z}$.
Its weight vector $\bm{\Sigma}^{-1}\bm{\mu}$ belongs to $\mathcal{K}_{r}(\bm{\Sigma}, \bm{\mu})$, and $r$ is the smallest Krylov order required to represent this vector.

\textbf{Input sequence.}
Given the clean in-context demonstrations $\mathcal{D}_{N}$ and a possibly perturbed query $\bm{z} \in \mathbb{R}^{d}$, we define the input sequence $\bm{Z}^{(0)}$ for a linear transformer as
\begin{equation}
  \bm{Z}^{(0)} :=
  \begin{bmatrix}
    \bm{x}_{1} & \bm{x}_{2} & \cdots & \bm{x}_{N} & \bm{z} \\
    y_{1} & y_{2} & \cdots & y_{N} & 0
  \end{bmatrix}
  \in \mathbb{R}^{(d+1) \times (N+1)}.
  \label{eq:input-sequence}
\end{equation}
The bottom-right entry serves as a placeholder for the query prediction.

\textbf{Linear transformer.}
In this study, we focus on transformers with linear self-attention, which are commonly used in recent theoretical work~\citep{ahn2023transformers,mahankali2023one,gatmiry2024can,cheng2024transformers,zhang2024trained,kumano2026adversarially}.
Let $L \in \mathbb{N}$ denote the number of layers.
For each $\ell \in [L]$, we recursively define
\begin{equation}
  \bm{Z}^{(\ell)}
  := \bm{Z}^{(\ell-1)}
  + \frac{1}{N} \bm{P}_{\ell} \bm{Z}^{(\ell-1)} \bm{M}
  \bm{Z}^{(\ell-1)\top} \bm{Q}_{\ell}
  \bm{Z}^{(\ell-1)}, \qquad
  \bm{M} :=
  \begin{bmatrix}
    \bm{I}_{N} & \bm{0}_{N} \\
    \bm{0}_{N}^{\top} & 0
  \end{bmatrix},
  \label{eq:multilayer-linear-transformer}
\end{equation}
where $\bm{P}_{\ell}, \bm{Q}_{\ell} \in \mathbb{R}^{(d+1) \times (d+1)}$ are, respectively, the trainable value and key--query matrices in layer $\ell$.
As in prior work on in-context learning~\citep{ahn2023transformers,gatmiry2024can,cheng2024transformers,kumano2026adversarially}, we use the masking matrix $\bm{M}$ to prevent the tokens from attending to the query token.
The resulting scalar predictor and the collection of trainable parameters are defined as
\begin{equation}
  f(\bm{z}; \bm{\Theta}_{L}, \mathcal{D}_{N})
  := \ab[\bm{Z}^{(L)}]_{d+1,N+1}, \qquad
  \bm{\Theta}_{L} := {\{(\bm{P}_{\ell}, \bm{Q}_{\ell})\}}_{\ell=1}^{L}.
  \label{eq:multilayer-scalar-predictor}
\end{equation}

\textbf{Training objective.}
Let $\mathcal{L}: \mathbb{R} \to [0, \infty)$ be a loss function.
Based on prior work~\citep{bai2023transformers,ahn2023transformers,mahankali2023one,zhang2024trained,kumano2026adversarially}, we define the empirical risk over $T$ pretraining tasks, each with $N$ demonstrations, as follows:
\begin{equation}
  \mathcal{R}_{\epsilon,T,N}(\bm{\Theta}_{L})
  := \frac{1}{T} \sum_{t=1}^{T}
  \max_{\|\bm{\Delta}\|_{2} \leq \epsilon \sqrt{d}}
  \mathcal{L}\Big(
    y_{t,N+1} f(\bm{x}_{t,N+1} + \bm{\Delta}; \bm{\Theta}_{L}, \mathcal{D}_{t,N})
  \Big).
  \label{eq:main-finite-sample-risk}
\end{equation}
Minimizing $\mathcal{R}_{0,T,N}$ corresponds to standard training, whereas minimizing $\mathcal{R}_{\epsilon,T,N}$ for $\epsilon > 0$ corresponds to adversarial training.
Training the transformer on multiple tasks with a single shared set of parameters encourages it to adaptively infer the task-specific structure from the demonstrations and generalize across tasks, rather than specialize in any particular task.

\textbf{Taskwise zero-one error.}
For each fixed $\bm{U} \in \mathcal{O}(d)$, let $(\bm{x}, y)$ be a fresh test pair independently drawn from the same task-conditional distribution as the pretraining samples.
We define the taskwise robust zero-one error of a predictor $g: \mathbb{R}^{d} \to \mathbb{R}$ by
\begin{equation}
  \mathcal{E}_{\epsilon, \bm{U}}(g)
  := \mathbb{P}_{(\bm{x}, y) \mid \bm{U}}
  \ab(
    \inf_{\|\bm{\Delta}\|_{2} \leq \epsilon \sqrt{d}}
    y g(\bm{x} + \bm{\Delta}) \leq 0
  ).
  \label{eq:robust-zero-one-error}
\end{equation}
Setting $\epsilon = 0$ yields the taskwise standard zero-one error $\mathcal{E}_{0,\bm{U}}$.

\textbf{Taskwise Bayes error.}
Let $\mathcal{G}$ denote the class of all measurable functions $g: \mathbb{R}^{d} \to \mathbb{R}$.
For each $\bm{U} \in \mathcal{O}(d)$, we define the taskwise robust Bayes error by $\mathcal{E}_{\epsilon,\bm{U}}^{\star} := \inf_{g \in \mathcal{G}} \mathcal{E}_{\epsilon,\bm{U}}(g)$.
Setting $\epsilon = 0$ yields the taskwise standard Bayes error $\mathcal{E}_{0,\bm{U}}^{\star}$.

\textbf{Assumptions.}
The assumptions used in this study are as follows:

\begin{assumption}\label{asm:summary}
  \
  \begin{enumerate}[label = {(\alph*)}, leftmargin = *]
    \item\label{asm:mean-covariance} The mean vector $\bm{\mu}$ is nonzero, and the covariance matrix $\bm{\Sigma}$ is symmetric and positive definite; $\bm{\mu} \ne \bm{0}_{d}$ and $\bm{\Sigma} = \bm{\Sigma}^{\top} \succ \bm{0}_{d,d}$.

    \item\label{asm:perturbation-radius} The perturbation radius $\epsilon \sqrt{d}$ is smaller than the norm of the mean vector $\bm{\mu}$; $0 \leq \epsilon \sqrt{d} < \|\bm{\mu}\|_{2}$.

    \item\label{asm:depth-condition} The number of layers $L$ is at least the dimension $r$ of the Krylov subspace; $L \geq r$.

    \item\label{asm:loss} The loss function $\mathcal{L}: \mathbb{R} \to [0, \infty)$ is twice differentiable and satisfies $\mathcal{L}'(z) < 0$ and $\mathcal{L}''(z) > 0$ for every $z \in \mathbb{R}$, and $\sup_{z \in \mathbb{R}}\{|\mathcal{L}'(z)| + \mathcal{L}''(z)\} < \infty$.
  \end{enumerate}
\end{assumption}

Condition~\cref{asm:mean-covariance} rules out degenerate classification problems.
Condition~\cref{asm:perturbation-radius} prevents an adversary from completely canceling the mean signal.
The role of Condition~\cref{asm:depth-condition} is explained in the next section.
Condition~\cref{asm:loss} ensures that the empirical risk minimization problem is well-posed.
The logistic loss is an example satisfying Condition~\cref{asm:loss}.

\subsection{A Single Linear Transformer Can Represent Robust Bayes Classifiers Across Tasks}

In this section, we first show that, given sufficient depth, a single linear transformer can asymptotically achieve the taskwise robust Bayes error through in-context learning.
At this stage, we do not address whether the required parameters can be learned by empirical risk minimization.

\begin{restatable}[A single linear transformer can be Bayes optimal across tasks]{theorem}{robustbayesrepresentability}\label{thm:main-robust-bayes-representability}
  Suppose that \cref{asm:mean-covariance,asm:perturbation-radius,asm:depth-condition} in \cref{asm:summary} hold.
  Then, there exists a parameter $\bm{\Theta}_{L}$ such that, for every $\bm{U} \in \mathcal{O}(d)$,
  \begin{equation}
    \mathcal{E}_{\epsilon, \bm{U}}\Big(
      f(\,\cdot\,; \bm{\Theta}_{L}, \mathcal{D}_{N})
    \Big)
    - \mathcal{E}_{\epsilon, \bm{U}}^{\star}
    \lesssim_{\mathbb{P}} \frac{1}{N}.
    \label{eq:main-robust-bayes-representability-rate}
  \end{equation}
  The implied constant may depend on $d$, $\epsilon$, $\bm{\mu}$, and $\bm{\Sigma}$.
\end{restatable}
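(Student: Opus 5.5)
The plan is to (i) identify the taskwise robust Bayes classifier explicitly as a rotated linear rule in the Krylov subspace, (ii) build one fixed parameter $\bm{\Theta}_{L}$ whose in-context output is a polynomial in empirical moments approximating that rule, and (iii) turn the $N^{-1/2}$ moment error into an $N^{-1}$ excess error via a second-order argument at the optimum.

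\textbf{Step 1: the robust Bayes rule.} Conditionally on $\bm{U}$ and $y$, we have $\bm{x} \sim \mathcal{N}(y\bm{U}\bm{\mu}, \bm{U}\bm{\Sigma}\bm{U}^{\top})$, a symmetric two-component Gaussian mixture with common covariance. I would invoke (and reprove in the appendix if needed) the known characterization of $\ell_{2}$-robust Bayes classifiers for such mixtures. Under \cref{asm:perturbation-radius}, an optimal rule is linear, $g^{\star}(\bm{x}) = (\bm{U}\bm{w}^{\star})^{\top}\bm{x}$, where $\bm{w}^{\star}$ maximizes
\begin{equation*}
  \rho(\bm{w}) := \frac{\bm{w}^{\top}\bm{\mu} - \epsilon\sqrt{d}\,\|\bm{w}\|_{2}}{\|\bm{\Sigma}^{1/2}\bm{w}\|_{2}} .
\end{equation*}
The corresponding robust error is $\Phi(-\rho(\bm{w}^{\star}))$, and stationarity gives $\bm{w}^{\star} \propto (\bm{\Sigma}+\lambda\bm{I}_{d})^{-1}\bm{\mu}$ for some $\lambda \ge 0$. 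Since $(\bm{\Sigma}+\lambda\bm{I}_{d})^{-1}\bm{\mu} \in \mathcal{K}_{r}(\bm{\Sigma},\bm{\mu})$, there is a real polynomial $p$ of degree at most $r-1$ with $\bm{w}^{\star} = p(\bm{\Sigma})\bm{\mu}$. We have $\bm{S} := \mathbb{E}[\bm{a}\bm{a}^{\top}] = \bm{\Sigma}+\bm{\mu}\bm{\mu}^{\top}$ and $\mathcal{K}_{r}(\bm{S},\bm{\mu}) = \mathcal{K}_{r}(\bm{\Sigma},\bm{\mu})$, because $\bm{S}^{k}\bm{\mu} \in \operatorname{span}(\bm{\mu},\ldots,\bm{\Sigma}^{k}\bm{\mu})$ by induction and conversely. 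Hence we may equally write $\bm{w}^{\star} = q(\bm{S})\bm{\mu}$ with $\deg q \le r-1 \le L-1$. Crucially, $p$, $q$, and $\lambda$ do not depend on $\bm{U}$.

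\textbf{Step 2: a $\bm{U}$-independent construction.} The empirical moments are $\hat{\bm{m}} := \frac{1}{N}\sum_{n} y_{n}\bm{x}_{n} = \bm{U}\hat{\bm{\mu}}$ and $\hat{\bm{S}}_{x} := \frac{1}{N}\sum_{n}\bm{x}_{n}\bm{x}_{n}^{\top} = \bm{U}\hat{\bm{S}}\bm{U}^{\top}$. Therefore $\bm{z}^{\top}q(\hat{\bm{S}}_{x})\hat{\bm{m}} = \bm{z}^{\top}\bm{U}q(\hat{\bm{S}})\hat{\bm{\mu}}$ automatically carries the correct rotation.

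I would keep the $\bm{x}$-rows frozen and only update the label row. Take $\bm{P}_{\ell} = \bm{e}_{d+1}\bm{e}_{d+1}^{\top}$ and $\bm{Q}_{\ell} = \operatorname{diag}(c_{\ell}\bm{I}_{d}, \beta_{\ell})$. Write $o_{\ell}$ for the query's label entry after layer $\ell$, $\bm{v}_{\ell} := \frac{1}{N}\sum_{i} y^{(\ell)}_{i}\bm{x}_{i}$, and $s_{\ell} := \frac{1}{N}\sum_{i}(y_{i}^{(\ell)})^{2}$. Then each layer performs a Horner-type step
\begin{equation*}
  \bm{v}_{\ell} = \bigl((1+\beta_{\ell}s_{\ell-1})\bm{I}_{d} + c_{\ell}\hat{\bm{S}}_{x}\bigr)\bm{v}_{\ell-1}, \qquad
  o_{\ell} = (1+\beta_{\ell}s_{\ell-1})\,o_{\ell-1} + c_{\ell}\,\bm{z}^{\top}\bm{v}_{\ell-1},
\end{equation*}
starting from $\bm{v}_{0} = \hat{\bm{m}}$ and $o_{0} = 0$. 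The scalars $s_{\ell-1}$ are smooth functions of the empirical moments and converge at rate $N^{-1/2}$ to deterministic limits $\bar{s}_{\ell-1}$. I would therefore choose $\beta_{\ell}, c_{\ell}$ so that, with $s_{\ell-1}$ replaced by $\bar{s}_{\ell-1}$, the recursion evaluates $q$ by Horner's rule; any unused layers are set to $c_{\ell} = \beta_{\ell} = 0$. The output is then $\bm{z}^{\top}\bm{U}\hat{\bm{w}}$, where $\hat{\bm{w}} = F(\hat{\bm{\mu}}, \hat{\bm{S}})$ for a fixed smooth map $F$ with $F(\bm{\mu},\bm{S}) = \bm{w}^{\star}$.

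\textbf{Step 3: the $1/N$ rate.} For a linear predictor with weight $\bm{U}\bm{w}$, the taskwise robust error is exactly $\Phi(-\rho(\bm{w}))$, independent of $\bm{U}$. Because $\rho$ is scale-invariant and $\bm{w}^{\star}$ is an interior maximizer, we have $\nabla\rho(\bm{w}^{\star}) = \bm{0}$, so
\begin{equation*}
  \Phi(-\rho(\hat{\bm{w}})) - \Phi(-\rho(\bm{w}^{\star})) \le C\,\|\hat{\bm{w}} - \bm{w}^{\star}\|_{2}^{2}
\end{equation*}
locally. Moreover, $\|\hat{\bm{w}} - \bm{w}^{\star}\|_{2} \lesssim_{\mathbb{P}} N^{-1/2}$ by Gaussian moment concentration (CLT/Chebyshev) and the local Lipschitz continuity of $F$. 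Combining these yields the stated $\lesssim_{\mathbb{P}} 1/N$ bound. On the low-probability event where $\hat{\bm{w}}$ is far from $\bm{w}^{\star}$, the excess is bounded by $1$, which is absorbed by the definition of $\lesssim_{\mathbb{P}}$.

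\textbf{Main obstacle.} I expect the hardest part to be Step 2. First, a plain product of factors $(1+c_{\ell}s)$ only yields polynomials with real roots, whereas $q$ may have complex roots; this is why I introduce the $\beta_{\ell}$ rescaling so that Horner's scheme can realize an arbitrary real $q$. Second, the resulting coefficients are random, so I must verify that Horner's rule is stable under $O(N^{-1/2})$ perturbations of these coefficients. A secondary but necessary check is Step 1's interiority and nondegeneracy of $\bm{w}^{\star}$, namely $\rho(\bm{w}^{\star}) > 0$ and $\bm{w}^{\star} \ne \bm{0}_{d}$, which should follow from \cref{asm:mean-covariance,asm:perturbation-radius}.
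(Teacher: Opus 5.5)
\textbf{The gap is in Step 2.} The family $\bm{P}_{\ell}=\bm{e}_{d+1}\bm{e}_{d+1}^{\top}$, $\bm{Q}_{\ell}=\operatorname{diag}(c_{\ell}\bm{I}_{d},\beta_{\ell})$ does not implement Horner's rule, and it cannot realize every polynomial of degree at most $L-1$. Because you freeze the feature rows, the vector accumulator is only ever multiplied by linear factors. In latent coordinates, with $\alpha_{\ell}:=1+\beta_{\ell}\bar{s}_{\ell-1}$, the population recursion gives $\bm{v}_{\ell}=\pi_{\ell}(\bm{S})\bm{\mu}$ with $\pi_{\ell}(s)=\prod_{k\leq\ell}(\alpha_{k}+c_{k}s)$. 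A Horner step would need to add a fresh multiple of $\bm{\mu}$ after each multiplication by $\bm{S}$, and nothing in this architecture does that. The query output is $W_{L}(\bm{S})\bm{\mu}$ with $W_{L}=\sum_{\ell=1}^{L}c_{\ell}\bigl(\prod_{k>\ell}\alpha_{k}\bigr)\pi_{\ell-1}$. Its coefficients are coupled to the same $c_{k}$ that build the real-rooted $\pi_{\ell-1}$.

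Already at $L=3$, write $a:=\alpha_{1}/c_{1}$ and $b:=\alpha_{2}/c_{2}$. Then $W_{3}(s)=K(s+a)(s+b)+E(s+a+b)$ with $K:=c_{1}c_{2}c_{3}$ and $E:=c_{1}c_{2}\alpha_{3}$; if $c_{1}c_{2}=0$, the degree is at most one. Eliminating $a+b$ and $ab$ under the constraint $(a+b)^{2}\geq4ab$ shows that $q_{2}s^{2}+q_{1}s+q_{0}$ with $q_{2}\neq0$ is reachable if and only if $q_{1}^{2}\geq3q_{0}q_{2}$. So the $\beta_{\ell}$ enlarge the reachable set beyond real-rooted polynomials, but not to all quadratics.

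\textbf{This failure lies inside the regime the theorem covers.} If $L=r=3$, the target is forced. Up to a nonzero factor, $q$ must be the unique quadratic interpolating $s\mapsto(s+\tau)^{-1}$ (your $\lambda$ is $\tau$) at the three active eigenvalues $s_{1},s_{2},s_{3}$ of $\bm{S}$, and this interpolant has exact degree two. A direct computation shows that $q_{1}^{2}-3q_{0}q_{2}$ is a positive multiple of $\sigma_{1}^{2}-3\sigma_{2}-\sigma_{1}\tau-2\tau^{2}$, where $\sigma_{1}:=\sum_{i}s_{i}$ and $\sigma_{2}:=\sum_{i<j}s_{i}s_{j}$. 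This quantity is negative for all large $\tau$, that is, whenever $\epsilon\sqrt{d}$ is close enough to $\|\bm{\mu}\|_{2}$. For instance, active eigenvalues $1,2,3$ and $\tau=10$ give $q\propto s^{2}-16s+171$, and $256<513$. In that case no choice of $(c_{\ell},\beta_{\ell})$ makes the population direction a positive multiple of $\bm{w}^{\star}$, so the excess robust error tends to a positive constant. No stability analysis of the random coefficients can repair a population-level representability failure.

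Your construction does work for $r\leq2$. Steps 1 and 3 are sound and essentially match the paper's argument: Gaussian moment rates, quadratic stability of the robust error at its minimizer, and Markov's inequality.

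\textbf{What is missing} is an exact interpolation mechanism on the active spectrum, which requires the early layers to transform the feature rows as well. The paper takes $\bm{P}_{\ell}=\kappa_{\ell}\bm{\Pi}_{x}+\eta_{\ell}\bm{\Pi}_{y}$ and $\bm{Q}_{\ell}=\bm{\Pi}_{x}$. The cumulative feature transformation is then $a_{\ell}(\bm{S})$ with $a_{\ell}(s)=a_{\ell-1}(s)\{1+\kappa_{\ell}s\,a_{\ell-1}(s)^{2}\}$. Choosing $\kappa_{\ell+1}=-1/\{s_{k}a_{\ell}(s_{k})^{2}\}$, where $k$ attains the smallest remaining value of $s_{i}a_{\ell}(s_{i})^{2}$, kills the feature response at $s_{k}$. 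A suitable $\eta_{\ell+1}$ then freezes the label response there at $c/(s_{k}+\tau)$. After $r-1$ such stages, a final layer $\bm{P}_{r}=\bm{\Pi}_{y}$, $\bm{Q}_{r}=\alpha\bm{\Pi}_{x}+\beta\bm{\Pi}_{y}$ fixes the last eigenvalue and the common scale. Sherman--Morrison then turns $C(\bm{S}+\tau\bm{I}_{d})^{-1}\bm{\mu}$ into a positive multiple of $(\bm{\Sigma}+\tau\bm{I}_{d})^{-1}\bm{\mu}$.
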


The proof is given in \cref{sec:appendix-robust-bayes-implementation}.
The theorem establishes that a single linear transformer with a shared parameter asymptotically attains the robust Bayes error for every task as the number of demonstrations grows.
In other words, without any task-specific parameter updates, the same linear transformer can adapt to every task through in-context learning and match the performance of the corresponding task-specific robust Bayes classifier.
This uniform guarantee demonstrates the expressive capacity of linear transformers for in-context adaptation across tasks.
Moreover, the excess error decays at a rate of $N^{-1}$.
The rate is consistent with empirical observations of strong few-shot performance in in-context learning.
Recall that $r$ reflects the complexity of the Gaussian mixture model and the resulting difficulty of representing its robust Bayes classifier.
Accordingly, the condition $L \geq r$ suggests that more complex data models call for deeper linear transformers to achieve Bayes optimality.

We next provide intuition about how a linear transformer can attain the Bayes error and why the depth condition $L \geq r$ arises.

\begin{restatable}[Proof sketch of \cref{thm:main-robust-bayes-representability}]{lemma}{robustbayesrealizationmechanism}\label{lem:main-robust-bayes-realization-mechanism}
  Suppose that \cref{asm:mean-covariance,asm:perturbation-radius} in \cref{asm:summary} hold.
  Suppose also that $N \to \infty$; write $\mathcal{D}_{\infty} := \mathcal{D}_{N}$.
  Let $\tau \geq 0$ be the unique solution of
  \begin{equation}
    \tau \|(\bm{\Sigma} + \tau \bm{I}_{d})^{-1} \bm{\mu}\|_{2} = \epsilon \sqrt{d}.
  \end{equation}

  \begin{enumerate}[label = {}, leftmargin = *]
    \item
      \textbf{(Bayes classifier)}
      For every $\bm{U} \in \mathcal{O}(d)$,
      \begin{equation}
        g(\bm{x})
        := \{ (\bm{\Sigma} + \tau \bm{I}_{d})^{-1} \bm{\mu} \}^{\top}
        \bm{U}^{\top} \bm{x}, \qquad
        \mathcal{E}_{\epsilon,\bm{U}}(g) = \mathcal{E}_{\epsilon,\bm{U}}^{\star}.
      \end{equation}

    \item
      \textbf{(Krylov order of the optimal direction)}
      Let $\mathcal{K}_{0}(\bm{\Sigma}, \bm{\mu}) := \{\bm{0}\}$.
      Then,
      \begin{equation}
        (\bm{\Sigma} + \tau \bm{I}_{d})^{-1} \bm{\mu}
        \in \mathcal{K}_{r}(\bm{\Sigma}, \bm{\mu})
        \setminus \mathcal{K}_{r-1}(\bm{\Sigma}, \bm{\mu}).
      \end{equation}

    \item
      \textbf{(Effective weights of a linear transformer)}
      There exists a parameter family $\{\bm{\Theta}_{L,\xi}: \xi \in \mathbb{R}\}$ whose corresponding effective weights $\bm{\theta}(\bm{\Theta}_{L,\xi})$ satisfy, for every $\xi \in \mathbb{R}$, $\bm{U} \in \mathcal{O}(d)$, and $\bm{x} \in \mathbb{R}^{d}$,
      \begin{equation}
        f(\bm{x}; \bm{\Theta}_{L,\xi}, \mathcal{D}_{\infty})
        =
        \bm{\theta}(\bm{\Theta}_{L,\xi})^{\top}
        \bm{U}^{\top} \bm{x}.
      \end{equation}

    \item
      \textbf{(Span of the effective weights)}
      For the same parameter family,
      \begin{equation}
        \operatorname{span}\ab\{
          \bm{\theta}(\bm{\Theta}_{L,\xi}): \xi \in \mathbb{R}
        \} = \mathcal{K}_{L}(\bm{\Sigma}, \bm{\mu}).
      \end{equation}
  \end{enumerate}
\end{restatable}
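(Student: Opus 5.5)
The plan is to prove the four items in the order stated. Throughout I work in the rotated coordinates $\bm{z} := \bm{U}^{\top}\bm{x} = y\bm{a}$. Since $\bm{U}$ is orthogonal, the $\ell_2$ ball $\{\|\bm{\Delta}\|_{2} \leq \epsilon\sqrt{d}\}$ is invariant under $\bm{U}^{\top}$, so it suffices to treat $\bm{U} = \bm{I}_{d}$. The data are then the symmetric mixture $\bm{z} \mid y \sim \mathcal{N}(y\bm{\mu}, \bm{\Sigma})$. Write $\epsilon' := \epsilon\sqrt{d}$.

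\textbf{Existence and uniqueness of $\tau$.} Let $h(\tau) := \tau\|(\bm{\Sigma}+\tau\bm{I}_{d})^{-1}\bm{\mu}\|_{2}$. In the eigenbasis of $\bm{\Sigma}$ we have $h(\tau)^{2} = \sum_{i} \tau^{2}\mu_i^{2}/(\lambda_i+\tau)^{2}$. Each summand is strictly increasing in $\tau$, so $h$ is continuous and strictly increasing, with $h(0)=0$ and $h(\tau) \to \|\bm{\mu}\|_{2}$ as $\tau\to\infty$. By \cref{asm:perturbation-radius} we have $\epsilon' < \|\bm{\mu}\|_{2}$, so a unique root $\tau \ge 0$ exists.

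\textbf{Bayes classifier.} Set $\bm{w} := (\bm{\Sigma}+\tau\bm{I}_{d})^{-1}\bm{\mu}$, so that $\bm{\Sigma}\bm{w} = \bm{\mu} - \tau\bm{w}$.

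For the upper bound, the worst-case margin of the linear rule $g(\bm{z}) = \bm{w}^{\top}\bm{z}$ is $y\bm{w}^{\top}\bm{z} - \epsilon'\|\bm{w}\|_{2}$. Hence
\[
\mathcal{E}_{\epsilon,\bm{I}}(g) = \Phi\Big(-\frac{\bm{w}^{\top}\bm{\mu} - \epsilon'\|\bm{w}\|_{2}}{\sqrt{\bm{w}^{\top}\bm{\Sigma}\bm{w}}}\Big).
\]
Since $\epsilon'\|\bm{w}\|_{2} = \tau\|\bm{w}\|_{2}^{2}$, the numerator equals $\bm{w}^{\top}(\bm{\mu}-\tau\bm{w}) = \bm{w}^{\top}\bm{\Sigma}\bm{w}$. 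The error is therefore $\Phi(-\sqrt{\bm{w}^{\top}\bm{\Sigma}\bm{w}})$.

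For the matching lower bound, fix any measurable $g$ and let the adversary use the fixed shift $\bm{\Delta} = -y\bm{\delta}$ with $\bm{\delta} := \tau\bm{w}$. This shift is admissible because $\|\bm{\delta}\|_{2} = \epsilon'$. The robust error of $g$ is then at least its standard error on the mixture $\mathcal{N}(\pm(\bm{\mu}-\bm{\delta}), \bm{\Sigma})$. That standard error is at least the Bayes error $\Phi(-\|\bm{\Sigma}^{-1/2}(\bm{\mu}-\bm{\delta})\|_{2})$. Because $\bm{\mu}-\bm{\delta} = \bm{\Sigma}\bm{w}$, this equals $\Phi(-\sqrt{\bm{w}^{\top}\bm{\Sigma}\bm{w}})$, which matches the upper bound. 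I expect this lower bound to be the main obstacle. The delicate points are choosing the adversarial shift so that it is exactly the saddle point, and checking that a deterministic shift is a legitimate instance of the $\inf$ in the definition of $\mathcal{E}_{\epsilon,\bm{U}}$.

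\textbf{Krylov order.} Decompose $\bm{\mu} = \sum_{i=1}^{r}\bm{m}_i$ into its nonzero components in the eigenspaces of $r$ distinct eigenvalues $\lambda_1,\dots,\lambda_r$. Then $\mathcal{K}_{k}(\bm{\Sigma},\bm{\mu}) = \{p(\bm{\Sigma})\bm{\mu} : \deg p \le k-1\}$, and $\bm{w} = \sum_i \bm{m}_i/(\lambda_i+\tau)$. This already shows $\bm{w} \in \mathcal{K}_r$.

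Suppose instead that $\bm{w} \in \mathcal{K}_{r-1}$. Then some $p$ with $\deg p \le r-2$ satisfies $p(\lambda_i) = 1/(\lambda_i+\tau)$ for all $i$. The polynomial $(x+\tau)p(x) - 1$ has degree at most $r-1$ and vanishes at $r$ distinct points, so it is identically zero. Evaluating at $x=-\tau$ gives $-1 = 0$, a contradiction. The case $r=1$ is immediate, since $\bm{w}\ne\bm{0}$.

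\textbf{Effective weights and their span.} I would use the preconditioned-gradient-descent construction. Each layer has $\bm{P}_\ell = -\xi\,\bm{e}_{d+1}\bm{e}_{d+1}^{\top}$ and $\bm{Q}_\ell = \operatorname{diag}(\bm{I}_{d},0)$. With these choices the $\bm{x}$-rows never change, and layer $\ell$ updates every label-row entry by $y_i \leftarrow y_i - \xi\,\frac{1}{N}\sum_n y_n\bm{x}_n^{\top}\bm{x}_i$.

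By induction, the query entry after $L$ layers equals $-\bm{v}_L^{\top}\bm{z}$, where $\bm{v}_L$ is the $L$-th iterate of gradient descent with step $\xi$ on the least-squares problem, started from zero. This gives
\[
\bm{v}_L = \frac{1}{\xi}\,p_\xi(\bm{S})\,\bm{b}, \qquad p_\xi(x) := \frac{1-(1-\xi x)^{L}}{x},
\]
with $\bm{S} = \frac{1}{N}\sum_n\bm{x}_n\bm{x}_n^{\top}$ and $\bm{b} = \frac{1}{N}\sum_n y_n\bm{x}_n$; the factor $1/\xi$ can be absorbed by rescaling.

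As $N\to\infty$, the law of large numbers gives $\bm{S}\to\bm{U}\bm{C}\bm{U}^{\top}$ with $\bm{C} := \bm{\Sigma}+\bm{\mu}\bm{\mu}^{\top}$, and $\bm{b}\to\bm{U}\bm{\mu}$. The predictor is therefore $\bm{\theta}^{\top}\bm{U}^{\top}\bm{z}$ with $\bm{\theta} = -p_\xi(\bm{C})\bm{\mu}$, which is item 3.

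For item 4, expand $p_\xi(x) = \sum_{k=0}^{L-1}(-1)^{k}\binom{L}{k+1}\xi^{k+1}x^{k}$. The coefficient functions $\xi^{1},\dots,\xi^{L}$ are linearly independent as functions of $\xi$. Hence the span of $\{p_\xi : \xi\in\mathbb{R}\}$ is the whole space of polynomials of degree at most $L-1$, and the span of the effective weights is $\mathcal{K}_L(\bm{C},\bm{\mu})$.

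It remains to show $\mathcal{K}_L(\bm{C},\bm{\mu}) = \mathcal{K}_L(\bm{\Sigma},\bm{\mu})$. By induction, $\bm{C}^{k}\bm{\mu} - \bm{\Sigma}^{k}\bm{\mu} \in \mathcal{K}_{k}(\bm{\Sigma},\bm{\mu})$, because $\bm{\mu}\bm{\mu}^{\top}$ only adds multiples of $\bm{\mu}$. The change of basis between the two Krylov sequences is therefore unitriangular, and the two subspaces coincide.
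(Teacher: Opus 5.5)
Your proposal is correct and follows essentially the paper's route: the label-dependent shift $-y\tau\bm{h}_{\tau}$ reduces (i) to a Gaussian Bayes bound, (ii) is the interpolation/degree argument, and (iii)--(iv) use the same uniform-layer construction (the paper's $\bm{P}_{\ell}=\bm{\Pi}_{y}$, $\bm{Q}_{\ell}=t\bm{\Pi}_{x}$ is yours with $t=-\xi$, giving $\sum_{j=1}^{L}\binom{L}{j}t^{j}s^{j-1}=((1+ts)^{L}-1)/s=-p_{\xi}(s)$), followed by a spanning argument and $\mathcal{K}_{L}(\bm{C},\bm{\mu})=\mathcal{K}_{L}(\bm{\Sigma},\bm{\mu})$. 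One cosmetic slip: the $L$-th gradient-descent iterate is $\bm{v}_{L}=p_{\xi}(\bm{S})\bm{b}$ with no $1/\xi$ factor, as your final formula $\bm{\theta}=-p_{\xi}(\bm{C})\bm{\mu}$ already reflects; fix this because item (iii) must also hold at $\xi=0$.
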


The proof is given in \cref{sec:appendix-robust-bayes-implementation}.
Although this lemma does not constitute a complete proof of \cref{thm:main-robust-bayes-representability}, it provides a roadmap for the proof.

\textbf{Bayes classifier.}
The first part of the lemma gives the Bayes classifier for $(\bm{x}, y)$ generated according to \cref{eq:task-structure}.
The form of this function provides two insights into this classifier.\footnote{
  Closely related characterizations of the Bayes classifier were previously established by \citet{bhagoji2019lower,dan2020sharp}, and we therefore do not claim this characterization as a novel contribution.
  Here, it serves only as an intermediate result in the proof of \cref{thm:main-robust-bayes-representability}.
}

First, the optimal weight is the solution to the following regularized mean--variance problem:
\begin{equation}
  \operatorname*{argmin}_{\bm{w} \in \mathbb{R}^{d}}\ab\{
    \frac{1}{2} \mathbb{V}_{(\bm{x}, y) \mid \bm{U}}[y \bm{w}^{\top} \bm{x}]
    - \mathbb{E}_{(\bm{x}, y) \mid \bm{U}}[y \bm{w}^{\top} \bm{x}]
    + \frac{\tau}{2} \|\bm{w}\|_{2}^{2}
  \}
  = \bm{U}(\bm{\Sigma} + \tau \bm{I}_{d})^{-1} \bm{\mu}.
  \label{eq:main-robust-direction-mean-variance}
\end{equation}
The regularization parameter $\tau$ is determined by the perturbation level $\epsilon$: it is zero when $\epsilon = 0$ and increases monotonically with $\epsilon$.
This $\ell_{2}$ regularization controls the sensitivity to adversarial perturbations, since $|\bm{w}^{\top} \bm{\Delta}| \leq \|\bm{w}\|_{2} \|\bm{\Delta}\|_{2}$.
The taskwise Bayes classifier minimizing $\mathcal{E}_{\epsilon,\bm{U}}$ is a linear classifier obtained from a perturbation-dependent $\ell_{2}$-regularized optimization problem.

Second, the classifier makes predictions while accounting for both variance and the effects of adversarial attacks.
Substituting $\bm{x} = y \bm{U} \bm{a}$ gives
\begin{equation}
  g(\bm{x})
  := \ab\{
    (\bm{\Sigma} + \tau \bm{I}_d)^{-1} \bm{\mu}
  \}^{\top} \bm{U}^{\top} \bm{x}
  = y \ab\{
    (\bm{\Sigma} + \tau \bm{I}_d)^{-1} \bm{\mu}
  \}^{\top} \bm{a}.
\end{equation}
For simplicity, suppose that $\bm{\Sigma} = \operatorname{diag}(\lambda_1, \ldots, \lambda_d)$ for $\lambda_1, \ldots, \lambda_d > 0$.
Then the $i$-th term in the inner product, for $i \in [d]$, can be expressed as $\mu_i a_i / (\lambda_i + \tau)$.
When the perturbation level is small and hence $\tau$ is small, this term can be approximated by $\mu_i a_i / \lambda_i$.
Thus, components with large variance have less influence on the prediction, whereas components with small variance have greater influence.
In contrast, when $\tau$ is large, $g(\bm{x}) \approx y \bm{\mu}^{\top} \bm{a} / \tau$.
The classifier relies more directly on strong features in the data without accounting for their variances.

\textbf{Krylov order of the optimal direction.}
The second part of the lemma shows that the optimal direction belongs to the order-$r$ Krylov subspace generated by $\bm{\Sigma}$ and $\bm{\mu}$.
This vector cannot belong to the order-$(r-1)$ Krylov subspace.
The proof uses the Cayley--Hamilton theorem.
This result suggests that, to attain the taskwise Bayes error for arbitrary $\bm{\Sigma}$ and $\bm{\mu}$, a linear transformer must be expressive enough to internally represent weight vectors in the order-$r$ Krylov subspace.

\textbf{Effective weights of a linear transformer.}
The third part of the lemma shows that a suitable parameter family makes a linear transformer behave as a context-dependent linear classifier.
We restrict $\bm{P}_{\ell,\xi}$ and $\bm{Q}_{\ell,\xi}$ for $\ell \in [L]$ as follows, and, under $N \to \infty$, the context matrix is as follows:
\begin{equation}
  \bm{P}_{\ell,\xi} :=
  \begin{bmatrix}
    \bm{0}_{d,d} & \bm{0}_{d} \\
    \bm{0}_{d}^{\top} & 1
  \end{bmatrix}, ~
  \bm{Q}_{\ell,\xi} :=
  \begin{bmatrix}
    \xi \bm{I}_{d} & \bm{0}_{d} \\
    \bm{0}_{d}^{\top} & 0
  \end{bmatrix}, ~
  \frac{\bm{Z}^{(0)} \bm{M} \bm{Z}^{(0)\top}}{N}
  =
  \begin{bmatrix}
    \bm{U} (\bm{\Sigma} + \bm{\mu} \bm{\mu}^{\top}) \bm{U}^{\top}
    & \bm{U} \bm{\mu} \\
    \bm{\mu}^{\top} \bm{U}^{\top} & 1
  \end{bmatrix}.
  \label{eq:main-effective-weight-parameter-family}
\end{equation}
The outputs after the first, second, and third layers are
\begin{align}
  \bm{Z}^{(1)}
  &=
  \ab\{
    \bm{I}_{d+1} + \frac{1}{N} \bm{P}_{1,\xi}
    \bm{Z}^{(0)} \bm{M} \bm{Z}^{(0)\top}
    \bm{Q}_{1,\xi}
  \} \bm{Z}^{(0)}
  =
  \begin{bmatrix}
    \bm{I}_{d} & \bm{0}_{d} \\
    \xi \bm{\mu}^{\top} \bm{U}^{\top} & 1
  \end{bmatrix}
  \bm{Z}^{(0)}, \\
  \bm{Z}^{(2)}
  &=
  \begin{bmatrix}
    \bm{I}_{d} & \bm{0}_{d} \\
    \{ 2 \xi \bm{\mu}^{\top}
      + \xi^{2} \bm{\mu}^{\top}(
        \bm{\Sigma} + \bm{\mu} \bm{\mu}^{\top}
      )
    \} \bm{U}^{\top} & 1
  \end{bmatrix}
  \bm{Z}^{(0)}, \\
  \bm{Z}^{(3)}
  &=
  \begin{bmatrix}
    \bm{I}_{d} & \bm{0}_{d} \\
    \{
      3 \xi \bm{\mu}^{\top}
      + 3 \xi^{2} \bm{\mu}^{\top}(
        \bm{\Sigma} + \bm{\mu} \bm{\mu}^{\top}
      )
      + \xi^{3} \bm{\mu}^{\top}{
        (\bm{\Sigma} + \bm{\mu} \bm{\mu}^{\top})
      }^{2}
    \} \bm{U}^{\top} & 1
  \end{bmatrix}
  \bm{Z}^{(0)}.
  \label{eq:main-first-three-layer-outputs}
\end{align}
Consequently, the three-layer model's output on a query $\bm{x}$ and its effective weight are
\begin{equation}
  f(\bm{x}; \bm{\Theta}_{3,\xi}, \mathcal{D}_{\infty})
  = \bm{\theta}(\bm{\Theta}_{3,\xi})^{\top} \bm{U}^{\top}\bm{x}, ~
  \bm{\theta}(\bm{\Theta}_{3,\xi})
  := 3 \xi \bm{\mu}
  + 3 \xi^{2}(\bm{\Sigma} + \bm{\mu} \bm{\mu}^{\top}) \bm{\mu}
  + \xi^{3}{
    (\bm{\Sigma} + \bm{\mu} \bm{\mu}^{\top})
  }^{2} \bm{\mu}.
  \label{eq:main-three-layer-effective-weight}
\end{equation}
The same construction shows inductively that, with suitable parameters, a linear transformer of any depth can act as a context-dependent linear classifier whose effective weight $\bm{\theta}$ is independent of $\bm{U}$.

\textbf{Span of the effective weights.}
The fourth part of the lemma states that, as $\xi$ varies, the effective weights associated with this parameter family span the order-$L$ Krylov subspace.
The equations above illustrate this mechanism: stacking $L$ layers yields effective weights that are linear combinations of $\bm{\mu}, \bm{\Sigma}\bm{\mu}, \ldots, \bm{\Sigma}^{L-1}\bm{\mu}$.
As shown in the second part, the optimal direction lies in the order-$r$ Krylov subspace.
Hence, when $L \geq r$, the optimal direction lies in the span of the effective weights.
These results explain how linear transformers achieve Bayes optimality and why the depth condition is required.
Note that, even when $L \geq r$, this lemma alone does not establish that an effective weight $\bm{\theta}$ can exactly match the optimal direction $(\bm{\Sigma} + \tau \bm{I}_{d})^{-1} \bm{\mu}$.
See \cref{sec:appendix-robust-bayes-implementation} for the complete proof.

\subsection{Adversarial Training Enables Linear Transformers to Achieve Robust Bayes Optimality Across Tasks; Standard Training Does Not}

The preceding section establishes that sufficiently deep linear transformers have the expressive capacity to asymptotically attain robust Bayes optimality across tasks.
However, this existence result does not show whether such parameters can be learned by a training procedure.
We address this question by analyzing empirical risk minimization under adversarial and standard training.

\begin{restatable}[Adversarial training yields robust linear transformers]{theorem}{adversarialtrainingbayesoptimality}\label{thm:main-adversarial-training-bayes-optimality}
  Suppose that \cref{asm:summary} holds.
  For sufficiently large $B > 0$, choose any empirical risk minimizer
  \begin{equation}
    \widehat{\bm{\Theta}}_{L, \epsilon, T, N} \in
    \operatorname*{argmin}_{
      \max_{\ell \in [L]}\{
        \|\bm{P}_{\ell}\|_{\mathrm{F}},
        \|\bm{Q}_{\ell}\|_{\mathrm{F}}
      \} \leq B
    } \mathcal{R}_{\epsilon, T, N}(\bm{\Theta}_{L}).
    \label{eq:main-adversarial-erm}
  \end{equation}
  Then, for every $\bm{U} \in \mathcal{O}(d)$,
  \begin{equation}
    \mathcal{E}_{\epsilon, \bm{U}}
    \bigg(
      f\Big(
        \,\cdot\,; \widehat{\bm{\Theta}}_{L, \epsilon, T, N}, \mathcal{D}_{N}
      \Big)
    \bigg)
    - \mathcal{E}_{\epsilon, \bm{U}}^{\star}
    \lesssim_{\mathbb{P}} \frac{1}{\sqrt{T}} + \frac{1}{N}.
    \label{eq:main-adversarial-training-probability-rate}
  \end{equation}
  The implied constant may depend on $d$, $B$, $L$, $\mathcal{L}$, $\epsilon$, $\bm{\mu}$, and $\bm{\Sigma}$.
\end{restatable}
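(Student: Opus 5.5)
We argue in three stages: a population-level characterization of the adversarial risk minimizer, a uniform concentration bound for the empirical risk, and a transfer from risk excess to zero-one excess.

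\textbf{Stage 1: population risk.}
Define the population adversarial risk $\mathcal{R}_{\epsilon,N}(\bm{\Theta}_{L}) := \mathbb{E}[\mathcal{R}_{\epsilon,T,N}(\bm{\Theta}_{L})]$ and its $N\to\infty$ limit $\mathcal{R}_{\epsilon,\infty}$. The plan has four steps.

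First, as $N\to\infty$ the empirical context matrix $\frac{1}{N}\bm{Z}^{(0)}\bm{M}\bm{Z}^{(0)\top}$ converges to the deterministic form in \cref{eq:main-effective-weight-parameter-family}. I will show that, for arbitrary $\bm{\Theta}_{L}$ in the $B$-ball, the limiting predictor has the form $f(\bm{z};\bm{\Theta}_{L},\mathcal{D}_\infty)=\bm{w}(\bm{\Theta}_{L},\bm{U})^{\top}\bm{z}+c$. This holds because every layer acts linearly on the query column once the context statistics are frozen. In general, $\bm{w}$ is a polynomial in $\bm{U}(\bm{\Sigma}+\bm{\mu}\bm{\mu}^\top)\bm{U}^\top$, $\bm{U}\bm{\mu}$, and the parameter blocks.

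Second, I will use the Haar invariance of $\bm{U}$ together with strict convexity of $\mathcal{L}$ to restrict attention to predictors that are $\mathcal{O}(d)$-equivariant. Averaging the parameters over conjugation by $\operatorname{diag}(\bm{R},1)$ does not increase the risk. Equivariant predictors are exactly those with $\bm{w}=\bm{U}\bm{\theta}$, where $\bm{\theta}$ lies in the set of effective weights reachable in $\mathcal{K}_L(\bm{\Sigma},\bm{\mu})$. This is the step I am least sure of, since the map from parameters to effective weights is polynomial rather than linear. The fallback is to compare directly with the minimum over all linear weights.

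Third, for a linear predictor the inner maximum is explicit: the adversary reduces the margin by $\epsilon\sqrt d\|\bm{w}\|_2$. The risk therefore becomes $\mathbb{E}\,\mathcal{L}(\bm{\theta}^\top\bm{a}-\epsilon\sqrt d\|\bm{\theta}\|_2)$ with $\bm{a}\sim\mathcal{N}(\bm{\mu},\bm{\Sigma})$. Its minimizer over directions should align with $(\bm{\Sigma}+\tau\bm{I}_d)^{-1}\bm{\mu}$. The argument is that the margin $m(\bm{\theta})$ is Gaussian with mean $\bm{\theta}^\top\bm{\mu}-\epsilon\sqrt d\|\bm{\theta}\|_2$ and variance $\bm{\theta}^\top\bm{\Sigma}\bm{\theta}$. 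Since $\mathcal{L}$ is decreasing and convex, for a fixed scale the loss is monotone in the signal-to-noise ratio $\frac{\bm{\theta}^\top\bm{\mu}-\epsilon\sqrt d\|\bm{\theta}\|_2}{\sqrt{\bm{\theta}^\top\bm{\Sigma}\bm{\theta}}}$. That ratio is maximized by the direction $(\bm{\Sigma}+\tau\bm{I}_d)^{-1}\bm{\mu}$, which is the first part of \cref{lem:main-robust-bayes-realization-mechanism}.

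Fourth, by the second and fourth parts of that lemma and $L\ge r$, this direction is reachable. Choosing $B$ large enough makes the optimal scale feasible; the bounded $\mathcal{L}''$ and strict convexity then give a finite optimal scale. Strict convexity also pins the population minimizer's effective weight to the optimal direction up to a positive multiple, with a quadratic growth condition around it.

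\textbf{Stage 2: uniform concentration.}
The per-task loss is a bounded-Lipschitz function of polynomial features of Gaussian data, and the parameters range over a compact set. I will bound $\sup_{\bm{\Theta}}|\mathcal{R}_{\epsilon,T,N}-\mathcal{R}_{\epsilon,N}|\lesssim_{\mathbb{P}}T^{-1/2}$ by a covering argument using the Lipschitz dependence of $f$ on $\bm{\Theta}$, with moment bounds in place of boundedness. Separately, I will bound $\sup_{\bm{\Theta}}|\mathcal{R}_{\epsilon,N}-\mathcal{R}_{\epsilon,\infty}|\lesssim N^{-1}$ by a second-order expansion: fluctuations of the context statistics are $O(N^{-1/2})$ with zero mean, so the first-order term vanishes in expectation.

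\textbf{Stage 3: transfer to zero-one error.}
Combining Stages 1 and 2 with the quadratic growth condition, the effective direction of $\widehat{\bm{\Theta}}$ is within $O(T^{-1/2}+N^{-1})$ (in squared distance, or in risk) of the optimal direction. For a fixed $\bm{U}$ and the test context $\mathcal{D}_N$, the realized predictor is linear in the query, with weight $\bm{U}\widehat{\bm{\theta}}+O_{\mathbb{P}}(N^{-1/2})$. The robust zero-one error of a linear rule is $\Phi(-\mathrm{SNR})$, which is smooth in the direction and has a stationary point at the optimum. Its excess is therefore quadratic in the direction error, giving $N^{-1}$ from context noise and a bound in terms of the risk gap, which is of order $T^{-1/2}$.

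\textbf{Main obstacle.}
The hardest step is Stage 1's reduction showing that the population minimizer over all unrestricted $\bm{\Theta}_L$ (not only the family $\bm{\Theta}_{L,\xi}$) still yields an effective direction aligned with $(\bm{\Sigma}+\tau\bm{I}_d)^{-1}\bm{\mu}$. I would first try to prove that any admissible predictor's population risk is at least that of the best linear rule on $\bm{U}^\top\bm{z}$. Since every limiting predictor is linear in the query, this reduces the claim to the SNR argument in Stage 1.
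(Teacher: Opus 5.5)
There are genuine gaps: one in Stage~1 (reachability) and two in Stages~2--3.

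\textbf{Reachability.} Your reachability step at the end of Stage~1 does not go through. The fourth part of \cref{lem:main-robust-bayes-realization-mechanism} only says that the \emph{span} of the effective weights of the one-parameter family $\bm{\Theta}_{L,\xi}$ is $\mathcal{K}_{L}(\bm{\Sigma},\bm{\mu})$. The set of realizable effective weights is the image of a polynomial map, not a subspace. So $(\bm{\Sigma}+\tau\bm{I}_{d})^{-1}\bm{\mu}\in\mathcal{K}_{L}(\bm{\Sigma},\bm{\mu})$ does not make this direction realizable; the paper flags exactly this right after the lemma. Enlarging $B$ also only helps once some tuple realizes the exact vector $\bm{h}_{\epsilon}^{\star}=\omega_{\epsilon}\bm{h}_{\tau}$ (not merely its direction), since $\mathcal{J}_{\epsilon}$ is not scale invariant. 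Without a tuple $\bm{\Theta}_{L,\star}$ whose population-context latent direction equals $\bm{h}_{\epsilon}^{\star}$ for every $\bm{U}$, you can only compare $\widehat{\bm{\Theta}}_{L,\epsilon,T,N}$ with the best risk in the class. That risk need not equal $\mathcal{J}_{\epsilon}(\bm{h}_{\epsilon}^{\star})$, and the link to $\mathcal{E}_{\epsilon,\bm{U}}^{\star}$ is lost.

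The paper builds this comparator in \cref{lem:sufficient-depth-comparator}:
\begin{itemize}
\item The first $r-1$ layers, $\bm{P}_{\ell}=\kappa_{\ell}\bm{\Pi}_{x}+\eta_{\ell}\bm{\Pi}_{y}$, $\bm{Q}_{\ell}=\bm{\Pi}_{x}$, switch off the feature block at one active eigenvalue of $\bm{S}$ per layer, freezing the response there at $c/(s+\tau)$.
\item A final layer rescales these frozen values and fits the last eigenvalue, so the response is a common multiple of $1/(s+\tau)$ on the whole active spectrum.
\item Sherman--Morrison turns $(\bm{S}+\tau\bm{I}_{d})^{-1}\bm{\mu}$ into $\omega(\bm{\Sigma}+\tau\bm{I}_{d})^{-1}\bm{\mu}$ for any prescribed $\omega>0$.
\end{itemize}
Two things in Stage~1 are fine. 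Your SNR argument for the minimizer of $\mathcal{J}_{\epsilon}$ works if ``fixed scale'' means fixed margin standard deviation; it is an elementary alternative to the paper's first-order condition with Gaussian integration by parts. And although averaging parameters over conjugation fails for the same polynomial reason, your fallback of bounding each conditional risk below by $\min_{\bm{h}}\mathcal{J}_{\epsilon}(\bm{h})$ is exactly what the paper uses.

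\textbf{Uniform $N^{-1}$ bound.} The bound $\sup_{\bm{\Theta}}|\mathcal{R}_{\epsilon,N}-\mathcal{R}_{\epsilon,\infty}|\lesssim N^{-1}$ is false for $\epsilon>0$, because the penalty $\epsilon\sqrt{d}\|\bm{\theta}\|_{2}$ is not differentiable at $\bm{0}_{d}$. Take $r=1$ with active eigenvalue $s_{1}$ of $\bm{S}$, and the tuple $L=2$, $\bm{P}_{1}=\bm{P}_{2}=\bm{\Pi}_{y}$, $\bm{Q}_{1}=\bm{\Pi}_{x}$, $\bm{Q}_{2}=-(1+s_{1})^{-1}\bm{\Pi}_{x}$. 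Its latent direction is $(1+s_{1})^{-1}(s_{1}\bm{I}_{d}-\widehat{\bm{S}}_{N})\overline{\bm{a}}_{N}$, with $\overline{\bm{a}}_{N}$ and $\widehat{\bm{S}}_{N}$ the empirical first and second moments. It vanishes at the population context but has norm of order $N^{-1/2}$, so the gap is of order $N^{-1/2}$. Near such parameters the remainder in your second-order expansion is not uniformly bounded.

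What your chain actually needs is weaker:
\begin{itemize}
\item $\mathcal{R}_{\epsilon,\infty}(\widehat{\bm{\Theta}})\leq\mathcal{R}_{\epsilon,N}(\widehat{\bm{\Theta}})+O(N^{-1})$. This follows from Jensen, since $\mathcal{J}_{\epsilon}$ is convex and Lipschitz and the direction is a polynomial in moment statistics with $O(N^{-1})$ bias.
\item An $O(N^{-1})$ bound at the fixed comparator.
\end{itemize}
The paper bypasses the $N=\infty$ risk altogether. Because the query is independent of the context, the conditional risk given any realized context is $\mathcal{J}_{\epsilon}$ of the realized latent direction, hence at least $\mathcal{J}_{\epsilon}(\bm{h}_{\epsilon}^{\star})$. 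So the finite-$N$ excess risk of $\widehat{\bm{\Theta}}$ is at most $2\Delta_{T,N}+O(N^{-1})$, where $\Delta_{T,N}$ is your Stage~2 deviation. Via truncated quadratic growth, this directly controls the realized finite-context directions.

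\textbf{Fixed $\bm{U}$ versus Haar average.} This is the more serious problem. The excess risk is a Haar average over $\bm{U}$, so it controls only the Haar average of the direction error, whereas the theorem is claimed for every fixed $\bm{U}$. An arbitrary finite-task ERM is not rotation equivariant. So there is no single $\widehat{\bm{\theta}}$ with realized weight $\bm{U}\widehat{\bm{\theta}}+O_{\mathbb{P}}(N^{-1/2})$, as your Stage~3 assumes. The missing idea is this: for fixed parameters and latent context, $\bm{U}\mapsto\bm{U}^{\top}\bm{\theta}_{\bm{\Theta}_{L}}(\bm{G}_{N}(\bm{U}))-\bm{h}_{\epsilon}^{\star}$ is a polynomial of bounded degree on $\mathcal{O}(d)$. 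On that finite-dimensional space the sup norm and the Haar $L^{2}$ norm are equivalent. This gives $\sup_{\bm{U}}\min\{\|\cdot\|_{2}^{2},1\}\leq C_{d,L}\,\mathbb{E}_{\bm{U}}\min\{\|\cdot\|_{2}^{2},1\}$, which turns the averaged bound into a worst-task bound at the same rate.
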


The proof is given in \cref{sec:appendix-adversarial-training}.
This theorem complements the representability result in \cref{thm:main-robust-bayes-representability} by showing that adversarial training can produce a transformer that asymptotically achieves robust Bayes optimality across tasks.
For $d \geq 2$, a test-task basis $\bm{U}$ differs almost surely from every pretraining basis under the Haar measure on $\mathcal{O}(d)$, even when there are countably infinitely many pretraining tasks.
Thus, the theorem establishes robust generalization to previously unseen tasks, which cannot be attributed to exact memorization of the pretraining bases.
Moreover, the excess-error bound decomposes into a term depending on $T$ and a term depending on $N$, rather than taking a joint form such as $1/(TN)$.
This decomposition isolates the contributions of cross-task learning from pretraining and within-task inference from in-context demonstrations.
The bound converges to zero only as both the number of pretraining tasks and the number of in-context demonstrations grow; increasing either one alone might be insufficient for the convergence to Bayes optimality.

We next contrast this result with standard training.

\begin{restatable}[Standard training does not yield robust linear transformers]{theorem}{standardtrainingbayesoptimality}\label{thm:main-standard-training-bayes-optimality}
  Suppose that \cref{asm:summary} holds.
  Suppose also that $T, N \to \infty$; write $\mathcal{D}_{\infty} := \mathcal{D}_{N}$ and $\mathcal{R}_{0, \infty, \infty} := \mathcal{R}_{0, T, N}$.
  For sufficiently large $B > 0$, choose any population risk minimizer
  \begin{equation}
    \widehat{\bm{\Theta}}_{L, 0, \infty, \infty} \in
    \operatorname*{argmin}_{
      \max_{\ell \in [L]}\{
        \|\bm{P}_{\ell}\|_{\mathrm{F}},
        \|\bm{Q}_{\ell}\|_{\mathrm{F}}
      \} \leq B
    } \mathcal{R}_{0, \infty, \infty}(\bm{\Theta}_{L}).
    \label{eq:main-standard-erm}
  \end{equation}
  Then, there exists a constant $C > 0$ such that, for every $\bm{U} \in \mathcal{O}(d)$,
  \begin{equation}
    \mathcal{E}_{\epsilon, \bm{U}}
    \bigg(
      f\Big(
        \,\cdot\,; \widehat{\bm{\Theta}}_{L, 0, \infty, \infty}, \mathcal{D}_{\infty}
      \Big)
    \bigg)
    - \mathcal{E}_{\epsilon, \bm{U}}^{\star}
    =
    \begin{cases}
      0 & (\epsilon = 0 ~ \mathrm{or} ~ r = 1) \\
      C & (\epsilon > 0 ~ \mathrm{and} ~ r \geq 2)
    \end{cases}.
    \label{eq:main-standard-training-probability-rate}
  \end{equation}
  The constant $C$ may depend on $d$, $\epsilon$, $\bm{\mu}$, and $\bm{\Sigma}$.
\end{restatable}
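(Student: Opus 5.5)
We need to identify the population standard-risk minimizer and show that its effective weight is the standard Bayes direction $\bm{\Sigma}^{-1}\bm{\mu}$, not the robust direction $(\bm{\Sigma}+\tau\bm{I}_d)^{-1}\bm{\mu}$. The excess robust error is then a positive constant unless these directions coincide.

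\textbf{Step 1: reduce the predictor to a linear form.} As $N\to\infty$, the context matrix $\bm{Z}^{(0)}\bm{M}\bm{Z}^{(0)\top}/N$ converges to the deterministic limit in \cref{eq:main-effective-weight-parameter-family}. Every layer of \cref{eq:multilayer-linear-transformer} then acts on the query column as a fixed linear map, depending on $\bm{U}$ only through conjugation. So for arbitrary (not only the restricted) parameters we can write
\[
f(\bm{z};\bm{\Theta}_L,\mathcal{D}_\infty)=\bm{w}(\bm{\Theta}_L,\bm{U})^{\top}\bm{z}.
\]
The query's label slot is zero, and the bottom-right output entry is linear in the query column. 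The map $\bm{U}\mapsto\bm{w}$ need not be $\bm{U}$-equivariant for general parameters.

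\textbf{Step 2: characterize the population minimizer.} By Haar invariance, the population risk equals
\[
\mathbb{E}_{\bm{U}}\,\mathbb{E}_{\bm{a}}\bigl[\mathcal{L}(\bm{w}(\bm{\Theta},\bm{U})^{\top}\bm{U}\bm{a})\bigr]
=\mathbb{E}_{\bm{U}}\,\phi\bigl(\bm{U}^{\top}\bm{w}(\bm{\Theta},\bm{U})\bigr),
\qquad
\phi(\bm{v}):=\mathbb{E}\,\mathcal{L}(\bm{v}^{\top}\bm{a}),\ \bm{a}\sim N(\bm{\mu},\bm{\Sigma}).
\]
Since $\bm{v}^{\top}\bm{a}\sim N(\bm{v}^{\top}\bm{\mu},\bm{v}^{\top}\bm{\Sigma}\bm{v})$, the function $\phi$ is strictly convex by \cref{asm:loss}, and it is coercive along directions with $\bm{v}^{\top}\bm{\mu}\le 0$. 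Its unconstrained minimizer $\bm{v}^\star$ satisfies the stationarity condition
\[
\mathbb{E}[\mathcal{L}'(s)]\,\bm{\mu}+\mathbb{E}[\mathcal{L}''(s)]\,\bm{\Sigma}\bm{v}^\star=0,
\]
obtained via Stein's lemma. Hence $\bm{v}^\star=c\,\bm{\Sigma}^{-1}\bm{\mu}$ with $c>0$.

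Every parameter satisfies $\text{risk}\ge\phi(\bm{v}^\star)$. By Lemma \ref{lem:main-robust-bayes-realization-mechanism} together with $L\ge r$, the vector $\bm{\Sigma}^{-1}\bm{\mu}\in\mathcal{K}_r$ is realizable with a $\bm{U}$-independent effective weight; I will use the same multi-$\xi$ construction as in the proof of \cref{thm:main-robust-bayes-representability}, now with $\tau=0$. Therefore, for $B$ large enough, the minimum equals $\phi(\bm{v}^\star)$. By strict convexity and continuity in $\bm{U}$, any minimizer must have $\bm{U}^{\top}\bm{w}=\bm{v}^\star$ for every $\bm{U}$, not just almost every $\bm{U}$.

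\textbf{Step 3: compare errors.} The robust error of the linear rule $\bm{v}^{\top}\bm{U}^{\top}\bm{x}$ is
\[
\Phi\!\left(-\frac{\bm{v}^{\top}\bm{\mu}-\epsilon\sqrt d\,\|\bm{v}\|_2}{\sqrt{\bm{v}^{\top}\bm{\Sigma}\bm{v}}}\right),
\]
which is independent of $\bm{U}$. By Lemma \ref{lem:main-robust-bayes-realization-mechanism}, its unique minimizing direction up to positive scaling is $(\bm{\Sigma}+\tau\bm{I}_d)^{-1}\bm{\mu}$. I would verify uniqueness from the first-order condition of this quotient, which reproduces the $\tau$-equation.

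If $\epsilon=0$, then $\tau=0$ and the directions agree, so the excess is $0$. If $r=1$, then $\bm{\mu}$ is an eigenvector of $\bm{\Sigma}$, so both directions are parallel to $\bm{\mu}$ and the excess is again $0$.

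If $\epsilon>0$ and $r\ge2$, then $\tau>0$, since $\bm{\mu}\ne 0$. The vectors $\bm{\Sigma}^{-1}\bm{\mu}$ and $(\bm{\Sigma}+\tau\bm{I}_d)^{-1}\bm{\mu}$ are then not parallel: parallelism would force $\bm{\mu}$ to lie in a single eigenspace, contradicting $r\ge2$. By uniqueness of the optimal direction, the excess $C$ is strictly positive, and it is independent of $\bm{U}$.

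\textbf{Main obstacle.} The hardest part is Step 2 for general (non-equivariant) parameters. I must ensure that $\bm{U}^{\top}\bm{w}(\bm{\Theta},\bm{U})$ is exactly constant for every $\bm{U}$, and that the norm bound $B$ does not prevent attaining $\bm{v}^\star$. Continuity of $\bm{w}$ in $\bm{U}$ (it is polynomial in the entries of $\bm{U}$) upgrades an almost-everywhere equality to an everywhere equality. Exact attainability requires choosing $B$ larger than the norm of an explicit realizing parameter.
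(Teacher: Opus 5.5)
Apart from one step, your route is the paper's. It uses the taskwise lower bound by the latent margin risk and identifies $\bm{h}_0^\star=\omega_0\bm{\Sigma}^{-1}\bm{\mu}$ via strict convexity and Stein's identity. It then upgrades to every $\bm{U}$ by continuity and full Haar support, and uses the eigenspace-projection argument that $\bm{\Sigma}^{-1}\bm{\mu}$ and $\bm{h}_\tau$ are not positively parallel when $\tau>0$ and $r\geq 2$.

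\textbf{The gap: attainability.} Everything else depends on this step. You need one parameter tuple, feasible for some finite $B$, whose population effective direction satisfies $\bm{U}^{\top}\bm{\theta}=\omega_0\bm{\Sigma}^{-1}\bm{\mu}$ exactly for every $\bm{U}$, with the specific scale $\omega_0$. Otherwise the constrained minimizer need not have direction $\bm{h}_0^\star$, and neither the exact zero nor the exact constant in \cref{eq:main-standard-training-probability-rate} follows.

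\cref{lem:main-robust-bayes-realization-mechanism} does not give this. Part (iv) concerns only the span over $\xi$ of the effective weights, and a single transformer cannot form linear combinations of effective weights from different parameter values. The paper explicitly notes that the lemma alone does not give an exact match.

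Your ``multi-$\xi$ construction'' is not what the paper uses, and its natural reading fails. Take per-layer scalars, $\bm{P}_\ell=\bm{\Pi}_y$ and $\bm{Q}_\ell=\xi_\ell\bm{\Pi}_x$. The recursion then gives latent direction $q(\bm{S})\bm{\mu}$ with $1+s\,q(s)=\prod_{\ell=1}^{L}(1+\xi_\ell s)$. By Sherman--Morrison, $\omega_0\bm{\Sigma}^{-1}\bm{\mu}=C\bm{S}^{-1}\bm{\mu}$ with $C=\omega_0(1+\bm{\mu}^{\top}\bm{\Sigma}^{-1}\bm{\mu})$. Reaching it needs a product of real linear factors that equals $1$ at $0$ and $1+C$ at each active eigenvalue of $\bm{S}$. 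Take $L=r=3$ with active eigenvalues $1,2,3$, for example $\bm{S}=\operatorname{diag}(1,2,3)$ and a small $\bm{\mu}$ with nonzero entries. The only polynomial of degree at most three with these values is $1+C+\tfrac{C}{6}(s-1)(s-2)(s-3)$. It has a single real root, so it is not of the form $\prod_{\ell=1}^{3}(1+\xi_\ell s)$.

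\textbf{How the paper closes it.} The missing comparator is \cref{lem:sufficient-depth-comparator}. Its layers $\bm{P}_\ell=\kappa_\ell\bm{\Pi}_x+\eta_\ell\bm{\Pi}_y$, $\bm{Q}_\ell=\bm{\Pi}_x$ also act on the feature block.
\begin{itemize}
\item In each of the first $r-1$ layers, $\kappa_\ell$ kills the feature response at the smallest remaining transformed active eigenvalue. Then $\eta_\ell$ sets the label response there to $c/(s_k+\tau)$, after which it stays frozen.
\item A final layer $\bm{P}_r=\bm{\Pi}_y$, $\bm{Q}_r=\alpha\bm{\Pi}_x+\beta\bm{\Pi}_y$ rescales all frozen values by $C/c$ and fits the last eigenvalue. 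This gives $b_r(s_i)=C/(s_i+\tau)$ on the whole active spectrum.
\item Sherman--Morrison then yields exactly $\omega(\bm{\Sigma}+\tau\bm{I}_d)^{-1}\bm{\mu}$. The parameters commute with $\operatorname{diag}(\bm{U},1)$, and zero layers pad to depth $L$.
\end{itemize}
With $\tau=0$, $\omega=\omega_0$, and $B$ above the norm of this tuple, your Step 2 goes through as written.

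\textbf{A minor point.} Existence of the minimizer of $\phi$ needs coercivity in every direction, not just those with $\bm{v}^{\top}\bm{\mu}\leq 0$. It holds because $\bm{v}^{\top}\bm{a}$ has positive variance and is therefore negative with positive probability.
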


The proof is given in \cref{sec:appendix-adversarial-training}.
In contrast to \cref{thm:main-adversarial-training-bayes-optimality}, this theorem establishes an impossibility result for standard training: when $\epsilon > 0$ and $r \geq 2$, standard training cannot produce a linear transformer whose predictions attain robust Bayes optimality.
This impossibility is not due to a finite amount of training data; it persists in the population limit, even when both the number of pretraining tasks and the number of demonstrations per task tend to infinity.
These results reveal a fundamental difference between standard and adversarial training and suggest that, at least for linear transformers, adversarial training is necessary to obtain adversarially robust in-context learners.

\begin{figure}
  \centering
  \includegraphics[width=\textwidth]{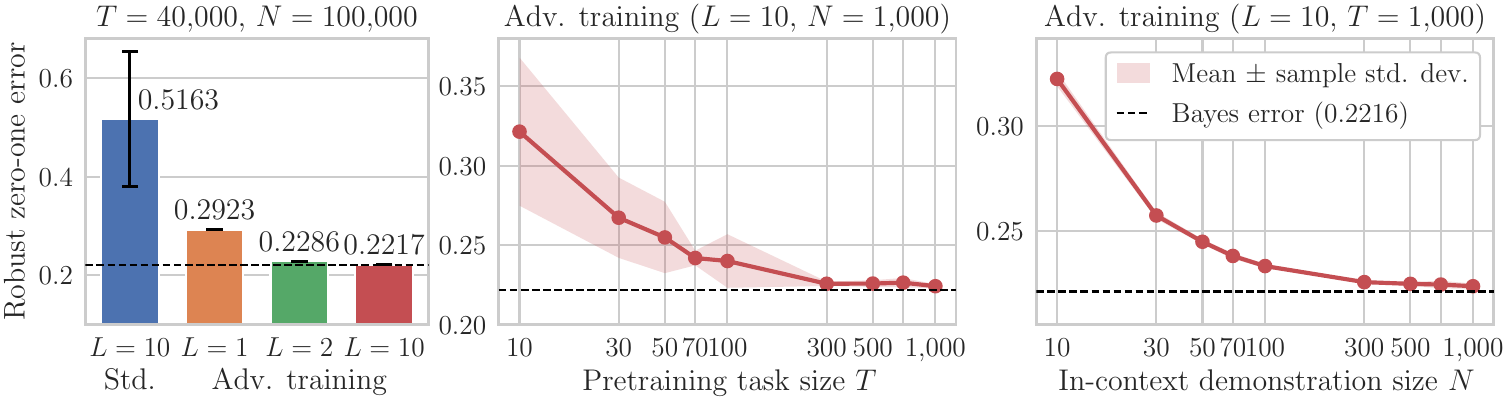}
  \caption{
    Robust zero-one errors for $d=100$, $r=10$, and $\epsilon=0.08$.
    Means and sample standard deviations are shown over three runs in the left panel and five runs in the other panels.
  }\label{fig:robust-zero-one-error}
\end{figure}

\subsection{Other Theoretical Results}

\textbf{Gradient flow to Bayes optimality.}
The preceding sections show that global minimizers of the adversarial empirical risk asymptotically achieve taskwise Bayes optimality.
However, practical optimization algorithms such as gradient descent are not guaranteed to find a global minimizer.
To bridge this gap, \cref{thm:main-last-layer-empirical-subgradient-flow} analyzes the case $L = 2$ and $r \leq 2$, in which only the final key--query matrix is trained.
We show that gradient flow yields predictors whose robust errors converge to the corresponding Bayes errors.

\textbf{Accuracy--robustness trade-off.}
Robustness often comes at the cost of clean accuracy.
In particular, adversarially trained models may be less accurate on unperturbed samples than their standardly trained counterparts~\citep{su2018robustness,tsipras2019robustness,ilyas2019adversarial}.
In \cref{thm:main-accuracy-robustness-trade-off}, we establish that the same trade-off arises in our setting.
Moreover, the trade-off persists even with infinitely many pretraining tasks and infinitely many demonstrations per task.
This result highlights a fundamental challenge in developing robust foundation models.

\textbf{Demonstration complexity.}
Achieving a target level of robust accuracy often requires more training samples than achieving the corresponding level of standard accuracy~\citep{schmidt2018adversarially,dan2020sharp}.
In \cref{thm:main-adversarial-demonstration-complexity}, we establish an analogous gap in the number of in-context demonstrations.
Attaining the same fixed excess error relative to the respective Bayes errors requires only one demonstration for standard prediction but $\Theta(\sqrt{d})$ demonstrations for robust prediction.
This result reveals another fundamental challenge in developing robust foundation models.

\subsection{Comparison with Prior Work}\label{sec:comparison}

Compared to \citet{kumano2026adversarially}, we substantially relax their assumptions on the data distribution and the model architecture while establishing stronger robustness guarantees for linear transformers.

\textbf{Data distribution.}
Their data model restricts feature scales to two discrete levels, such as $y \bm{x} = [\alpha, \ldots, \alpha, \beta, \ldots, \beta]$, where $\alpha, \beta > 0$ represent large and small values, respectively.
This is a major limitation they acknowledge.
Our framework allows arbitrary feature scales.
They also consider only the standard basis, which roughly corresponds to fixing $\bm{U}$ to a particular value in our framework.
In contrast, our theory applies to any orthonormal basis.
Although they allow a broader class of coefficient distributions than our Gaussian family, they impose strong constraints on the covariance structure, whereas we do not.
Overall, our framework broadens the class of admissible data models.

\textbf{Architecture.}
Theoretically, their analysis is limited to single-layer linear transformers, whereas ours applies to multilayer linear transformers.
Their experiments are also limited to single-layer linear transformers trained on relatively small datasets, including MNIST~\citep{MNIST}, Fashion-MNIST~\citep{FMNIST}, and CIFAR-10~\citep{CIFAR}.
By contrast, we conduct experiments with multilayer transformers equipped with softmax attention on larger datasets, including CIFAR-100~\citep{CIFAR}, Tiny ImageNet~\citep{TinyImageNet}, and Caltech-256~\citep{Caltech}, thereby evaluating the effectiveness of robust foundation models in more realistic settings.

\textbf{Robustness guarantees.}
Although differences in assumptions make direct comparisons difficult, our results provide more direct and comprehensive robustness guarantees.
First, they establish only bounds on the expected robust margin $\mathbb{E}[yf(\bm{x}+\bm{\Delta})]$.
Because the expected robust margin does not generally determine robust accuracy, these bounds alone do not provide a direct guarantee of robustness.
By contrast, we prove that the robust zero-one error converges to the taskwise robust Bayes error, thereby directly guaranteeing transformer robustness.
Second, their analysis does not determine whether linear transformers are robust or vulnerable in certain data regimes.
As they acknowledge, their single-layer restriction limits model capacity and prevents their analysis from capturing the full robustness potential of linear transformers.
By considering a multilayer model, we show that linear transformers asymptotically achieve taskwise robust Bayes optimality when they are sufficiently deep relative to the data complexity~(i.e., $L \geq r$).
Overall, our analysis provides more definitive guarantees and more fully characterizes the robustness of linear transformers.

\section{Experimental Results}

Detailed experimental settings and additional results are provided in \cref{sec:additional-experimental-results}.

\textbf{Synthetic datasets.}
We first empirically validate our theoretical results using synthetic datasets generated according to \cref{eq:task-structure}.
We set $d=100$, $r=10$, and $\epsilon=0.08$.
We train the linear transformer in \cref{eq:multilayer-scalar-predictor} by minimizing the logistic loss with Adam.
Each model is evaluated on $40{,}000$ test tasks with $200$ queries per task.
The left panel of \cref{fig:robust-zero-one-error} compares robust zero-one errors for $T=40{,}000$ and $N=100{,}000$.
Under adversarial training, the robust zero-one error decreases with depth, and the depth-$10$ model essentially attains the robust Bayes error, supporting \cref{thm:main-adversarial-training-bayes-optimality,thm:main-robust-bayes-representability}.
The near-optimal performance of the depth-$2$ model does not contradict the theorem because $L\geq r$ is a sufficient but not necessary condition.
In contrast, despite having sufficient depth and being trained on abundant data, the depth-$10$ model obtained by standard training remains far from optimal, consistent with \cref{thm:main-standard-training-bayes-optimality}.
The center and right panels show that the depth-$10$ adversarially trained model approaches the robust Bayes error as $T$ and $N$ increase, respectively, supporting \cref{thm:main-adversarial-training-bayes-optimality}.

\textbf{Real-world datasets.}
We then evaluate six-layer transformers with softmax attention on binary classification tasks from CIFAR-100, Tiny ImageNet, and Caltech-256.
The pretraining, validation, and test classes are mutually disjoint.
Clean and robust test accuracies are shown in \cref{tab:real-world-clean-robust-accuracy}.
Adversarial training substantially improves robust accuracy, albeit at the cost of lower clean accuracy.

\begin{table}
  \centering
  \caption{
    Clean/robust accuracy~(\%) of six-layer transformers with softmax attention on real-world data, reported as mean $\pm$ sample standard deviation across five independent runs.
  }\label{tab:real-world-clean-robust-accuracy}
  \begin{tabular}{llll}
    \toprule
    Training    & CIFAR-100                                & Tiny ImageNet                            & Caltech-256 \\
    \midrule
    Standard    & \textbf{79.1 $\pm$ 2.0} / 26.8 $\pm$ 3.4 & \textbf{77.4 $\pm$ 3.4} / 27.6 $\pm$ 7.6 & \textbf{78.0 $\pm$ 3.6} / 23.8 $\pm$ 6.5 \\
    Adversarial & 77.9 $\pm$ 1.7 / \textbf{63.2 $\pm$ 1.8} & 67.0 $\pm$ 6.0 / \textbf{52.8 $\pm$ 2.2} & 74.6 $\pm$ 3.9 / \textbf{62.4 $\pm$ 4.6} \\
    \bottomrule
  \end{tabular}
\end{table}

\section{Conclusion and Limitations}

We showed that adversarial pretraining can enable a sufficiently deep linear transformer to adapt to unseen Gaussian-mixture classification tasks through in-context learning from clean demonstrations and asymptotically attain the taskwise robust Bayes error.
By contrast, standard pretraining does not, even in the population limit.
We further established convergence under gradient flow and identified two challenges in robust in-context learning.
These results provide theoretical evidence that robustness acquired once during pretraining can transfer to downstream tasks without additional adversarial training, supporting the feasibility of robust foundation models.

Our analysis is restricted to Gaussian-mixture tasks that share a common latent mean and covariance, so the unseen tasks considered in our theory still belong to a structured task family.
Although such cross-task sharing is common in theoretical analyses of in-context learning~\citep{ahn2023transformers,mahankali2023one,kumano2026adversarially}, extending our guarantees to settings with task-dependent means or covariances and to non-Gaussian inputs is an important direction for future theoretical work.
Moreover, the practical utility of robust foundation models remains unknown at the scale of modern foundation-model pretraining.
We hope that our theoretical results will motivate large-scale empirical studies by large organizations with the computational resources required to conduct them.

\clearpage

\subsection*{AI Use Statement}

In this work, we used generative AI tools to
formulate mathematical claims, 
provide critical ingredients for proving mathematical claims, 
assist in the writing of proofs, 
implement methods, 
and assist with translation. 
We have not used generative AI tools to
generate synthetic data sets, 
help develop theoretical models or conceptual frameworks, 
propose or refine hypotheses, 
design or provide feedback on research methodology or experiments, 
and interpret results. 
Dataset cleaning and reformatting, 
and support for qualitative or thematic data analysis 
are not applicable to this work.
Additionally, we used generative AI tools to
create or edit software code, 
draft parts of a research paper, 
summarize or analyze existing literature, 
brainstorm, 
source or search for information, 
edit a research paper to improve readability, 
identify relevant literature, 
and suggest a structure for a research paper. 
We did not use generative AI tools to
create or modify scientific figures or images, 
suggest experimental parameters, 
create artifacts, 
discover research topics or identify gaps, 
format references, 
and propose a title or keywords for a research paper. 
Formulating questions for surveys or interviews 
and transcribing recordings of research materials 
are not applicable to this work. 
We have reviewed all AI-assisted work.
We independently checked all proofs for mathematical correctness, tested all experimental code, and read and evaluated every cited work.
We take responsibility for the final content of this work, including text, claims, or artifacts produced with the aid of generative AI\@.

\subsection*{Ethics Statement}

This work is a theoretical study of adversarial robustness supported by experiments on synthetic and real-world benchmark datasets.
We use only publicly available datasets and do not collect new data or involve human participants.
Our experiments do not interact with deployed systems.

\subsection*{Reproducibility Statement}

We state all assumptions explicitly in the main text and provide complete proofs of all theoretical results in the appendix.
Detailed experimental settings are provided in \cref{sec:additional-experimental-results}.
The supplementary material contains the code to reproduce the reported experimental results.

\bibliography{
  bib/adversarial-attack,
  bib/adversarial-finetuning,
  bib/breaker,
  bib/datasets,
  bib/fast-adversarial-training,
  bib/ICL-adv-robustness,
  bib/ICL,
  bib/others
}
\bibliographystyle{iclr2027_conference}

\appendix

\clearpage

\setcounter{equation}{0}
\setcounter{figure}{0}
\setcounter{table}{0}

\renewcommand{\theequation}{A\arabic{equation}}
\renewcommand{\thefigure}{A\arabic{figure}}
\renewcommand{\thetable}{A\arabic{table}}

\startcontents[sections]
\printcontents[sections]{l}{1}{\setcounter{tocdepth}{2}}

\section{Additional Experimental Results}\label{sec:additional-experimental-results}

All experiments were conducted on a Linux server with an x86\_64 CPU architecture and four NVIDIA A100-SXM4-80GB GPUs.

\subsection{Synthetic Experiments}

\textbf{Experimental design.}
We choose an anisotropic Gaussian distribution for which the standard and robust Bayes directions differ, allowing the experiments to distinguish standard from robust direction recovery.
Specifically, we set $d=100$ and $\epsilon=0.08$ and, for every $i\in[10]$, define the mean and diagonal covariance entries by
\begin{equation}
  \mu_{i}:=\frac{i}{10}, \qquad \Sigma_{i,i}:=\mu_{i}10^{-1+2(i-1)/9}.
  \label{eq:experimental-distribution}
\end{equation}
For every $i\in[d]\setminus[10]$, we set $\mu_{i}:=0$ and $\Sigma_{i,i}:=10^{-6}$.
The active signal-to-variance ratios $\mu_{i}/\Sigma_{i,i}=10^{1-2(i-1)/9}$ therefore range logarithmically from $10$ to $10^{-1}$.
Because the ten active coordinates have distinct covariance eigenvalues, this construction has Krylov dimension $r=10$.
We independently draw each task basis $\bm{U}$ from the normalized Haar distribution on $\mathcal{O}(d)$ and then sample demonstrations and queries from \cref{eq:task-structure}.
Each fixed pretraining dataset contains $T$ task bases, $N$ clean demonstrations per task, and one clean query per task.
For evaluation, we use $40{,}000$ independently drawn test tasks, a fresh context with the same value of $N$ for each task, and $200$ independent queries per context.
We conduct three independent runs for $T=40{,}000$ and $N=100{,}000$ and five independent runs otherwise.
Each run uses separate training and evaluation random streams, and we report means and sample standard deviations across runs.

\textbf{Model and training.}
We use the one-head masked linear-attention architecture in \cref{eq:multilayer-linear-transformer} and restrict its parameters to a rotation-equivariant family that respects the symmetry of the task distribution.
In particular, every layer has the form
\begin{align}
  \bm{P}_{\ell} &:= p_{\ell,x}\bm{\Pi}_{x}+p_{\ell,y}\bm{\Pi}_{y}, & \bm{Q}_{\ell} &:= q_{\ell,x}\bm{\Pi}_{x}+q_{\ell,y}\bm{\Pi}_{y}, \\
  \bm{\Pi}_{x} &:=\operatorname{diag}(\bm{I}_{d},0), & \bm{\Pi}_{y} &:= \operatorname{diag}(\bm{0}_{d},1).
  \label{eq:experimental-projector-parameterization}
\end{align}
The four coefficients in each layer are trainable and initialized independently from $\mathcal{N}(0,0.01^{2})$.
We minimize the logistic loss $\mathcal{L}(z)=\log(1+\exp(-z))$, which satisfies \cref{asm:summary}\cref{asm:loss}, by full-batch Adam for at most $2{,}500$ updates on the fixed pretraining dataset.
The learning rate increases linearly to $0.01$ over the first $30$ updates and subsequently either remains constant or follows cosine annealing to $0.001$ without restarts.
For each schedule, we retain the checkpoint with the smallest training loss and report the better of the two retained checkpoints under the same criterion.

\textbf{Robust evaluation and Bayes baselines.}
Masked linear attention is linear in the query, so both adversarial training and robust evaluation can use the exact worst-case perturbation rather than an approximate attack such as the fast gradient sign method~\citep{FGSM} or projected gradient descent~(PGD)~\citep{PGD}.
The adversary perturbs only the query and leaves all in-context demonstrations fixed and clean.
Writing $\bm{w}(\bm{\Theta}_{L},\mathcal{D}_{N})$ for the context-dependent effective query direction, the exact perturbation for $\bm{w}(\bm{\Theta}_{L},\mathcal{D}_{N})\neq\bm{0}_{d}$ is
\begin{equation}
  \bm{\Delta}=-y\epsilon\sqrt{d}\frac{\bm{w}(\bm{\Theta}_{L},\mathcal{D}_{N})}{\|\bm{w}(\bm{\Theta}_{L},\mathcal{D}_{N})\|_{2}}.
  \label{eq:experimental-exact-attack}
\end{equation}
When the effective direction is zero, we instead set $\bm{\Delta}=\bm{0}_{d}$, and during adversarial training we recompute the perturbation from the current model at every update.
For comparison with the theoretical optimum, we obtain the robust Bayes direction $(\bm{\Sigma}+\tau\bm{I}_{d})^{-1}\bm{\mu}$ by numerically solving
\begin{equation}
  \tau\|{(\bm{\Sigma}+\tau\bm{I}_{d})}^{-1}\bm{\mu}\|_{2}=\epsilon\sqrt{d}
\end{equation}
and compute all Bayes risks analytically from the resulting Gaussian margins.

\textbf{Alignment with the robust Bayes direction.}
To examine whether adversarial training recovers the optimal classifier rather than merely its error, \cref{fig:robust-bayes-direction-cosine} compares the effective direction of linear transformers with the robust Bayes direction characterized in \cref{lem:robust-bayes-characterization}.
At $T=40{,}000$ and $N=100{,}000$, the cosine similarity increases with depth and approaches one at depth ten.
Thus, the depth-$10$ model recovers not only the robust Bayes error but also the unique optimal direction up to positive scaling.
Increasing the number of pretraining tasks similarly improves the cosine similarity, while the values throughout the demonstration sweep remain close to one.
Together with the robust errors in \cref{fig:robust-zero-one-error}, these results support the convergence to the robust Bayes predictor established in \cref{thm:main-adversarial-training-bayes-optimality}.

\begin{figure}
  \centering
  \includegraphics[width=\textwidth]{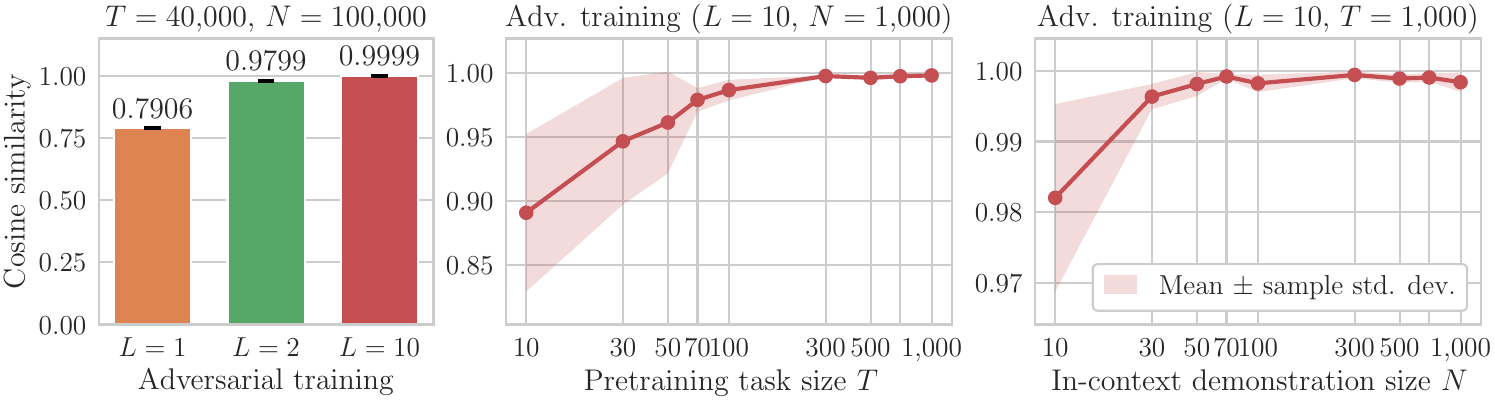}
  \caption{
    Cosine similarity between the population-context effective direction of an adversarially trained transformer and the robust Bayes direction.
    The distribution and the three experimental conditions are the same as in \cref{fig:robust-zero-one-error}.
    The left panel varies the depth at $T=40{,}000$ and $N=100{,}000$, the center panel varies $T$ with $L=10$ and $N=1{,}000$, and the right panel varies $N$ with $L=10$ and $T=1{,}000$.
    Means and sample standard deviations are shown over three runs in the left panel and five runs in the other panels.
  }\label{fig:robust-bayes-direction-cosine}
\end{figure}

\textbf{Accuracy--robustness trade-off and demonstration complexity.}
To assess the accuracy--robustness trade-off and the qualitative dependence on context length, \cref{tab:clean-robust-accuracy} compares clean and robust accuracy as the number of demonstrations increases.
Standard training gives higher clean accuracy overall, whereas adversarial training gives substantially higher robust accuracy, supporting the accuracy--robustness trade-off in \cref{thm:main-standard-training-bayes-optimality,thm:main-accuracy-robustness-trade-off}.
Moreover, standard clean accuracy saturates with relatively few demonstrations, while adversarial robust accuracy continues to improve toward the robust Bayes accuracy.
This behavior is qualitatively consistent with \cref{thm:main-adversarial-demonstration-complexity}.

\begin{table}
  \centering
  \caption{
    Clean and robust accuracies~(\%), reported as mean $\pm$ sample standard deviation across five runs, for depth-$10$ models trained on $T=1{,}000$ tasks.
    Each row uses the displayed number of demonstrations in both pretraining and evaluation.
    Robust accuracy uses $\epsilon=0.08$, and every model is evaluated on $40{,}000$ test tasks with $200$ queries per task.
  }\label{tab:clean-robust-accuracy}
  \begin{tabular}{ccccc}
    \toprule
              & \multicolumn{2}{c}{Standard training} & \multicolumn{2}{c}{Adversarial training} \\
    \cmidrule(lr){2-3}\cmidrule(lr){4-5}
    $N$       & Clean                                 & Robust          & Clean          & Robust \\
    \midrule
    $10$      & $84.6 \pm 4.2$                        & $63.5 \pm 4.5$  & $84.8 \pm 0.7$ & $67.8 \pm 0.3$ \\
    $50$      & $95.1 \pm 4.3$                        & $52.8 \pm 20.6$ & $92.4 \pm 0.3$ & $75.5 \pm 0.1$ \\
    $100$     & $98.0 \pm 0.6$                        & $45.1 \pm 15.9$ & $93.6 \pm 0.3$ & $76.6 \pm 0.1$ \\
    $500$     & $98.7 \pm 0.2$                        & $39.6 \pm 7.0$  & $94.1 \pm 0.3$ & $77.5 \pm 0.1$ \\
    $1{,}000$ & $98.4 \pm 0.6$                        & $45.1 \pm 13.0$ & $94.0 \pm 0.1$ & $77.6 \pm 0.1$ \\
    \bottomrule
  \end{tabular}
\end{table}

\subsection{Real-World Experiments}\label{sec:real-world-data-experiments}

\textbf{Datasets and class splits.}
We evaluate whether the benefit of adversarial pretraining extends beyond the theoretical Gaussian model by considering CIFAR-100~\citep{CIFAR}, Tiny ImageNet~\citep{TinyImageNet}, and Caltech-256~\citep{Caltech}.
In each of the five independent runs, the respective pretraining splits contain $80$, $180$, and $236$ classes, while each validation and test split contains ten classes, with all three partitions mutually disjoint.
We draw a different class partition in each run.

\textbf{PCA features.}
We represent each image by an unwhitened principal component analysis~(PCA) features to reduce the input dimension before constructing the in-context episodes.
Let $d_{\mathrm{PCA}}$ denote the number of retained components and hence the dimension of each PCA feature.
For every dataset and run, we scale RGB values to $[0,1]$ and flatten each image in channel--height--width order.
We fit the coordinatewise mean and PCA components using only images from that run's pretraining classes, excluding all images from validation and test classes.
We compute PCA with $32$ oversampling dimensions and seven power iterations.
After fitting PCA, we use the same mean and components to project every image in the training and query splits.
We retain $d_{\mathrm{PCA}}=256$ components for CIFAR-100 and $d_{\mathrm{PCA}}=512$ components for Tiny ImageNet and Caltech-256.
For Caltech-256, we exclude the clutter class and select $60$ training images and $20$ query images from each of the remaining $256$ classes.
Before computing its PCA feature, we convert each Caltech-256 image to RGB, center it on a square canvas padded with the RGB value $(128,128,128)$, and resize it to $64\times64$ pixels using bicubic interpolation with antialiasing.

\textbf{Binary in-context episodes.}
Each episode is a binary classification task whose two classes determine the meaning of the labels within that context.
At the beginning of every pretraining epoch, we randomly permute the pretraining classes and pair consecutive classes, ensuring that each class appears in exactly one task during that epoch.
For each class in a task, we sample $50$ demonstrations and $10$ queries without replacement from its training features, with the demonstration and query sets disjoint within the task.
Validation and testing instead enumerate all $45$ unordered pairs of the corresponding ten held-out classes.
For each pair, we form one fixed context from $50$ training features per class and evaluate it on $10$ query-split features per class, yielding $900$ queries in each final test.

\textbf{Transformer and optimization.}
We use a six-layer pre-normalization transformer with softmax attention, residual connections, and GELU feed-forward networks, thereby testing whether the qualitative prediction of the linear-attention theory persists in a standard nonlinear architecture.
Each PCA feature is augmented with one of three learned embeddings that identifies a demonstration with label $-1$, a demonstration with label $+1$, or an unlabeled query.
CIFAR-100 uses four attention heads and a feed-forward dimension of $1{,}024$, whereas Tiny ImageNet and Caltech-256 use eight attention heads and a feed-forward dimension of $2{,}048$; every attention head has dimension $64$.
A linear head applied after the final layer normalization produces the scalar score optimized with the logistic loss.
We use Xavier uniform initialization for linear weights, zero linear biases, independent $\mathcal{N}(0,0.02^{2})$ initialization for the three embeddings, and unit scales and zero biases for layer normalization.
We train for $300$ epochs with AdamW, exponential decay rates $0.9$ and $0.95$, numerical stability constant $10^{-8}$, weight decay $0.05$, and global gradient clipping at $1.0$.
The standard and adversarial runs use peak learning rates of $3\times10^{-4}$ and $10^{-3}$, respectively, with linear warmup over the first $5\%$ of updates followed by cosine decay to zero.
We initialize the standard model from scratch and the adversarial model from the corresponding standardly trained model.
Model selection minimizes the clean validation logistic loss for standard training and the $20$-step PGD validation loss for adversarial training.

\textbf{Adversarial training and evaluation.}
The real-world threat model follows the theoretical setting by perturbing only the query while keeping all demonstrations clean and fixed.
Specifically, the adversary may change a query PCA feature within an $L_{2}$ ball of radius $\epsilon\sqrt{d_{\mathrm{PCA}}}$, where $\epsilon=0.1$ in both training and evaluation.
Adversarial training uses untargeted $10$-step PGD with one uniformly sampled random start and step size $\epsilon\sqrt{d_{\mathrm{PCA}}}/4$.
For final evaluation, we measure robust accuracy using APGD-CE and binary FAB~\citep{AutoAttack}, each run for $100$ iterations.
We omit the remaining AutoAttack components because the DLR loss is undefined for scalar binary scores and Square Attack relies on spatial image structure.

\section{Robust Bayes Implementation by Linear Transformers: Proof of \texorpdfstring{\cref{thm:main-robust-bayes-representability}}{the robust Bayes representability theorem}}\label{sec:appendix-robust-bayes-implementation}

\subsection{Robust Bayes classifier}

In this section, we characterize the robust Bayes classifier and its error for a fixed task independently of the linear-transformer architecture.
For $\bm{U} = \bm{I}_{d}$ and thus $\bm{x} = y \bm{a}$, we derive the robust zero-one error of every linear function in \cref{lem:symmetric-gaussian-linear-robust-error}, the standard Bayes error in \cref{lem:symmetric-gaussian-bayes-classifier}, and the robust Bayes error and all linear functions attaining it in \cref{lem:symmetric-gaussian-robust-bayes-classifier}.
Then, for every $\bm{U} \in \mathcal{O}(d)$ and $\bm{x} = y \bm{U} \bm{a}$, we generalize these results in \cref{lem:robust-bayes-characterization}.

Variants or equivalent formulations of these results appear in \citet{hastie2009elements,bhagoji2019lower,dan2020sharp}.
These lemmas are auxiliary to our main results, and we do not claim them as novel contributions.
For completeness, we provide self-contained proofs, since adapting the existing results to our setting would require essentially as much exposition as proving them directly.

\begin{lemma}[Robust error of linear functions for the identity-basis task]\label{lem:symmetric-gaussian-linear-robust-error}
  Suppose that $y$ is uniform on $\{\pm 1\}$ and $\bm{x}\mid y\sim\mathcal{N}(y\bm{\mu},\bm{\Sigma})$, where $\bm{\Sigma}\succ\bm{0}_{d,d}$.
  For every $\bm{h}\in\mathbb{R}^{d}$, define $g_{\bm{h}}:\mathbb{R}^{d}\to\mathbb{R}$ by $g_{\bm{h}}(\bm{x}):=\bm{h}^{\top}\bm{x}$.
  Then, for every $\epsilon\geq0$ and $\bm{h}\in\mathbb{R}^{d}\setminus\{\bm{0}_{d}\}$,
  \begin{equation}
    \mathbb{P}\ab(\inf_{\|\bm{\Delta}\|_{2}\leq\epsilon\sqrt{d}}y g_{\bm{h}}(\bm{x}+\bm{\Delta})\leq0)=\Phi\ab(-\frac{\bm{\mu}^{\top}\bm{h}-\epsilon\sqrt{d}\|\bm{h}\|_{2}}{\sqrt{\bm{h}^{\top}\bm{\Sigma}\bm{h}}}),
    \label{eq:symmetric-gaussian-linear-robust-error}
  \end{equation}
  where $\Phi$ denotes the cumulative distribution function of the standard normal distribution.
\end{lemma}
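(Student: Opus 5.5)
The plan is to reduce the robust event to a scalar Gaussian event: first compute the inner infimum in closed form, then use the distribution of the resulting margin.

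For fixed $(\bm{x},y)$ and $\bm{h}\neq\bm{0}_{d}$, linearity gives
\begin{equation}
  \inf_{\|\bm{\Delta}\|_{2}\leq\epsilon\sqrt{d}} y\,\bm{h}^{\top}(\bm{x}+\bm{\Delta})
  = y\,\bm{h}^{\top}\bm{x} - \epsilon\sqrt{d}\,\|\bm{h}\|_{2}.
\end{equation}
The lower bound follows from Cauchy--Schwarz, since $y\bm{h}^{\top}\bm{\Delta}\geq-\|\bm{h}\|_{2}\|\bm{\Delta}\|_{2}$ when $|y|=1$. Equality is attained by $\bm{\Delta}=-y\epsilon\sqrt{d}\,\bm{h}/\|\bm{h}\|_{2}$, the same exact attack as in \cref{eq:experimental-exact-attack}. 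The infimum is therefore a minimum, and the robust error event is exactly $\{y\bm{h}^{\top}\bm{x}\leq\epsilon\sqrt{d}\|\bm{h}\|_{2}\}$.

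Next we identify the law of the margin $m:=y\bm{h}^{\top}\bm{x}$. Conditional on $y$, we have $\bm{x}=y\bm{\mu}+\bm{\Sigma}^{1/2}\bm{g}$ with $\bm{g}\sim\mathcal{N}(\bm{0}_{d},\bm{I}_{d})$ independent of $y$. Since $y^{2}=1$,
\begin{equation}
  m=\bm{h}^{\top}\bm{\mu}+y\,\bm{h}^{\top}\bm{\Sigma}^{1/2}\bm{g}.
\end{equation}
Conditional on either value of $y$, the second term is $\mathcal{N}(0,\bm{h}^{\top}\bm{\Sigma}\bm{h})$ by symmetry of the centered Gaussian. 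Hence $m\sim\mathcal{N}(\bm{\mu}^{\top}\bm{h},\bm{h}^{\top}\bm{\Sigma}\bm{h})$ unconditionally, and this variance is strictly positive because $\bm{\Sigma}\succ\bm{0}_{d,d}$ and $\bm{h}\neq\bm{0}_{d}$.

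Standardizing, the robust error is
\begin{equation}
  \mathbb{P}\bigl(m\leq\epsilon\sqrt{d}\|\bm{h}\|_{2}\bigr)
  =\Phi\bigl((\epsilon\sqrt{d}\|\bm{h}\|_{2}-\bm{\mu}^{\top}\bm{h})/\sqrt{\bm{h}^{\top}\bm{\Sigma}\bm{h}}\bigr),
\end{equation}
which is \cref{eq:symmetric-gaussian-linear-robust-error}. The weak inequality $\leq0$ causes no issue, because $m$ has a continuous distribution.

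No step is a real obstacle. The only points needing care are that the infimum is attained and equals the stated expression, which is the exact-attack computation above, and that the symmetry argument removes the dependence on $y$. Measurability of the event is immediate, since after the first step it is a half-space event in $m$.
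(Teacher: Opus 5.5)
Your proof is correct and follows the paper's argument essentially step for step: Cauchy--Schwarz evaluates the inner infimum as $y\bm{h}^{\top}\bm{x}-\epsilon\sqrt{d}\|\bm{h}\|_{2}$, the margin is identified as $y\bm{h}^{\top}\bm{x}\sim\mathcal{N}(\bm{\mu}^{\top}\bm{h},\bm{h}^{\top}\bm{\Sigma}\bm{h})$, and the result follows by standardization. Your extra details, namely attainment of the infimum by the exact attack and the symmetry argument that removes the dependence on $y$, are correct elaborations of steps the paper states briefly.
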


\begin{proof}
  For every perturbation $\bm{\Delta}\in\mathbb{R}^{d}$, the signed margin is
  \begin{equation}
    y g_{\bm{h}}(\bm{x}+\bm{\Delta})=y\bm{h}^{\top}\bm{x}+y\bm{h}^{\top}\bm{\Delta}.
  \end{equation}
  Cauchy--Schwarz shows that its minimum over the perturbation ball is
  \begin{equation}
    \inf_{\|\bm{\Delta}\|_{2}\leq\epsilon\sqrt{d}}y g_{\bm{h}}(\bm{x}+\bm{\Delta})=y\bm{h}^{\top}\bm{x}-\epsilon\sqrt{d}\|\bm{h}\|_{2}.
  \end{equation}
  The class-conditional model implies
  \begin{equation}
    y\bm{h}^{\top}\bm{x}\sim\mathcal{N}(\bm{\mu}^{\top}\bm{h},\bm{h}^{\top}\bm{\Sigma}\bm{h}).
  \end{equation}
  Since $\bm{h}\neq\bm{0}_{d}$ and $\bm{\Sigma}\succ\bm{0}_{d,d}$, the variance $\bm{h}^{\top}\bm{\Sigma}\bm{h}$ is strictly positive.
  Therefore, standardizing the Gaussian signed margin gives
  \begin{align}
    &\mathbb{P}\ab(\inf_{\|\bm{\Delta}\|_{2}\leq\epsilon\sqrt{d}}y g_{\bm{h}}(\bm{x}+\bm{\Delta})\leq0) \\
    &=\mathbb{P}(y\bm{h}^{\top}\bm{x}\leq\epsilon\sqrt{d}\|\bm{h}\|_{2}) \\
    &= \mathbb{P}_{z\sim\mathcal{N}(0,1)}\ab(z\leq\frac{\epsilon\sqrt{d}\|\bm{h}\|_{2}-\bm{\mu}^{\top}\bm{h}}{\sqrt{\bm{h}^{\top}\bm{\Sigma}\bm{h}}}) \\
    &= \Phi\ab(-\frac{\bm{\mu}^{\top}\bm{h}-\epsilon\sqrt{d}\|\bm{h}\|_{2}}{\sqrt{\bm{h}^{\top}\bm{\Sigma}\bm{h}}}).
  \end{align}
\end{proof}

\begin{lemma}[Bayes error for the identity-basis task]\label{lem:symmetric-gaussian-bayes-classifier}
  Suppose that $y$ is uniform on $\{\pm 1\}$ and $\bm{x} \mid y \sim \mathcal{N}(y\bm{\mu}, \bm{\Sigma})$, where $\bm{\mu} \neq \bm{0}_{d}$ and $\bm{\Sigma} \succ \bm{0}_{d,d}$.
  Then, for every measurable function $\varphi: \mathbb{R}^{d} \to \mathbb{R}$,
  \begin{equation}
    \mathbb{P}(y \varphi(\bm{x}) \leq 0) \geq \Phi\ab(-\sqrt{\bm{\mu}^{\top} \bm{\Sigma}^{-1} \bm{\mu}}),
  \end{equation}
  where $\Phi$ denotes the cumulative distribution function of the standard normal distribution.
\end{lemma}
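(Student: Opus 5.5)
The plan is the classical Neyman--Pearson argument: for any measurable $\varphi$, the error $\mathbb{P}(y\varphi(\bm{x})\leq 0)$ is bounded below by the Bayes risk of the two-class problem, and that risk is then evaluated in closed form for the Gaussian mixture.

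\textbf{Step 1: pointwise lower bound.} Let $p_{\pm}$ denote the densities of $\mathcal{N}(\pm\bm{\mu},\bm{\Sigma})$, and set $A:=\{\bm{x}:\varphi(\bm{x})>0\}$. Since $\{y\varphi(\bm{x})\leq 0\}$ contains $\{y=1,\bm{x}\notin A\}\cup\{y=-1,\bm{x}\in A\}$, we get
\begin{equation}
  \mathbb{P}(y\varphi(\bm{x})\leq 0)\geq \tfrac12\int_{A^{c}}p_{+}+\tfrac12\int_{A}p_{-}\geq \tfrac12\int\min(p_{+},p_{-})\,d\bm{x}.
\end{equation}
This reduces the claim to computing $\tfrac12\int\min(p_+,p_-)$.

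\textbf{Step 2: evaluate the minimum.} The log-likelihood ratio is $\log(p_+/p_-)(\bm{x})=2\bm{\mu}^{\top}\bm{\Sigma}^{-1}\bm{x}$. Hence $\min(p_+,p_-)=p_+$ exactly on $B:=\{\bm{\mu}^{\top}\bm{\Sigma}^{-1}\bm{x}\leq 0\}$, and the integral equals $\tfrac12\mathbb{P}_{+}(B)+\tfrac12\mathbb{P}_{-}(B^{c})$.

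Under $p_+$, the statistic $s:=\bm{\mu}^{\top}\bm{\Sigma}^{-1}\bm{x}$ is distributed as $\mathcal{N}(m,m)$ with $m:=\bm{\mu}^{\top}\bm{\Sigma}^{-1}\bm{\mu}>0$. This uses $\bm{\mu}\neq\bm{0}$ and $\bm{\Sigma}\succ0$. Therefore $\mathbb{P}_+(s\leq 0)=\Phi(-\sqrt{m})$, and by the symmetry $\bm{x}\mapsto-\bm{x}$ the same value holds for the $p_-$ term. Combining gives the stated bound $\Phi(-\sqrt{m})$.

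\textbf{Anticipated difficulty.} The derivation is essentially routine. The only care needed is the tie set $\{\varphi=0\}$, which is counted as an error for both labels; this only strengthens the inequality in Step 1. One could optionally note that the bound is attained by $\varphi(\bm{x})=\bm{\mu}^{\top}\bm{\Sigma}^{-1}\bm{x}$, consistent with \cref{lem:symmetric-gaussian-linear-robust-error} at $\epsilon=0$.
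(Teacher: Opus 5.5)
Your proposal is correct and follows the paper's proof: the pointwise bound by $\tfrac12\int\min\{p_{+},p_{-}\}$, then evaluation through the log-likelihood ratio $2\bm{\mu}^{\top}\bm{\Sigma}^{-1}\bm{x}$ and the $\mathcal{N}(m,m)$ law of that statistic. The only cosmetic difference is that you compute $\tfrac12\int\min\{p_{+},p_{-}\}$ directly by splitting on $B$, whereas the paper identifies it with the error of $\varphi^{\star}(\bm{x})=\bm{\mu}^{\top}\bm{\Sigma}^{-1}\bm{x}$, using that the hyperplane $\{\varphi^{\star}=0\}$ is null, and reads off $\Phi(-\sqrt{m})$ from the law of $y\varphi^{\star}(\bm{x})$; this also shows the bound is attained.
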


\begin{proof}
  \textbf{Lower bound.}
  Let $p_{+}$ and $p_{-}$ denote the conditional densities of $\bm{x}$ given $y=+1$ and $y=-1$, respectively.
  Since the two labels have equal probability, conditioning on $y$ gives
  \begin{equation}
    \mathbb{P}(y\varphi(\bm{x})\leq0)=\frac{1}{2}\int_{\bm{x}\in\mathbb{R}^{d}}[\mathds{1}\{\varphi(\bm{x})\leq0\}p_{+}(\bm{x})+\mathds{1}\{\varphi(\bm{x})\geq0\}p_{-}(\bm{x})]d\bm{x}.
  \end{equation}
  For every $\bm{x}\in\mathbb{R}^{d}$, the integrand satisfies
  \begin{equation}
    \mathds{1}\{\varphi(\bm{x})\leq0\}p_{+}(\bm{x})+\mathds{1}\{\varphi(\bm{x})\geq0\}p_{-}(\bm{x})\geq\min\{p_{+}(\bm{x}),p_{-}(\bm{x})\}.
  \end{equation}
  Consequently,
  \begin{equation}
    \mathbb{P}(y\varphi(\bm{x})\leq0)\geq\frac{1}{2}\int_{\bm{x}\in\mathbb{R}^{d}}\min\{p_{+}(\bm{x}),p_{-}(\bm{x})\}d\bm{x}.
  \end{equation}

  \textbf{Function that attains the lower bound.}
  The Gaussian density formula gives
  \begin{equation}
    \log\ab(\frac{p_{+}(\bm{x})}{p_{-}(\bm{x})})=-\frac{1}{2}(\bm{x}-\bm{\mu})^{\top}\bm{\Sigma}^{-1}(\bm{x}-\bm{\mu})+\frac{1}{2}(\bm{x}+\bm{\mu})^{\top}\bm{\Sigma}^{-1}(\bm{x}+\bm{\mu})=2\bm{\mu}^{\top}\bm{\Sigma}^{-1}\bm{x}.
  \end{equation}
  Therefore, the linear function $\varphi^{\star}(\bm{x}):=\bm{\mu}^{\top}\bm{\Sigma}^{-1}\bm{x}$ is positive exactly when $p_{+}(\bm{x})>p_{-}(\bm{x})$ and negative exactly when $p_{+}(\bm{x})<p_{-}(\bm{x})$.
  Because $\bm{\Sigma}^{-1}\bm{\mu}\neq\bm{0}_{d}$, the hyperplane on which $\varphi^{\star}(\bm{x})=0$ has Lebesgue measure zero.
  Thus,
  \begin{align}
    & \quad \frac{1}{2}\int_{\bm{x}\in\mathbb{R}^{d}}\min\{p_{+}(\bm{x}),p_{-}(\bm{x})\}d\bm{x} \\
    &=\frac{1}{2}\int_{\bm{x}\in\mathbb{R}^{d}}[\mathds{1}\{\varphi^{\star}(\bm{x})\leq0\}p_{+}(\bm{x})+\mathds{1}\{\varphi^{\star}(\bm{x})\geq0\}p_{-}(\bm{x})]d\bm{x} \\
    &=\mathbb{P}(y\varphi^{\star}(\bm{x})\leq0).
  \end{align}

  \textbf{Value of the lower bound.}
  The class-conditional model and the definition of $\varphi^{\star}$ imply
  \begin{equation}
    y\varphi^{\star}(\bm{x})\sim\mathcal{N}(\bm{\mu}^{\top}\bm{\Sigma}^{-1}\bm{\mu},\bm{\mu}^{\top}\bm{\Sigma}^{-1}\bm{\mu}).
  \end{equation}
  The mean and variance are both equal to $\bm{\mu}^{\top}\bm{\Sigma}^{-1}\bm{\mu}$, which is strictly positive because $\bm{\mu}\neq\bm{0}_{d}$ and $\bm{\Sigma}\succ\bm{0}_{d,d}$.
  Standardizing therefore yields
  \begin{equation}
    \mathbb{P}(y\varphi^{\star}(\bm{x})\leq0)=\Phi\ab(-\frac{\bm{\mu}^{\top}\bm{\Sigma}^{-1}\bm{\mu}}{\sqrt{\bm{\mu}^{\top}\bm{\Sigma}^{-1}\bm{\mu}}})=\Phi\ab(-\sqrt{\bm{\mu}^{\top}\bm{\Sigma}^{-1}\bm{\mu}}).
  \end{equation}
\end{proof}

\begin{lemma}[Robust Bayes characterization for the identity-basis task]\label{lem:symmetric-gaussian-robust-bayes-classifier}
  Suppose that $y$ is uniform on $\{\pm 1\}$ and $\bm{x} \mid y \sim \mathcal{N}(y\bm{\mu}, \bm{\Sigma})$, where $\bm{\mu} \neq \bm{0}_{d}$ and $\bm{\Sigma} \succ \bm{0}_{d,d}$.
  Suppose also that $0\leq\epsilon\sqrt{d}<\|\bm{\mu}\|_{2}$.
  There exists a unique $\tau\geq0$ satisfying
  \begin{equation}
    \tau\|{(\bm{\Sigma}+\tau\bm{I}_{d})}^{-1}\bm{\mu}\|_{2}=\epsilon\sqrt{d}.
    \label{eq:robust-shift-equation}
  \end{equation}
  For this $\tau$, define
  \begin{equation}
    \bm{h}_{\tau}:={(\bm{\Sigma}+\tau\bm{I}_{d})}^{-1}\bm{\mu}.
    \label{eq:robust-bayes-direction}
  \end{equation}
  Then, for every measurable function $\varphi:\mathbb{R}^{d}\to\mathbb{R}$,
  \begin{equation}
    \mathbb{P}\ab(\inf_{\|\bm{\Delta}\|_{2}\leq\epsilon\sqrt{d}}y\varphi(\bm{x}+\bm{\Delta})\leq0)\geq\Phi\ab(-\sqrt{\bm{h}_{\tau}^{\top}\bm{\Sigma}\bm{h}_{\tau}}).
  \end{equation}
  Among linear functions, equality is attained exactly by the functions $\bm{x}\mapsto\omega\bm{h}_{\tau}^{\top}\bm{x}$ with $\omega>0$.
\end{lemma}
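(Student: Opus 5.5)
The proof has three parts: existence and uniqueness of $\tau$, a lower bound obtained by letting the adversary apply a fixed shift and then invoking \cref{lem:symmetric-gaussian-bayes-classifier}, and a direct computation of the linear case through \cref{lem:symmetric-gaussian-linear-robust-error}.

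\textbf{Existence and uniqueness of $\tau$.}
Diagonalize $\bm{\Sigma}=\sum_i\lambda_i\bm{v}_i\bm{v}_i^{\top}$ with $\lambda_i>0$. Then
\begin{equation}
  F(\tau):=\tau\|(\bm{\Sigma}+\tau\bm{I}_d)^{-1}\bm{\mu}\|_2=\Big(\sum_i \big(\tfrac{\tau}{\lambda_i+\tau}\big)^2(\bm{v}_i^{\top}\bm{\mu})^2\Big)^{1/2}.
\end{equation}
Each factor $\tau/(\lambda_i+\tau)$ is continuous and strictly increasing on $[0,\infty)$. Since $\bm{\mu}\neq\bm{0}_d$, at least one coefficient $\bm{v}_i^{\top}\bm{\mu}$ is nonzero, so $F$ is continuous and strictly increasing. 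It satisfies $F(0)=0$ and $F(\tau)\to\|\bm{\mu}\|_2$ as $\tau\to\infty$. Because $0\le\epsilon\sqrt d<\|\bm{\mu}\|_2$, the intermediate value theorem together with strict monotonicity gives a unique solution of \cref{eq:robust-shift-equation}.

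\textbf{Lower bound via a fixed shift.}
Set $\bm{\delta}:=\tau\bm{h}_\tau$. Then $\|\bm{\delta}\|_2=\epsilon\sqrt d$, and
\begin{equation}
  \bm{\mu}-\bm{\delta}=\big(\bm{I}_d-\tau(\bm{\Sigma}+\tau\bm{I}_d)^{-1}\big)\bm{\mu}=\bm{\Sigma}\bm{h}_\tau .
\end{equation}
For any measurable $\varphi$, the perturbation $\bm{\Delta}=-y\bm{\delta}$ is admissible, so
\begin{equation}
  \inf_{\|\bm{\Delta}\|_2\le\epsilon\sqrt d}y\varphi(\bm{x}+\bm{\Delta})\le y\varphi(\bm{x}-y\bm{\delta}).
\end{equation}
Hence the robust error is at least $\mathbb{P}(y\varphi(\bm{x}')\le0)$, where $\bm{x}':=\bm{x}-y\bm{\delta}$ satisfies $\bm{x}'\mid y\sim\mathcal{N}(y\bm{\Sigma}\bm{h}_\tau,\bm{\Sigma})$. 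The mean $\bm{\Sigma}\bm{h}_\tau$ is nonzero because $\|\bm{\delta}\|_2<\|\bm{\mu}\|_2$. Applying \cref{lem:symmetric-gaussian-bayes-classifier} with mean $\bm{\Sigma}\bm{h}_\tau$ gives the bound
\begin{equation}
  \Phi\Big(-\sqrt{\bm{h}_\tau^{\top}\bm{\Sigma}\bm{\Sigma}^{-1}\bm{\Sigma}\bm{h}_\tau}\Big)=\Phi\Big(-\sqrt{\bm{h}_\tau^{\top}\bm{\Sigma}\bm{h}_\tau}\Big).
\end{equation}
The only subtle point is checking that the pointwise inequality on the infimum transfers to the probabilities. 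This is immediate by monotonicity of events, since $\{y\varphi(\bm{x}')\le 0\}$ is contained in the robust-error event.

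\textbf{Linear case and the equality characterization.}
For $\bm{h}=\bm{0}_d$ the robust error equals $1$, which exceeds the bound because $\Phi(-\cdot)<1/2$. For $\bm{h}\neq\bm{0}_d$, \cref{lem:symmetric-gaussian-linear-robust-error} shows that the error is $\Phi(-J(\bm{h}))$ with
\begin{equation}
  J(\bm{h}):=\frac{\bm{\mu}^{\top}\bm{h}-\epsilon\sqrt d\|\bm{h}\|_2}{\sqrt{\bm{h}^{\top}\bm{\Sigma}\bm{h}}}.
\end{equation}
Using $\epsilon\sqrt d\|\bm{h}\|_2\ge\bm{\delta}^{\top}\bm{h}$ (Cauchy--Schwarz) and then Cauchy--Schwarz in the $\bm{\Sigma}$-inner product,
\begin{equation}
  \bm{\mu}^{\top}\bm{h}-\epsilon\sqrt d\|\bm{h}\|_2\le(\bm{\mu}-\bm{\delta})^{\top}\bm{h}=(\bm{\Sigma}\bm{h}_\tau)^{\top}\bm{h}\le\sqrt{\bm{h}_\tau^{\top}\bm{\Sigma}\bm{h}_\tau}\sqrt{\bm{h}^{\top}\bm{\Sigma}\bm{h}}.
\end{equation}
Thus $J(\bm{h})\le\sqrt{\bm{h}_\tau^{\top}\bm{\Sigma}\bm{h}_\tau}$, consistent with the general bound. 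Since $\Phi$ is strictly increasing, the robust error of $\bm{h}$ attains the bound if and only if $J(\bm{h})$ attains this maximum. That requires equality in the second Cauchy--Schwarz inequality with a positive numerator, which forces $\bm{h}=\omega\bm{h}_\tau$ with $\omega>0$.

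Conversely, for $\bm{h}=\omega\bm{h}_\tau$ with $\omega>0$, both inequalities are tight. The first is tight because $\bm{h}$ is a nonnegative multiple of $\bm{\delta}$ (trivially so when $\tau=0$, since then $\epsilon=0$ and $\bm{\delta}=\bm{0}_d$). Directly, $\bm{\mu}^{\top}\bm{h}_\tau=\bm{h}_\tau^{\top}\bm{\Sigma}\bm{h}_\tau+\tau\|\bm{h}_\tau\|_2^2$ and $\epsilon\sqrt d\|\bm{h}_\tau\|_2=\tau\|\bm{h}_\tau\|_2^2$, so $J(\omega\bm{h}_\tau)=\sqrt{\bm{h}_\tau^{\top}\bm{\Sigma}\bm{h}_\tau}$. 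Keeping track of the strict-equality conditions, including the degenerate case $\tau=0$, is the main place where care is needed.
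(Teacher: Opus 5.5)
Your proposal is correct and follows the paper's proof essentially step for step: the same monotonicity and intermediate-value argument for $\tau$, the same label-dependent shift $-y\tau\bm{h}_{\tau}$ that reduces the lower bound to \cref{lem:symmetric-gaussian-bayes-classifier}, and the same two-step Cauchy--Schwarz chain for the linear case. The one small difference is that you obtain uniqueness from equality in the $\bm{\Sigma}$-Cauchy--Schwarz step alone for every $\tau$, whereas the paper splits into the cases $\tau>0$ and $\tau=0$; your version is marginally cleaner.
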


\begin{proof}
  \textbf{Existence and uniqueness of $\tau$.}
  Let $\lambda_{1},\ldots,\lambda_{d}>0$ be the eigenvalues of $\bm{\Sigma}$, let $\bm{u}_{1},\ldots,\bm{u}_{d}$ be corresponding orthonormal eigenvectors, and write $\bm{\mu}=\sum_{i=1}^{d}m_{i}\bm{u}_{i}$.
  For $\tau\geq0$, the spectral theorem gives
  \begin{equation}
    q(\tau):=\tau^{2}\|{(\bm{\Sigma}+\tau\bm{I}_{d})}^{-1}\bm{\mu}\|_{2}^{2}=\sum_{i=1}^{d}m_{i}^{2}\ab(\frac{\tau}{\lambda_{i}+\tau})^{2}.
  \end{equation}
  Since both sides of the equation defining $\tau$ are nonnegative, that equation is equivalent to $q(\tau)=\epsilon^{2}d$.
  Each map $\tau\mapsto\tau/(\lambda_{i}+\tau)$ is continuous and strictly increasing from zero to one on $[0,\infty)$.
  Because $\bm{\mu}\neq\bm{0}_{d}$, at least one $m_{i}$ is nonzero, and hence $q$ is continuous and strictly increasing with
  \begin{equation}
    q(0)=0, \qquad \lim_{\tau\to\infty}q(\tau)=\sum_{i=1}^{d}m_{i}^{2}=\|\bm{\mu}\|_{2}^{2}.
  \end{equation}
  If $\epsilon=0$, the equation $q(\tau)=\epsilon^{2}d$ has the unique solution $\tau=0$.
  If $\epsilon>0$, the inequality $0<\epsilon^{2}d<\|\bm{\mu}\|_{2}^{2}$ and the intermediate value theorem give a unique solution $\tau>0$.

  \textbf{Lower bound.}
  For the resulting $\bm{h}_{\tau}$, define the label-dependent perturbation $\bm{\Delta}_{y}:=-y\tau\bm{h}_{\tau}$.
  This perturbation is admissible because
  \begin{equation}
    \|\bm{\Delta}_{y}\|_{2}=\tau\|\bm{h}_{\tau}\|_{2}=\epsilon\sqrt{d}.
  \end{equation}
  Therefore, for every measurable function $\varphi$,
  \begin{equation}
    \mathbb{P}\ab(\inf_{\|\bm{\Delta}\|_{2}\leq\epsilon\sqrt{d}}y\varphi(\bm{x}+\bm{\Delta})\leq0)\geq\mathbb{P}(y\varphi(\bm{x}+\bm{\Delta}_{y})\leq0).
  \end{equation}
  The definition of $\bm{h}_{\tau}$ implies
  \begin{equation}
    \bm{\mu}-\tau\bm{h}_{\tau}=\bm{\Sigma}\bm{h}_{\tau}.
  \end{equation}
  Consequently, the perturbed input has the class-conditional distribution
  \begin{equation}
    \bm{x}+\bm{\Delta}_{y}\mid y\sim\mathcal{N}(y\bm{\Sigma}\bm{h}_{\tau},\bm{\Sigma}).
  \end{equation}
  Since $\bm{\Sigma}\bm{h}_{\tau}\neq\bm{0}_{d}$, applying \cref{lem:symmetric-gaussian-bayes-classifier} to this symmetric Gaussian mixture gives
  \begin{equation}
    \mathbb{P}(y\varphi(\bm{x}+\bm{\Delta}_{y})\leq0)\geq\Phi\ab(-\sqrt{{(\bm{\Sigma}\bm{h}_{\tau})}^{\top}\bm{\Sigma}^{-1}(\bm{\Sigma}\bm{h}_{\tau})})=\Phi\ab(-\sqrt{\bm{h}_{\tau}^{\top}\bm{\Sigma}\bm{h}_{\tau}}).
  \end{equation}

  \textbf{Attainment of the lower bound.}
  Applying \cref{eq:symmetric-gaussian-linear-robust-error} with $\bm{h}=\bm{h}_{\tau}$ to the linear function $\varphi^{\star}(\bm{x})=\bm{h}_{\tau}^{\top}\bm{x}$ gives
  \begin{equation}
    \mathbb{P}\ab(\inf_{\|\bm{\Delta}\|_{2}\leq\epsilon\sqrt{d}}y\varphi^{\star}(\bm{x}+\bm{\Delta})\leq0)=\Phi\ab(-\frac{\bm{\mu}^{\top}\bm{h}_{\tau}-\epsilon\sqrt{d}\|\bm{h}_{\tau}\|_{2}}{\sqrt{\bm{h}_{\tau}^{\top}\bm{\Sigma}\bm{h}_{\tau}}}).
  \end{equation}
  The definitions of $\tau$ and $\bm{h}_{\tau}$ give
  \begin{equation}
    \bm{\mu}^{\top}\bm{h}_{\tau}-\epsilon\sqrt{d}\|\bm{h}_{\tau}\|_{2}=\bm{h}_{\tau}^{\top}(\bm{\Sigma}+\tau\bm{I}_{d})\bm{h}_{\tau}-\tau\|\bm{h}_{\tau}\|_{2}^{2}=\bm{h}_{\tau}^{\top}\bm{\Sigma}\bm{h}_{\tau}.
  \end{equation}
  Substitution into the preceding robust-error formula yields
  \begin{equation}
    \mathbb{P}\ab(\inf_{\|\bm{\Delta}\|_{2}\leq\epsilon\sqrt{d}}y\varphi^{\star}(\bm{x}+\bm{\Delta})\leq0)=\Phi\ab(-\frac{\bm{h}_{\tau}^{\top}\bm{\Sigma}\bm{h}_{\tau}}{\sqrt{\bm{h}_{\tau}^{\top}\bm{\Sigma}\bm{h}_{\tau}}})=\Phi\ab(-\sqrt{\bm{h}_{\tau}^{\top}\bm{\Sigma}\bm{h}_{\tau}}).
  \end{equation}
  Thus, $\varphi^{\star}$ attains the lower bound.

  \textbf{Uniqueness among linear functions.}
  Positive scale invariance in \cref{eq:symmetric-gaussian-linear-robust-error} shows that every positive multiple of $\bm{h}_{\tau}$ attains the same lower bound.
  Fix $\bm{v}\in\mathbb{R}^{d}\setminus\{\bm{0}_{d}\}$.
  The identities $\bm{\mu}=(\bm{\Sigma}+\tau\bm{I}_{d})\bm{h}_{\tau}$ and $\epsilon\sqrt{d}=\tau\|\bm{h}_{\tau}\|_{2}$, followed by the Euclidean Cauchy--Schwarz inequality and the Cauchy--Schwarz inequality for the inner product induced by $\bm{\Sigma}$, give
  \begin{align}
    \bm{\mu}^{\top}\bm{v}-\epsilon\sqrt{d}\|\bm{v}\|_{2}
    &=\bm{h}_{\tau}^{\top}\bm{\Sigma}\bm{v}+\tau(\bm{h}_{\tau}^{\top}\bm{v}-\|\bm{h}_{\tau}\|_{2}\|\bm{v}\|_{2}) \\
    &\leq\bm{h}_{\tau}^{\top}\bm{\Sigma}\bm{v} \\
    &\leq\sqrt{\bm{h}_{\tau}^{\top}\bm{\Sigma}\bm{h}_{\tau}}\sqrt{\bm{v}^{\top}\bm{\Sigma}\bm{v}}.
  \end{align}
  The denominator $\sqrt{\bm{v}^{\top}\bm{\Sigma}\bm{v}}$ is strictly positive because $\bm{\Sigma}\succ\bm{0}_{d,d}$.
  Dividing by this denominator and using the strict monotonicity of $\Phi$ in \cref{eq:symmetric-gaussian-linear-robust-error} show that the robust error of $g_{\bm{v}}$ is at least the lower bound.
  The robust error equals the lower bound only if both inequalities in the preceding display hold with equality.
  If $\tau>0$, equality in the first inequality requires $\bm{h}_{\tau}^{\top}\bm{v}=\|\bm{h}_{\tau}\|_{2}\|\bm{v}\|_{2}$, which holds exactly when $\bm{v}=\omega\bm{h}_{\tau}$ for some $\omega>0$.
  If $\tau=0$, the first inequality is automatically an equality, while equality in the $\bm{\Sigma}$-inner-product Cauchy--Schwarz inequality again holds exactly when $\bm{v}=\omega\bm{h}_{\tau}$ for some $\omega>0$.
  Conversely, every such positive multiple attains equality, as observed above.
  Finally, $g_{\bm{0}_{d}}$ is identically zero and therefore has robust error one, whereas the lower bound is strictly less than one half because $\bm{h}_{\tau}^{\top}\bm{\Sigma}\bm{h}_{\tau}>0$.
\end{proof}

\begin{lemma}[Taskwise robust Bayes characterization]\label{lem:robust-bayes-characterization}
  Suppose that \cref{asm:mean-covariance,asm:perturbation-radius} in \cref{asm:summary} hold.
  For every $\bm{U}\in\mathcal{O}(d)$ and $\bm{h}\in\mathbb{R}^{d}$, define the linear function $g_{\bm{U},\bm{h}}:\mathbb{R}^{d}\to\mathbb{R}$ by
  \begin{equation*}
    g_{\bm{U},\bm{h}}(\bm{x}):={\bm{h}}^{\top}\bm{U}^{\top}\bm{x}.
  \end{equation*}
  \begin{enumerate}[label = {(\alph*)}, leftmargin = *]
    \item
      For every $\bm{U}\in\mathcal{O}(d)$ and $\bm{h}\in\mathbb{R}^{d}\setminus\{\bm{0}_{d}\}$,
      \begin{equation}
        \mathcal{E}_{\epsilon,\bm{U}}(g_{\bm{U},\bm{h}})=\Phi\ab(-\frac{\bm{\mu}^{\top}\bm{h}-\epsilon\sqrt{d}\|\bm{h}\|_{2}}{\sqrt{\bm{h}^{\top}\bm{\Sigma}\bm{h}}}),
        \label{eq:linear-robust-error}
      \end{equation}

    \item
      For every $\bm{U}\in\mathcal{O}(d)$,
      \begin{equation}
        \mathcal{E}_{\epsilon,\bm{U}}(g_{\bm{U},\bm{h}_{\tau}})=\Phi\ab(-\sqrt{\bm{h}_{\tau}^{\top}\bm{\Sigma}\bm{h}_{\tau}})=\mathcal{E}_{\epsilon,\bm{U}}^{\star}.
        \label{eq:taskwise-robust-bayes-error}
      \end{equation}
      Among linear functions, the robust Bayes error is attained exactly by the functions $\bm{x}\mapsto\omega\bm{h}_{\tau}^{\top}\bm{U}^{\top}\bm{x}$ with $\omega>0$.
  \end{enumerate}
\end{lemma}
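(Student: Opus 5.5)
The plan is to reduce everything to the identity-basis case via an orthogonal change of variables, and then invoke \cref{lem:symmetric-gaussian-linear-robust-error,lem:symmetric-gaussian-robust-bayes-classifier}. Fix $\bm{U}\in\mathcal{O}(d)$ and set $\bm{x}' := \bm{U}^{\top}\bm{x} = y\bm{a}$. Under the task-conditional law, $y$ is uniform on $\{\pm1\}$ and $\bm{x}'\mid y\sim\mathcal{N}(y\bm{\mu},\bm{\Sigma})$. This is exactly the identity-basis model of the preceding lemmas, and \cref{asm:mean-covariance,asm:perturbation-radius} supply their hypotheses.

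The key observation is that the perturbation ball is invariant under $\bm{U}$. The map $\bm{\Delta}\mapsto\bm{\Delta}':=\bm{U}^{\top}\bm{\Delta}$ is a bijection of $\{\|\bm{\Delta}\|_{2}\leq\epsilon\sqrt{d}\}$ onto itself, because $\bm{U}$ preserves the Euclidean norm. Hence, for any measurable $g$, define $\tilde g(\bm{x}') := g(\bm{U}\bm{x}')$. Then
\begin{equation*}
  \inf_{\|\bm{\Delta}\|_{2}\leq\epsilon\sqrt{d}} y g(\bm{x}+\bm{\Delta})
  = \inf_{\|\bm{\Delta}'\|_{2}\leq\epsilon\sqrt{d}} y \tilde g(\bm{x}'+\bm{\Delta}'),
\end{equation*}
so $\mathcal{E}_{\epsilon,\bm{U}}(g)$ equals the identity-basis robust error of $\tilde g$. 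The correspondence $g\leftrightarrow\tilde g$ is a bijection of $\mathcal{G}$, since its inverse is $g(\bm{x})=\tilde g(\bm{U}^{\top}\bm{x})$. Consequently, $\mathcal{E}^{\star}_{\epsilon,\bm{U}}$ equals the identity-basis robust Bayes error.

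For part (a), take $g=g_{\bm{U},\bm{h}}$. Then $\tilde g(\bm{x}')=\bm{h}^{\top}\bm{x}'$, and \cref{lem:symmetric-gaussian-linear-robust-error} gives \cref{eq:linear-robust-error} directly.

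For part (b), the lower bound in \cref{lem:symmetric-gaussian-robust-bayes-classifier} holds for every measurable $\tilde g$. Transferring it gives $\mathcal{E}_{\epsilon,\bm{U}}(g)\geq\Phi(-\sqrt{\bm{h}_{\tau}^{\top}\bm{\Sigma}\bm{h}_{\tau}})$ for all $g\in\mathcal{G}$. Part (a) with $\bm{h}=\bm{h}_{\tau}$, together with the identity $\bm{\mu}^{\top}\bm{h}_{\tau}-\epsilon\sqrt{d}\|\bm{h}_{\tau}\|_{2}=\bm{h}_{\tau}^{\top}\bm{\Sigma}\bm{h}_{\tau}$ established in that lemma, shows the bound is attained by $g_{\bm{U},\bm{h}_{\tau}}$. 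This yields both equalities in \cref{eq:taskwise-robust-bayes-error}.

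For the uniqueness among linear functions, note that every linear $g(\bm{x})=\bm{w}^{\top}\bm{x}$ can be written as $g_{\bm{U},\bm{h}}$ with $\bm{h}:=\bm{U}^{\top}\bm{w}$. Its transferred version $\tilde g$ is the linear map $\bm{h}^{\top}\bm{x}'$. The linear-uniqueness clause of \cref{lem:symmetric-gaussian-robust-bayes-classifier} then forces $\bm{h}=\omega\bm{h}_{\tau}$ with $\omega>0$, which includes excluding $\bm{h}=\bm{0}_{d}$, whose robust error is one. Equivalently, $g(\bm{x})=\omega\bm{h}_{\tau}^{\top}\bm{U}^{\top}\bm{x}$.

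There is no real obstacle. The only care needed is the measure-theoretic bookkeeping: $\tilde g$ must remain measurable, and the infimum over the ball must transform correctly. Both follow because $\bm{U}$ is a linear isometry with inverse $\bm{U}^{\top}$.
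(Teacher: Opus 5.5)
Your proposal is correct and follows essentially the same route as the paper. Both use the orthogonal change of variables $\bm{z}=\bm{U}^{\top}\bm{x}$, $\widetilde{\bm{\Delta}}=\bm{U}^{\top}\bm{\Delta}$ with the induced bijection $\varphi\mapsto\varphi(\bm{U}\,\cdot\,)$ on measurable functions, and then invoke the identity-basis lemmas for part (a), for the lower bound and its attainment, and for uniqueness among linear functions.
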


\begin{proof}
  Define $\bm{z}:=\bm{U}^{\top}\bm{x}$ and $\widetilde{\bm{\Delta}}:=\bm{U}^{\top}\bm{\Delta}$.
  Conditional on $\bm{U}$, the transformed input satisfies
  \begin{equation}
    \bm{z}\mid y\sim\mathcal{N}(y\bm{\mu},\bm{\Sigma}).
  \end{equation}
  Orthogonality of $\bm{U}$ gives
  \begin{equation}
    \|\widetilde{\bm{\Delta}}\|_{2}=\|\bm{\Delta}\|_{2}, \qquad g_{\bm{U},\bm{h}}(\bm{x}+\bm{\Delta})=g_{\bm{h}}(\bm{z}+\widetilde{\bm{\Delta}}).
  \end{equation}
  The map $\bm{\Delta}\mapsto\widetilde{\bm{\Delta}}$ is a bijection of the perturbation ball onto itself.

  \textbf{Part (a).}
  Applying \cref{eq:symmetric-gaussian-linear-robust-error} to $g_{\bm{h}}$ in the transformed coordinates proves \cref{eq:linear-robust-error}.

  \textbf{Part (b).}
  For every measurable function $\varphi:\mathbb{R}^{d}\to\mathbb{R}$, define $\widetilde{\varphi}:\mathbb{R}^{d}\to\mathbb{R}$ by $\widetilde{\varphi}(\bm{z}):=\varphi(\bm{U}\bm{z})$.
  The map $\varphi\mapsto\widetilde{\varphi}$ is a bijection on the class of measurable functions, and the preceding change of coordinates gives
  \begin{equation}
    \mathcal{E}_{\epsilon,\bm{U}}(\varphi)=\mathbb{P}\ab(\inf_{\|\widetilde{\bm{\Delta}}\|_{2}\leq\epsilon\sqrt{d}}y\widetilde{\varphi}(\bm{z}+\widetilde{\bm{\Delta}})\leq0).
  \end{equation}
  Applying \cref{lem:symmetric-gaussian-robust-bayes-classifier} in the transformed coordinates proves \cref{eq:taskwise-robust-bayes-error}.
  Its uniqueness conclusion for linear functions proves the final statement of part (b).
\end{proof}

\subsection{Query-Linear Representation}\label{sec:appendix-query-linear-foundations}\label{sec:appendix-common-foundations}

Define the population second moment by
\begin{equation}
  \bm{S}:=\bm{\Sigma}+\bm{\mu}\bm{\mu}^{\top}.
  \label{eq:population-second-moment}
\end{equation}
The latent finite-context matrix and its population limit are
\begin{equation}
  \bm{G}_{a,N}:=
  \begin{bmatrix}N^{-1}\sum_{n=1}^{N}\bm{a}_{n}\bm{a}_{n}^{\top} & N^{-1}\sum_{n=1}^{N}\bm{a}_{n} \\ N^{-1}\sum_{n=1}^{N}\bm{a}_{n}^{\top} & 1
  \end{bmatrix}, \qquad \bm{G}_{\star}:=
  \begin{bmatrix}\bm{S} & \bm{\mu} \\ \bm{\mu}^{\top} & 1
  \end{bmatrix}.
  \label{eq:latent-finite-context-matrix}
\end{equation}
The strong law gives $\bm{G}_{a,N}\to\bm{G}_{\star}$ almost surely.
The corresponding ambient context matrix is
\begin{equation}
  \bm{G}_{N}(\bm{U}):=\frac{1}{N}\bm{Z}^{(0)}\bm{M}{\bm{Z}^{(0)}}^{\top}=\operatorname{diag}(\bm{U},1)\bm{G}_{a,N}\operatorname{diag}(\bm{U}^{\top},1).
  \label{eq:context-blocks}
\end{equation}

For a context matrix $\bm{G}$, define $\bm{A}_{\bm{\Theta}_{L}}^{(0)}(\bm{G}):=\bm{I}_{d+1}$ and, for $\ell\in[L]$, recursively define
\begin{equation}
  \bm{A}_{\bm{\Theta}_{L}}^{(\ell)}(\bm{G}):=\{\bm{I}_{d+1}+\bm{P}_{\ell}\bm{A}_{\bm{\Theta}_{L}}^{(\ell-1)}(\bm{G})\bm{G}{\bm{A}_{\bm{\Theta}_{L}}^{(\ell-1)}(\bm{G})}^{\top}\bm{Q}_{\ell}\}\bm{A}_{\bm{\Theta}_{L}}^{(\ell-1)}(\bm{G}).
  \label{eq:fixed-depth-context-recursion}
\end{equation}
For any context matrix $\bm{G}$, define the depth-$L$ effective direction by
\begin{equation}
  \bm{\theta}_{\bm{\Theta}_{L}}(\bm{G}):=
  \begin{bmatrix}\bm{I}_{d} & \bm{0}_{d}
  \end{bmatrix}{\bm{A}_{\bm{\Theta}_{L}}^{(L)}(\bm{G})}^{\top}\bm{e}_{d+1}.
  \label{eq:context-fixed-depth-effective-direction}
\end{equation}
Here, $\bm{e}_{d+1}$ is the last standard basis vector of $\mathbb{R}^{d+1}$.
The mask in \cref{eq:multilayer-linear-transformer} removes the query column from every context matrix.
Induction over the layers therefore shows that every depth-$L$ transformer is linear in its query:
\begin{equation}
  f(\bm{x}+\bm{\Delta}; \bm{\Theta}_{L}, \mathcal{D}_{N})={\bm{\theta}_{\bm{\Theta}_{L}}(\bm{G}_{N}(\bm{U}))}^{\top}(\bm{x}+\bm{\Delta}).
  \label{eq:finite-context-query-linear-representation}
\end{equation}

\subsection{Krylov structure and sufficient-depth realization}\label{sec:appendix-active-spectrum-construction}

Define the feature and label projectors by
\begin{equation}
  \bm{\Pi}_{x}:=
  \begin{bmatrix}\bm{I}_{d} & \bm{0}_{d} \\ \bm{0}_{d}^{\top} & 0
  \end{bmatrix}, \qquad \bm{\Pi}_{y}:=
  \begin{bmatrix}\bm{0}_{d,d} & \bm{0}_{d} \\ \bm{0}_{d}^{\top} & 1
  \end{bmatrix}.
\end{equation}

We first relate the active spectra of $\bm{\Sigma}$ and $\bm{S}$.

\begin{lemma}[Active spectrum under the rank-one update]\label{lem:active-spectrum-count}
  The matrix $\bm{S}$ has exactly $r$ distinct active eigenvalues, where an eigenvalue $s$ is active when its orthogonal spectral projector $\bm{F}_{s}$ satisfies $\bm{F}_{s}\bm{\mu}\neq\bm{0}_{d}$ and $r$ is defined in \cref{eq:krylov-dim}.
\end{lemma}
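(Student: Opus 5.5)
The plan is to show that $\mathcal{K}_{d}(\bm{S},\bm{\mu})=\mathcal{K}_{d}(\bm{\Sigma},\bm{\mu})$, and then to use the spectral characterization of Krylov dimension. That characterization says that, for a symmetric matrix $\bm{A}$, $\dim\mathcal{K}_{d}(\bm{A},\bm{\mu})$ equals the number of distinct eigenvalues $s$ of $\bm{A}$ with $\bm{F}_{s}\bm{\mu}\neq\bm{0}_{d}$. The paper asserts this fact for $\bm{\Sigma}$ right after \cref{eq:krylov-dim}; I will reprove it in general so that it also applies to $\bm{S}$.

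\textbf{Step 1 (invariance of the Krylov space).} I show by induction on $k$ that $\mathcal{K}_{k}(\bm{S},\bm{\mu})=\mathcal{K}_{k}(\bm{\Sigma},\bm{\mu})$ for every $k\ge1$. The case $k=1$ is trivial. For the inductive step, take $\bm{v}\in\mathcal{K}_{k}(\bm{\Sigma},\bm{\mu})$. Then
\[
\bm{S}\bm{v}=\bm{\Sigma}\bm{v}+(\bm{\mu}^{\top}\bm{v})\bm{\mu}\in\mathcal{K}_{k+1}(\bm{\Sigma},\bm{\mu}),
\]
so $\bm{S}\,\mathcal{K}_{k}\subseteq\mathcal{K}_{k+1}(\bm{\Sigma},\bm{\mu})$. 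Together with the induction hypothesis this gives $\mathcal{K}_{k+1}(\bm{S},\bm{\mu})\subseteq\mathcal{K}_{k+1}(\bm{\Sigma},\bm{\mu})$. The reverse inclusion is symmetric, since $\bm{\Sigma}\bm{v}=\bm{S}\bm{v}-(\bm{\mu}^{\top}\bm{v})\bm{\mu}$. It follows that $\dim\mathcal{K}_{d}(\bm{S},\bm{\mu})=r$.

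\textbf{Step 2 (Krylov dimension equals the number of active eigenvalues).} Let $\bm{A}$ be symmetric with spectral decomposition $\bm{A}=\sum_{s}s\bm{F}_{s}$, and let $s_{1},\dots,s_{m}$ be its distinct active eigenvalues. Set $\bm{w}_{j}:=\bm{F}_{s_j}\bm{\mu}\neq\bm{0}$. Then $\bm{\mu}=\sum_{j}\bm{w}_{j}$ and $\bm{A}^{k}\bm{\mu}=\sum_{j}s_{j}^{k}\bm{w}_{j}$.
\begin{itemize}
\item The vectors $\bm{w}_{j}$ are nonzero and mutually orthogonal, so every $\bm{A}^{k}\bm{\mu}$ lies in $\operatorname{span}\{\bm{w}_{1},\dots,\bm{w}_{m}\}$, a space of dimension $m$.
\item In the basis $\{\bm{w}_{j}\}$, the coordinates of $\bm{\mu},\dots,\bm{A}^{m-1}\bm{\mu}$ form the Vandermonde matrix $(s_{j}^{k})$. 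This matrix is invertible because the $s_{j}$ are distinct, so the Krylov space equals that span.
\end{itemize}
Hence $\dim\mathcal{K}_{d}(\bm{A},\bm{\mu})=m$. Since $m\le d$, the order-$d$ space already captures everything.

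Applying Step 2 to $\bm{A}=\bm{S}$ and combining with Step 1 shows that $\bm{S}$ has exactly $r$ distinct active eigenvalues. The main point needing care is that activeness is defined through the spectral projectors of $\bm{S}$, not those of $\bm{\Sigma}$, which is why Step 2 is proved for a general symmetric matrix. I also need $m\le d$ so that $\mathcal{K}_{d}$ coincides with the full Krylov space; this holds because distinct eigenvalues number at most $d$. I expect no genuine obstacle beyond this bookkeeping.
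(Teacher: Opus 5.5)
Your proposal is correct and follows essentially the same route as the paper, which also proves $\operatorname{span}\{\bm{\mu},\bm{S}\bm{\mu},\ldots\}=\operatorname{span}\{\bm{\mu},\bm{\Sigma}\bm{\mu},\ldots\}$ by induction in both directions from $\bm{S}\bm{v}=\bm{\Sigma}\bm{v}+\bm{\mu}\bm{\mu}^{\top}\bm{v}$, and then applies the spectral (Vandermonde) characterization of Krylov dimension to $\bm{S}$. The differences are minor: you prove that characterization for a general symmetric matrix where the paper only asserts it, and you skip the paper's redundant step of identifying $r$ with the active eigenvalues of $\bm{\Sigma}$, which is unnecessary because $r=\dim\mathcal{K}_{d}(\bm{\Sigma},\bm{\mu})$ by definition.
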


\begin{proof}
  Since $\bm{S}\bm{v}=\bm{\Sigma}\bm{v}+\bm{\mu}\bm{\mu}^{\top}\bm{v}$, induction in both directions gives
  \begin{equation}
    \operatorname{span}\{\bm{\mu},\bm{S}\bm{\mu},\bm{S}^{2}\bm{\mu},\ldots\}=\operatorname{span}\{\bm{\mu},\bm{\Sigma}\bm{\mu},\bm{\Sigma}^{2}\bm{\mu},\ldots\}.
  \end{equation}
  The spectral decomposition of $\bm{\Sigma}$ and the Vandermonde determinant show that the right-hand side has dimension $r$, the number of distinct covariance eigenvalues with nonzero projection of $\bm{\mu}$.
  For any symmetric matrix, the dimension of the Krylov space generated by a vector equals the number of distinct eigenvalues whose spectral projectors do not annihilate that vector.
  Applying this fact to $\bm{S}$ proves the claim.
\end{proof}

We characterize the population output induced by block-selective parameters.
We then choose their scalar coefficients to interpolate the desired response on the active spectrum.

\begin{lemma}[Population response of block-selective processing layers]\label{lem:block-selective-population-output}
  Fix an integer $m\geq0$.
  For $1\leq\ell\leq m$, let $\kappa_{\ell},\eta_{\ell}\in\mathbb{R}$ and define
  \begin{equation}
    \bm{P}_{\ell}:=\kappa_{\ell}\bm{\Pi}_{x}+\eta_{\ell}\bm{\Pi}_{y}, \qquad \bm{Q}_{\ell}:=\bm{\Pi}_{x}.
    \label{eq:active-spectrum-processing-layer}
  \end{equation}
  Set $\bm{\Theta}_{m}:=\{(\bm{P}_{\ell},\bm{Q}_{\ell})\}_{\ell=1}^{m}$.
  Set $a_{0}(s):=1$ and $b_{0}(s):=0$, and, for $1\leq\ell\leq m$, recursively define
  \begin{equation}
    a_{\ell}(s):=a_{\ell-1}(s)\{1+\kappa_{\ell}s{a_{\ell-1}(s)}^{2}\}, \qquad b_{\ell}(s):=b_{\ell-1}(s)+\eta_{\ell}{a_{\ell-1}(s)}^{2}\{1+s b_{\ell-1}(s)\}.
    \label{eq:active-spectrum-response-recursion}
  \end{equation}
  Then,
  \begin{equation}
    \bm{A}_{\bm{\Theta}_{m}}^{(m)}(\bm{G}_{\star})=
    \begin{bmatrix}a_{m}(\bm{S}) & \bm{0}_{d} \\ \bm{\mu}^{\top}b_{m}(\bm{S}) & 1
    \end{bmatrix}.
    \label{eq:active-spectrum-cumulative-transformation}
  \end{equation}
  Consequently,
  \begin{equation}
    \bm{\theta}_{\bm{\Theta}_{m}}(\bm{G}_{\star})=b_{m}(\bm{S})\bm{\mu}.
  \end{equation}
\end{lemma}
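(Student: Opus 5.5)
We prove \cref{eq:active-spectrum-cumulative-transformation} by induction on $\ell\in\{0,\ldots,m\}$, with the block-triangular form
\begin{equation*}
  \bm{A}^{(\ell)}:=\bm{A}_{\bm{\Theta}_{m}}^{(\ell)}(\bm{G}_{\star})=
  \begin{bmatrix}a_{\ell}(\bm{S}) & \bm{0}_{d} \\ \bm{\mu}^{\top}b_{\ell}(\bm{S}) & 1\end{bmatrix}
\end{equation*}
as the induction hypothesis. The case $\ell=0$ is immediate from $a_{0}=1$ and $b_{0}=0$. Throughout we use that $a_{\ell}(\bm{S})$ and $b_{\ell}(\bm{S})$ are polynomials in $\bm{S}$. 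Hence they are symmetric and commute with $\bm{S}$ and with each other.

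\textbf{Inductive step.} Assuming the form at level $\ell-1$, write $a:=a_{\ell-1}(\bm{S})$ and $b:=b_{\ell-1}(\bm{S})$. We first compute $\bm{A}^{(\ell-1)}\bm{G}_{\star}$. Its top row is $[a\bm{S},\ a\bm{\mu}]$. Its bottom row is $[\bm{\mu}^{\top}b\bm{S}+\bm{\mu}^{\top},\ \bm{\mu}^{\top}b\bm{\mu}+1]$.

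Next we right-multiply by ${\bm{A}^{(\ell-1)}}^{\top}$ and then by $\bm{Q}_{\ell}=\bm{\Pi}_{x}$. The factor $\bm{\Pi}_{x}$ kills the last column, so only the first $d$ columns survive. The top-left block becomes $a\bm{S}a=\bm{S}a^{2}$. The bottom-left row becomes $\bm{\mu}^{\top}(b\bm{S}+\bm{I})a=\bm{\mu}^{\top}a(\bm{I}+\bm{S}b)$. The cross term with $\bm{\mu}^{\top}b$ drops out because the last column is annihilated.

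We then left-multiply by $\bm{P}_{\ell}=\kappa_{\ell}\bm{\Pi}_{x}+\eta_{\ell}\bm{\Pi}_{y}$. This scales the top block row by $\kappa_{\ell}$ and the bottom row by $\eta_{\ell}$. Adding $\bm{I}_{d+1}$ gives the update factor
\begin{equation*}
  \begin{bmatrix}\bm{I}_{d}+\kappa_{\ell}\bm{S}a^{2} & \bm{0}_{d} \\ \eta_{\ell}\bm{\mu}^{\top}a(\bm{I}+\bm{S}b) & 1\end{bmatrix}.
\end{equation*}

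\textbf{Multiplying out.} We right-multiply this factor by $\bm{A}^{(\ell-1)}$. The top-left block becomes $(\bm{I}+\kappa_{\ell}\bm{S}a^{2})a=a_{\ell}(\bm{S})$. The top-right block stays zero. The bottom-right entry stays $1$. The bottom-left row becomes
\begin{equation*}
  \eta_{\ell}\bm{\mu}^{\top}a^{2}(\bm{I}+\bm{S}b)+\bm{\mu}^{\top}b=\bm{\mu}^{\top}b_{\ell}(\bm{S}),
\end{equation*}
by \cref{eq:active-spectrum-response-recursion}, which closes the induction.

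\textbf{Consequence.} With \cref{eq:active-spectrum-cumulative-transformation} established, \cref{eq:context-fixed-depth-effective-direction} extracts the first $d$ entries of ${\bm{A}^{(m)}}^{\top}\bm{e}_{d+1}$. This is the transpose of the bottom-left row, namely $b_{m}(\bm{S})^{\top}\bm{\mu}=b_{m}(\bm{S})\bm{\mu}$ by symmetry.

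The only delicate point is the bookkeeping in the product $\bm{A}\bm{G}_{\star}\bm{A}^{\top}\bm{\Pi}_{x}$. There we must check that the $\bm{\mu}^{\top}b$ term in the bottom row of $\bm{A}^{\top}$ contributes only to the discarded last column, and that commutativity lets $a$ be pulled through $(\bm{I}+\bm{S}b)$. Everything else is mechanical.
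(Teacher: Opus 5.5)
Your proof is correct and follows the same route as the paper: induction on the layer index by direct block multiplication. Your key intermediate product $\bm{A}^{(\ell-1)}\bm{G}_{\star}{\bm{A}^{(\ell-1)}}^{\top}\bm{\Pi}_{x}$ has top-left block $\bm{S}a^{2}$, bottom-left row $\bm{\mu}^{\top}(\bm{I}_{d}+b\bm{S})a$, and a vanishing last column, exactly as in the paper. One trivial wording slip in your closing remark: the $b\bm{\mu}$ block sits in the last column of ${\bm{A}^{(\ell-1)}}^{\top}$ (it is the transpose of the $\bm{\mu}^{\top}b$ entry in the bottom row of $\bm{A}^{(\ell-1)}$), not in the bottom row of ${\bm{A}^{(\ell-1)}}^{\top}$; your actual computation handles it correctly.
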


\begin{proof}
  The claim is immediate at $m=0$.
  Suppose that the claimed block form holds after $\ell$ layers, where $0\leq\ell<m$, and denote this cumulative transformation by $\bm{T}_{\ell}$.
  Direct block multiplication gives
  \begin{equation}
    \bm{T}_{\ell}\bm{G}_{\star}\bm{T}_{\ell}^{\top}\bm{\Pi}_{x}=
    \begin{bmatrix}\bm{S}{a_{\ell}(\bm{S})}^{2} & \bm{0}_{d} \\ \bm{\mu}^{\top}\{\bm{I}_{d}+b_{\ell}(\bm{S})\bm{S}\}a_{\ell}(\bm{S}) & 0
    \end{bmatrix}.
  \end{equation}
  Substituting this expression and the $(\ell+1)$-st parameters from \cref{eq:active-spectrum-processing-layer} into \cref{eq:fixed-depth-context-recursion} gives the recursion in \cref{eq:active-spectrum-response-recursion} and proves \cref{eq:active-spectrum-cumulative-transformation} by induction.
  The definition in \cref{eq:context-fixed-depth-effective-direction} gives $\bm{\theta}_{\bm{\Theta}_{m}}(\bm{G}_{\star})=b_{m}(\bm{S})\bm{\mu}$.
\end{proof}

\begin{lemma}[Sufficient-depth population optimal-direction comparator]\label{lem:sufficient-depth-comparator}
  Fix $\tau\geq0$ and $\omega>0$, define $\bm{h}^{\star}:=\omega{(\bm{\Sigma}+\tau\bm{I}_{d})}^{-1}\bm{\mu}$, and suppose that $L\geq r$.
  There is a parameter tuple $\bm{\Theta}_{L,\star}$ such that, for every $\bm{U}\in\mathcal{O}(d)$,
  \begin{equation}
    \bm{U}^{\top}\bm{\theta}_{\bm{\Theta}_{L,\star}}(\operatorname{diag}(\bm{U},1)\bm{G}_{\star}\operatorname{diag}(\bm{U}^{\top},1))=\bm{h}^{\star}.
    \label{eq:sufficient-depth-population-comparator}
  \end{equation}
\end{lemma}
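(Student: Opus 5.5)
The plan is to use the block-selective layers of \cref{lem:block-selective-population-output}, first removing $\bm{U}$ by an equivariance argument and then reducing \cref{eq:sufficient-depth-population-comparator} to a scalar interpolation problem on the active spectrum of $\bm{S}$.

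\textbf{Step 1 (removing $\bm{U}$).} Write $\bm{V}:=\operatorname{diag}(\bm{U},1)$. Every matrix of the form $\kappa\bm{\Pi}_{x}+\eta\bm{\Pi}_{y}$ commutes with $\bm{V}$. Hence induction on \cref{eq:fixed-depth-context-recursion} gives
\begin{equation*}
  \bm{A}^{(\ell)}_{\bm{\Theta}_{L}}(\bm{V}\bm{G}_{\star}\bm{V}^{\top})=\bm{V}\bm{A}^{(\ell)}_{\bm{\Theta}_{L}}(\bm{G}_{\star})\bm{V}^{\top}.
\end{equation*}
Since $\bm{V}^{\top}\bm{e}_{d+1}=\bm{e}_{d+1}$, this yields
\begin{equation*}
  \bm{U}^{\top}\bm{\theta}_{\bm{\Theta}_{L}}(\bm{V}\bm{G}_{\star}\bm{V}^{\top})=\bm{\theta}_{\bm{\Theta}_{L}}(\bm{G}_{\star})=b_{L}(\bm{S})\bm{\mu}.
\end{equation*}
It therefore suffices to find a single parameter tuple with $b_{L}(\bm{S})\bm{\mu}=\bm{h}^{\star}$. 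This requirement does not depend on $\bm{U}$.

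\textbf{Step 2 (reduction to interpolation).} Let $s_{1},\ldots,s_{r}>0$ be the active eigenvalues of $\bm{S}$ given by \cref{lem:active-spectrum-count}, with spectral projectors $\bm{F}_{s_{j}}$. The vector $\bm{h}^{\star}$ lies in $\mathcal{K}_{r}(\bm{\Sigma},\bm{\mu})=\mathcal{K}_{r}(\bm{S},\bm{\mu})$; this is the Cayley--Hamilton/Krylov part of \cref{lem:main-robust-bayes-realization-mechanism}. Hence there are real values $c_{j}$ with $\bm{h}^{\star}=\sum_{j}c_{j}\bm{F}_{s_{j}}\bm{\mu}$. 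Because $b_{L}(\bm{S})\bm{\mu}=\sum_{j}b_{L}(s_{j})\bm{F}_{s_{j}}\bm{\mu}$, the target reduces to the $r$ scalar equations $b_{L}(s_{j})=c_{j}$ for $j\in[r]$.

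\textbf{Step 3 (solving the equations).} The recursion \cref{eq:active-spectrum-response-recursion} factorizes:
\begin{equation*}
  1+s\,b_{\ell}(s)=\{1+s\,b_{\ell-1}(s)\}\{1+\eta_{\ell}s\,a_{\ell-1}(s)^{2}\}.
\end{equation*}
With all $\kappa_{\ell}=0$ we get $a_{\ell}\equiv1$, so $1+s\,b_{L}(s)=\prod_{\ell=1}^{L}(1+\eta_{\ell}s)$. We then need a polynomial $q$ of degree at most $L$ that has only real roots, satisfies $q(0)=1$, and satisfies $q(s_{j})=1+s_{j}c_{j}$ for every $j$. Such a $q$ gives the parameters: each real root $\rho\neq0$ contributes a factor $1+\eta_{\ell}s$ with $\eta_{\ell}=-1/\rho$, and the unused layers get $\eta_{\ell}=0$.

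\textbf{Main obstacle.} The interpolant of degree $r$ is unique, and it may have complex roots. The real-rootedness constraint is therefore the crux.

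First, when $L>r$ I would use the extra degrees of freedom: add $c\prod_{j}(s-s_{j})\cdot s$ times a suitable factor, and choose the free parameters large so that the roots become real. This uses a perturbation or dominance argument over the positive $s_{j}$.

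Second, for the tight case $L=r$ I would activate the $\kappa_{\ell}$. Here $a_{\ell-1}(s)^{2}$ is an arbitrary even-power polynomial shape, and the factors $1+\eta_{\ell}s\,a_{\ell-1}(s)^{2}$ only need to take prescribed values at the finitely many points $s_{j}$. Counting parameters ($2L$ scalars against $r$ equations) suggests that solvability holds generically. I would try to verify it by an inverse-function or triangular argument: at layer $\ell$, pick $\kappa_{\ell-1}$ so that $a_{\ell-1}$ separates the remaining points, and then use $\eta_{\ell}$ to match one more value.

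Third, if neither route closes, I would enlarge the family to $\bm{Q}_{\ell}=\bm{\Pi}_{x}+\zeta_{\ell}\bm{\Pi}_{y}$. The label-row coupling then produces additive rather than multiplicative updates of $b_{\ell}$, which should make the interpolation linear and hence solvable by Vandermonde invertibility at the distinct $s_{j}$.
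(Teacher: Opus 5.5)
\textbf{Where the proposal stops short.} Your Steps 1 and 2 are correct. Your first route can also be completed for $L\geq r+1$: let $q_{0}$ be the degree-$r$ interpolant. For large $|\lambda|$, the polynomial $q_{0}(s)+\lambda s\prod_{j}(s-s_{j})$ has $r+1$ simple real roots near $0,s_{1},\ldots,s_{r}$, and none of them is $0$ because $q_{0}(0)=1$. The gap is the tight case $L=r$. The lemma covers it, and the proposal gives no argument for it.

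\textbf{Why your second and third routes do not close $L=r$.} Your second route is only a parameter count, which does not produce real solutions of a nonlinear system. Moreover, a value matched at $s_{k}$ in one layer is changed by every later layer unless $a(s_{k})=0$, so the triangular scheme does not close as described.

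Your third route rests on a false claim. Take $\bm{P}_{\ell}=\eta_{\ell}\bm{\Pi}_{y}$ and $\bm{Q}_{\ell}=\bm{\Pi}_{x}+\zeta_{\ell}\bm{\Pi}_{y}$, and let $g_{m-1}$ be the bottom-right entry of $\bm{A}^{(m-1)}\bm{G}_{\star}\bm{A}^{(m-1)\top}$. Then the bottom-right entry of $\bm{A}^{(\ell)}$ becomes $d_{\ell}=\prod_{m\leq\ell}\alpha_{m}$ with $\alpha_{m}:=1+\eta_{m}\zeta_{m}g_{m-1}$. The normalized response $\widehat{b}_{\ell}:=b_{\ell}/d_{\ell}$ satisfies $1+s\widehat{b}_{\ell}=(1+\rho_{\ell}s)(1+s\widehat{b}_{\ell-1})$ with $\rho_{\ell}:=\eta_{\ell}/\alpha_{\ell}$.

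So the update law stays multiplicative, and you gain only one global scale $D=d_{L}$. With $p$ the degree-$(r-1)$ interpolant of the $c_{j}$, you still need $D+s\,p(s)$ to be real-rooted for some $D\neq0$. This fails whenever $(s\,p(s))'$ has no real zero; for example, take $L=r=3$, $\tau=1$, and active eigenvalues $1,1.1,1.2$. (Choices with some $\alpha_{m}=0$ only yield real-rooted $b_{L}$, and this $p$ has complex roots too.) If instead $\kappa_{\ell}\zeta_{\ell}\neq0$ in some layer, $\bm{A}^{(\ell)}$ acquires a nonzero top-right block, and the scalar reduction of \cref{lem:block-selective-population-output} is lost.

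\textbf{The missing idea.} The paper handles $L=r$ by using $\kappa$ to \emph{freeze} one active eigenvalue per layer:
\begin{itemize}
\item At stage $\ell$, take the unprocessed $s_{k}$ minimizing $z_{i}:=s_{i}a_{\ell}(s_{i})^{2}$ and set $\kappa_{\ell+1}=-1/z_{k}$, so that $a_{\ell+1}(s_{k})=0$. Every later increment of $a$ and $b$ at $s_{k}$ carries the factor $a_{j}(s_{k})$, so the value $b_{\ell+1}(s_{k})$ chosen through $\eta_{\ell+1}$ is permanent.
\item Choosing the minimum keeps the remaining $z_{i}$ positive and distinct, because $z\mapsto z(1-z/z_{k})^{2}$ is increasing on $(z_{k},\infty)$. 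Hence no unprocessed eigenvalue is frozen by accident.
\item The paper freezes at $c/(s_{k}+\tau)$ with $c>0$ small. By continuity from $c=0$, every denominator $a_{\ell}(s_{k})^{2}\{1+s_{k}b_{\ell}(s_{k})\}$ then stays nonzero.
\item After $r-1$ such layers, the final layer is $\bm{P}_{r}=\bm{\Pi}_{y}$, $\bm{Q}_{r}=\alpha\bm{\Pi}_{x}+\beta\bm{\Pi}_{y}$, which gives $b_{r}=(1+\beta g_{r-1})b_{r-1}+\alpha a_{r-1}^{2}(1+s\,b_{r-1})$. Here $\beta$ rescales all frozen values by the common factor $C/c$, and $\alpha$ fits the last eigenvalue. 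The constant $C=\omega\{1+\bm{\mu}^{\top}(\bm{\Sigma}+\tau\bm{I}_{d})^{-1}\bm{\mu}\}$ comes from Sherman--Morrison, which identifies the target on the spectrum of $\bm{S}$ as $C/(s+\tau)$.
\end{itemize}
So the global scale you found in your third route is useful, but only once, at the end, on top of the freezing layers; this uses exactly $r$ layers.
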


\begin{proof}
  Let $s_{1},\ldots,s_{r}$ be the active eigenvalues of $\bm{S}$, whose number is $r$ by \cref{lem:active-spectrum-count}, and set
  \begin{equation}
    C:=\omega\{1+\bm{\mu}^{\top}{(\bm{\Sigma}+\tau\bm{I}_{d})}^{-1}\bm{\mu}\}.
  \end{equation}
  It is enough to construct the first $r$ layers in latent coordinates at the population context $\bm{G}_{\star}$.
  For the first $r-1$ layers, use the notation and response recursion of \cref{lem:block-selective-population-output} with $m=r-1$.

  These layers process $r-1$ active eigenvalues sequentially.
  At the beginning of stage $\ell$, let $\mathcal{I}_{\ell}$ be the indices not yet processed and define
  \begin{equation}
    z_{i}^{(\ell)}:=s_{i}{a_{\ell}(s_{i})}^{2}, \qquad i\in\mathcal{I}_{\ell}.
  \end{equation}
  We maintain the invariant that these numbers are positive and pairwise distinct.
  This holds initially because $z_{i}^{(0)}=s_{i}$.
  Let $k_{\ell}$ be the unique index attaining their minimum and set
  \begin{equation}
    \kappa_{\ell+1}:=-\frac{1}{z_{k_{\ell}}^{(\ell)}}, \qquad \eta_{\ell+1}:=\frac{c/(s_{k_{\ell}}+\tau)-b_{\ell}(s_{k_{\ell}})}{{a_{\ell}(s_{k_{\ell}})}^{2}\{1+s_{k_{\ell}}b_{\ell}(s_{k_{\ell}})\}},
    \label{eq:active-spectrum-freezing-parameters}
  \end{equation}
  where $c>0$ will be chosen sufficiently small.
  Equation \cref{eq:active-spectrum-response-recursion} then gives
  \begin{equation}
    a_{\ell+1}(s_{k_{\ell}})=0, \qquad b_{\ell+1}(s_{k_{\ell}})=\frac{c}{s_{k_{\ell}}+\tau}.
    \label{eq:active-spectrum-frozen-response}
  \end{equation}
  Every later increment of either response at this eigenvalue contains a factor $a_{j}(s_{k_{\ell}})$, so the response remains fixed.
  For every remaining index $i\neq k_{\ell}$, the next transformed value is
  \begin{equation}
    z_{i}^{(\ell+1)}=z_{i}^{(\ell)}{\ab(1-\frac{z_{i}^{(\ell)}}{z_{k_{\ell}}^{(\ell)}})}^{2}.
    \label{eq:active-spectrum-z-recursion}
  \end{equation}
  The function $z\mapsto z{(1-z/z_{k_{\ell}}^{(\ell)})}^{2}$ is positive and strictly increasing for $z>z_{k_{\ell}}^{(\ell)}$, because its derivative is $(3z/z_{k_{\ell}}^{(\ell)}-1)(z/z_{k_{\ell}}^{(\ell)}-1)>0$.
  Thus, positivity and pairwise distinctness hold at the next stage.
  When $c=0$, induction gives $b_{\ell}(s_{i})=0$ and $\eta_{\ell}=0$ at every stage.
  The finitely many responses and denominators in \cref{eq:active-spectrum-freezing-parameters} depend continuously on $c$ near zero, so some sufficiently small $c>0$ makes every denominator nonzero.

  Suppose first that $r\geq2$, and let $k_{\star}$ denote the single index left after these $r-1$ stages.
  Let $\bm{T}_{r-1}:=\bm{A}_{\bm{\Theta}_{r-1}}^{(r-1)}(\bm{G}_{\star})$ and define
  \begin{equation}
    g_{r-1}:=\bm{e}_{d+1}^{\top}\bm{T}_{r-1}\bm{G}_{\star}\bm{T}_{r-1}^{\top}\bm{e}_{d+1}>0.
  \end{equation}
  The strict inequality holds because the Schur complement of the bottom-right entry of $\bm{G}_{\star}$ is $\bm{\Sigma}\succ\bm{0}_{d,d}$ and $\bm{T}_{r-1}^{\top}\bm{e}_{d+1}$ has final coordinate one.
  In the final layer, choose
  \begin{equation}
    \bm{P}_{r}:=\bm{\Pi}_{y}, \qquad \bm{Q}_{r}:=\alpha\bm{\Pi}_{x}+\beta\bm{\Pi}_{y}.
    \label{eq:active-spectrum-final-layer}
  \end{equation}
  Direct block multiplication gives $\bm{\theta}_{\bm{\Theta}_{r}}(\bm{G}_{\star})=b_{r}(\bm{S})\bm{\mu}$, where
  \begin{equation}
    b_{r}(s):=(1+\beta g_{r-1})b_{r-1}(s)+\alpha{a_{r-1}(s)}^{2}\{1+s b_{r-1}(s)\}.
    \label{eq:active-spectrum-final-response}
  \end{equation}
  Shrinking $c>0$ if necessary ensures that $1+s_{k_{\star}}b_{r-1}(s_{k_{\star}})\neq0$, because this expression is continuous in $c$ and equals one at $c=0$.
  Set
  \begin{equation}
    \beta:=\frac{C/c-1}{g_{r-1}}, \qquad \alpha:=\frac{C/(s_{k_{\star}}+\tau)-(C/c)b_{r-1}(s_{k_{\star}})}{{a_{r-1}(s_{k_{\star}})}^{2}\{1+s_{k_{\star}}b_{r-1}(s_{k_{\star}})\}}.
    \label{eq:active-spectrum-final-parameters}
  \end{equation}
  The denominator defining $\alpha$ is therefore nonzero because the induction invariant also gives $a_{r-1}(s_{k_{\star}})\neq0$.
  Equations \cref{eq:active-spectrum-frozen-response,eq:active-spectrum-final-response,eq:active-spectrum-final-parameters} imply
  \begin{equation}
    b_{r}(s_{i})=\frac{C}{s_{i}+\tau}, \qquad i\in[r].
    \label{eq:active-spectrum-interpolation}
  \end{equation}
  If $r=1$, the same conclusion follows directly by choosing $\bm{P}_{1}:=\bm{\Pi}_{y}$ and $\bm{Q}_{1}:=C{(s_{1}+\tau)}^{-1}\bm{\Pi}_{x}$.
  The spectral theorem and activity of $s_{1},\ldots,s_{r}$ give
  \begin{equation}
    b_{r}(\bm{S})\bm{\mu}=C{(\bm{S}+\tau\bm{I}_{d})}^{-1}\bm{\mu},
  \end{equation}
  and the Sherman--Morrison identity gives
  \begin{equation}
    C{(\bm{S}+\tau\bm{I}_{d})}^{-1}\bm{\mu}=\omega{(\bm{\Sigma}+\tau\bm{I}_{d})}^{-1}\bm{\mu}=\bm{h}^{\star}.
    \label{eq:second-moment-inverse-direction}
  \end{equation}
  Every constructed parameter commutes with $\operatorname{diag}(\bm{U},1)$, so rotation equivariance gives \cref{eq:sufficient-depth-population-comparator} at depth $r$.
  If $L>r$, set $\bm{P}_{\ell}=\bm{Q}_{\ell}=\bm{0}_{d+1,d+1}$ for $r<\ell\leq L$.
  These additional zero layers leave the effective direction unchanged, completing the proof for every $L\geq r$.
\end{proof}

We finally transfer the constructed comparator from the population context to finite contexts.

\begin{lemma}[Finite-context accuracy of the sufficient-depth comparator]\label{lem:sufficient-depth-comparator-rate}
  Under the conditions of \cref{lem:sufficient-depth-comparator}, let $\bm{\Theta}_{L,\star}$ and $\bm{h}^{\star}$ be the comparator and target direction constructed there.
  There is a constant $C_{\mathrm{cmp}}>0$ such that, for every $\bm{U}\in\mathcal{O}(d)$ and $N\in\mathbb{N}$,
  \begin{equation}
    \mathbb{E}_{\mathcal{D}_{N}\mid\bm{U}}[\|\bm{U}^{\top}\bm{\theta}_{\bm{\Theta}_{L,\star}}(\bm{G}_{N}(\bm{U}))-\bm{h}^{\star}\|_{2}^{2}]\leq\frac{C_{\mathrm{cmp}}}{N}.
    \label{eq:active-spectrum-comparator-rate}
  \end{equation}
\end{lemma}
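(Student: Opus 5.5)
By rotation equivariance, I will reduce to the latent coordinates and then bound the error through a polynomial dependence on the finite-context matrix $\bm{G}_{a,N}$.

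\textbf{Reduction to latent coordinates.} Every block of the constructed $\bm{\Theta}_{L,\star}$ is a combination of $\bm{\Pi}_{x}$ and $\bm{\Pi}_{y}$. Hence each block commutes with $\operatorname{diag}(\bm{U},1)$, and the recursion \cref{eq:fixed-depth-context-recursion} gives
\begin{equation*}
\bm{A}^{(\ell)}_{\bm{\Theta}_{L,\star}}(\operatorname{diag}(\bm{U},1)\bm{G}\operatorname{diag}(\bm{U}^{\top},1))=\operatorname{diag}(\bm{U},1)\bm{A}^{(\ell)}_{\bm{\Theta}_{L,\star}}(\bm{G})\operatorname{diag}(\bm{U}^{\top},1).
\end{equation*}
Using \cref{eq:context-blocks}, this yields $\bm{U}^{\top}\bm{\theta}_{\bm{\Theta}_{L,\star}}(\bm{G}_{N}(\bm{U}))=\bm{\theta}_{\bm{\Theta}_{L,\star}}(\bm{G}_{a,N})$. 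The law of $\bm{G}_{a,N}$ does not depend on $\bm{U}$, because it is built from the i.i.d.\ $\bm{a}_{n}\sim\mathcal{N}(\bm{\mu},\bm{\Sigma})$. The expectation is therefore independent of $\bm{U}$, which makes uniformity over $\bm{U}$ automatic. By \cref{lem:sufficient-depth-comparator}, $\bm{\theta}_{\bm{\Theta}_{L,\star}}(\bm{G}_{\star})=\bm{h}^{\star}$, so it suffices to bound $\mathbb{E}\|\bm{\theta}(\bm{G}_{a,N})-\bm{\theta}(\bm{G}_{\star})\|_{2}^{2}$ by $C/N$.

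\textbf{Polynomial structure.} With the parameters fixed, the map $\bm{G}\mapsto\bm{\theta}_{\bm{\Theta}_{L,\star}}(\bm{G})$ is a polynomial in the entries of $\bm{G}$. Its degree is bounded by some $D$ that depends only on $L$, since each layer at most triples the degree of $\bm{A}^{(\ell)}$ in $\bm{G}$. Write $\bm{E}:=\bm{G}_{a,N}-\bm{G}_{\star}$ and expand around $\bm{G}_{\star}$. This gives
\begin{equation*}
\|\bm{\theta}(\bm{G}_{a,N})-\bm{\theta}(\bm{G}_{\star})\|_{2}\leq K\sum_{k=1}^{D}\|\bm{E}\|_{\mathrm{F}}^{k},
\end{equation*}
where $K$ depends only on the parameters, on $\bm{G}_{\star}$, and on $d$. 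Squaring and using $(\sum_{k} x_k)^{2}\leq D\sum_{k} x_k^{2}$ reduces the task to showing $\mathbb{E}\|\bm{E}\|_{\mathrm{F}}^{2k}\leq C_{k}N^{-k}\leq C_{k}/N$ for $1\leq k\leq D$. No invertibility or Lipschitz argument on a neighborhood is needed, which avoids any truncation of bad events.

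\textbf{Moment bounds (main obstacle).} The entries of $\bm{E}$ are averages of the centered i.i.d.\ variables $a_{n,i}a_{n,j}-S_{i,j}$ and $a_{n,i}-\mu_{i}$. These are polynomials of degree at most two in Gaussians, so they have finite moments of every order. A Marcinkiewicz--Zygmund (or Rosenthal) inequality then gives $\mathbb{E}|\bar{X}_{N}|^{2k}\leq C_{k}N^{-k}$ for each such entry. Summing over the $(d+1)^{2}$ entries bounds $\mathbb{E}\|\bm{E}\|_{\mathrm{F}}^{2k}$. The step that needs the most care is the polynomial expansion bound. I would make it explicit by proving by induction on $\ell$ two facts: $\bm{A}^{(\ell)}(\bm{G}_{\star}+\bm{E})-\bm{A}^{(\ell)}(\bm{G}_{\star})$ has Frobenius norm at most $K_{\ell}\sum_{k=1}^{D_{\ell}}\|\bm{E}\|_{\mathrm{F}}^{k}$, and $\|\bm{A}^{(\ell)}(\bm{G}_{\star}+\bm{E})\|_{\mathrm{F}}\leq K_{\ell}'(1+\|\bm{E}\|_{\mathrm{F}})^{D_{\ell}}$. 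The induction uses the product rule $XYZ-X'Y'Z'$ telescoping and submultiplicativity. Projecting onto $\bm{e}_{d+1}$ then finishes the argument, with $C_{\mathrm{cmp}}$ depending only on $d$, $L$, $\bm{\mu}$, $\bm{\Sigma}$, $\tau$, $\omega$, and the fixed parameters.
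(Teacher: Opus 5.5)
Your proposal is correct and follows essentially the same route as the paper. Rotation equivariance reduces the quantity to $\bm{\theta}_{\bm{\Theta}_{L,\star}}(\bm{G}_{a,N})$, a fixed polynomial in the empirical first and second moments that equals $\bm{h}^{\star}$ at $(\bm{\mu},\bm{S})$; expanding around the population value and applying the Gaussian moment bounds $\mathbb{E}\|\bm{E}\|_{\mathrm{F}}^{2k}\leq C_{k}N^{-k}$ then gives the $1/N$ rate, with the commutation, degree-bound and Marcinkiewicz--Zygmund steps simply spelled out in more detail than in the paper.
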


\begin{proof}
  Conditional on any fixed $\bm{U}$, induction in \cref{eq:fixed-depth-context-recursion} gives
  \begin{equation}
    \bm{U}^{\top}\bm{\theta}_{\bm{\Theta}_{L,\star}}(\bm{G}_{N}(\bm{U}))=\bm{\theta}_{\bm{\Theta}_{L,\star}}(\bm{G}_{a,N}).
  \end{equation}
  The right-hand side is a fixed polynomial in the entries of $N^{-1}\sum_{n=1}^{N}\bm{a}_{n}$ and $N^{-1}\sum_{n=1}^{N}\bm{a}_{n}\bm{a}_{n}^{\top}$ whose value at $(\bm{\mu},\bm{S})$ is $\bm{h}^{\star}$.
  For every fixed integer $k\geq1$, Gaussian sample-moment bounds give
  \begin{equation}
    \mathbb{E}_{\bm{a}_{1:N}}\ab[\|\frac{1}{N}\sum_{n=1}^{N}\bm{a}_{n}-\bm{\mu}\|_{2}^{2k}+\|\frac{1}{N}\sum_{n=1}^{N}\bm{a}_{n}\bm{a}_{n}^{\top}-\bm{S}\|_{\mathrm{F}}^{2k}]\leq\frac{C_{k}}{N^{k}}, \qquad N\in\mathbb{N},
    \label{eq:gaussian-empirical-moment-rate}
  \end{equation}
  for a constant $C_{k}>0$.
  Expanding the polynomial difference around $(\bm{\mu},\bm{S})$ and applying these bounds to its finitely many monomials proves \cref{eq:active-spectrum-comparator-rate}.
\end{proof}

\begin{tcolorbox}[theorem box]
  \robustbayesrealizationmechanism*
\end{tcolorbox}

\begin{proof}
  Part (i) is the optimal-direction conclusion of \cref{lem:robust-bayes-characterization}.

  For part (ii), let $\lambda_{1},\ldots,\lambda_{r}$ be the distinct active eigenvalues of $\bm{\Sigma}$.
  Polynomial interpolation gives a polynomial $p$ of degree at most $r-1$ satisfying $p(\lambda_{i})={(\lambda_{i}+\tau)}^{-1}$ for every $i\in[r]$.
  The spectral theorem gives
  \begin{equation}
    {(\bm{\Sigma}+\tau\bm{I}_{d})}^{-1}\bm{\mu}=p(\bm{\Sigma})\bm{\mu}\in\mathcal{K}_{r}(\bm{\Sigma},\bm{\mu}).
    \label{eq:robust-direction-latent-krylov-representation}
  \end{equation}
  The polynomial $p$ has degree exactly $r-1$.
  This is immediate when $r=1$.
  If $r\geq2$ and $p$ had smaller degree, then $(s+\tau)p(s)-1$ would have degree at most $r-1$ and vanish at the $r$ distinct points $\lambda_{1},\ldots,\lambda_{r}$, so it would vanish identically.
  This contradicts its value $-1$ at $s=-\tau$.
  Since $\bm{\mu},\bm{\Sigma}\bm{\mu},\ldots,\bm{\Sigma}^{r-1}\bm{\mu}$ are linearly independent, the nonzero leading coefficient of $p$ implies $p(\bm{\Sigma})\bm{\mu}\notin\mathcal{K}_{r-1}(\bm{\Sigma},\bm{\mu})$, proving part (ii).

  For parts (iii) and (iv), fix $t\in\mathbb{R}$ and define the shared parameter tuple $\bm{\Theta}_{L}(t)$ by
  \begin{equation}
    \bm{P}_{\ell}(t):=\bm{\Pi}_{y}, \qquad \bm{Q}_{\ell}(t):=t\bm{\Pi}_{x}, \qquad \ell\in[L].
    \label{eq:krylov-span-parameter-family}
  \end{equation}
  These parameters do not depend on $\bm{U}$.
  Set $q_{0,t}(s):=0$.
  Direct induction in \cref{eq:fixed-depth-context-recursion} at the latent population context $\bm{G}_{\star}$ gives
  \begin{equation}
    \bm{A}_{\bm{\Theta}_{L}(t)}^{(\ell)}(\bm{G}_{\star})=
    \begin{bmatrix}
      \bm{I}_{d} & \bm{0}_{d} \\
      \bm{\mu}^{\top}q_{\ell,t}(\bm{S}) & 1
    \end{bmatrix}, \qquad
    q_{\ell,t}(s)=q_{\ell-1,t}(s)+t\{1+s q_{\ell-1,t}(s)\}.
    \label{eq:krylov-span-population-recursion}
  \end{equation}
  Solving this scalar recursion yields
  \begin{equation}
    q_{L,t}(s)=\sum_{j=1}^{L}\binom{L}{j}t^{j}s^{j-1}.
    \label{eq:krylov-span-polynomial-family}
  \end{equation}
  Since $\bm{\Pi}_{x}$ and $\bm{\Pi}_{y}$ commute with $\operatorname{diag}(\bm{U},1)$, rotation equivariance and \cref{eq:context-fixed-depth-effective-direction} give
  \begin{equation}
    \bm{\theta}_{\bm{\Theta}_{L}(t)}(\operatorname{diag}(\bm{U},1)\bm{G}_{\star}\operatorname{diag}(\bm{U}^{\top},1))=\bm{U}q_{L,t}(\bm{S})\bm{\mu}.
    \label{eq:krylov-span-population-direction}
  \end{equation}
  Therefore, the population-context version of \cref{eq:finite-context-query-linear-representation} gives
  \begin{equation}
    f(\bm{x}; \bm{\Theta}_{L}(t), \mathcal{D}_{N})=\bm{\mu}^{\top}q_{L,t}(\bm{S})\bm{U}^{\top}\bm{x}, \qquad N\to\infty,
  \end{equation}
  which proves part (iii).

  Choose distinct nonzero numbers $t_{1},\ldots,t_{L}$.
  Relative to the monomial basis $1,s,\ldots,s^{L-1}$, the coefficient matrix of $q_{L,t_{1}},\ldots,q_{L,t_{L}}$ has entries $\binom{L}{j}t_{i}^{j}$ and reduces to an invertible Vandermonde matrix after nonzero row and column scalings.
  Hence, $\{q_{L,t}:t\in\mathbb{R}\}$ spans all polynomials of degree at most $L-1$.
  Since $\bm{S}=\bm{\Sigma}+\bm{\mu}\bm{\mu}^{\top}$, induction on the Krylov order gives $\mathcal{K}_{L}(\bm{S},\bm{\mu})=\mathcal{K}_{L}(\bm{\Sigma},\bm{\mu})$.
  Combining these facts yields
  \begin{equation}
    \begin{aligned}
      \operatorname{span}\{q_{L,t}(\bm{S})\bm{\mu}:t\in\mathbb{R}\}
      &=\operatorname{span}(\bm{\mu},\bm{S}\bm{\mu},\ldots,\bm{S}^{L-1}\bm{\mu}) \\
      &=\mathcal{K}_{L}(\bm{\Sigma},\bm{\mu}),
    \end{aligned}
  \end{equation}
  proving part (iv).
\end{proof}

\begin{tcolorbox}[theorem box]
  \robustbayesrepresentability*
\end{tcolorbox}

\begin{proof}
  Fix $\bm{U}\in\mathcal{O}(d)$, $N\in\mathbb{N}$, and $\delta\in(0,1)$.
  Let $\bm{\Theta}_{L,\star}$ be the parameter tuple from \cref{lem:sufficient-depth-comparator} with the solution $\tau$ of \cref{eq:robust-shift-equation} and $\omega=1$.
  Define $\psi(\bm{h}):=\mathcal{E}_{\epsilon,\bm{U}}(\bm{x}\mapsto{\bm{h}}^{\top}\bm{U}^{\top}\bm{x})$ for $\bm{h}\in\mathbb{R}^{d}$.
  By \cref{eq:linear-robust-error}, this function does not depend on $\bm{U}$.
  By \cref{lem:robust-bayes-characterization}, $\psi(\bm{h}_{\tau})=\mathcal{E}_{\epsilon,\bm{U}}^{\star}$ and $\bm{h}_{\tau}$ is a global minimizer of $\psi$.
  The vector $\bm{h}_{\tau}$ is nonzero, so $\psi$ is twice continuously differentiable on a neighborhood of $\bm{h}_{\tau}$.
  Its gradient vanishes at this global minimizer, and Taylor's theorem gives a quadratic upper bound for $\psi(\bm{h})-\psi(\bm{h}_{\tau})$ on a sufficiently small ball centered at $\bm{h}_{\tau}$.
  Outside that ball, the same bound holds after increasing its constant because $0\leq\psi\leq1$ and the distance from $\bm{h}_{\tau}$ is bounded away from zero.
  Consequently, there is a constant $C_{\psi}>0$ such that
  \begin{equation}
    0\leq\psi(\bm{h})-\psi(\bm{h}_{\tau})\leq C_{\psi}\min\{\|\bm{h}-\bm{h}_{\tau}\|_{2}^{2},1\}, \qquad \bm{h}\in\mathbb{R}^{d}.
    \label{eq:global-quadratic-robust-error-stability}
  \end{equation}

  Let $X_{N}$ denote the excess error on the left-hand side of \cref{eq:main-robust-bayes-representability-rate} with $\bm{\Theta}_{L}=\bm{\Theta}_{L,\star}$.
  Conditional on a realized context, \cref{lem:robust-bayes-characterization,eq:global-quadratic-robust-error-stability} give
  \begin{equation}
    0\leq X_{N}\leq C_{\psi}\|\bm{U}^{\top}\bm{\theta}_{\bm{\Theta}_{L,\star}}(\bm{G}_{N}(\bm{U}))-\bm{h}_{\tau}\|_{2}^{2}.
  \end{equation}
  Taking the conditional expectation and applying \cref{lem:sufficient-depth-comparator-rate} yield
  \begin{equation}
    \mathbb{E}_{\mathcal{D}_{N}\mid\bm{U}}[X_{N}]\leq\frac{C_{\psi}C_{\mathrm{cmp}}}{N}.
  \end{equation}
  Markov's inequality therefore gives
  \begin{equation}
    \mathbb{P}_{\mathcal{D}_{N}\mid\bm{U}}\ab(X_{N}>\frac{C_{\psi}C_{\mathrm{cmp}}}{\delta N})\leq\delta.
  \end{equation}
  Since $\bm{U}$ was arbitrary and the constants are independent of $\bm{U}$, this proves \cref{eq:main-robust-bayes-representability-rate} with $C:=C_{\psi}C_{\mathrm{cmp}}$.
\end{proof}

\section{Adversarial and Standard Training: Proofs of \texorpdfstring{\cref{thm:main-adversarial-training-bayes-optimality,thm:main-standard-training-bayes-optimality}}{the adversarial- and standard-training theorems}}\label{sec:appendix-adversarial-training}

\subsection{Directions selected by infinite-sample margin risks}

For $\epsilon\geq0$, define the population margin risk
\begin{equation}
  \mathcal{J}_{\epsilon}(\bm{h}):=\mathbb{E}_{\bm{a}\sim\mathcal{N}(\bm{\mu},\bm{\Sigma})}[\mathcal{L}(\bm{h}^{\top}\bm{a}-\epsilon\sqrt{d}\|\bm{h}\|_{2})].
  \label{eq:latent-margin-risk}
\end{equation}
The following result shows that minimizing this population surrogate risk selects the corresponding Bayes-optimal direction.

\begin{lemma}[Population margin-risk minimizers align with Bayes-optimal directions]\label{lem:infinite-sample-margin-optimal-directions}
  Suppose that $0\leq\epsilon\sqrt{d}<\|\bm{\mu}\|_{2}$.
  Then the function $\mathcal{J}_{\epsilon}$ is strictly convex and coercive and therefore has a unique minimizer $\bm{h}_{\epsilon}^{\star}$, which is nonzero.
  There is a constant $\omega_{\epsilon}>0$ such that
  \begin{equation}
    \operatorname*{argmin}_{\bm{h}\in\mathbb{R}^{d}}\mathcal{J}_{\epsilon}(\bm{h})=\{\bm{h}_{\epsilon}^{\star}\}=\{\omega_{\epsilon}\bm{h}_{\tau}\}=\{\omega_{\epsilon}{(\bm{\Sigma}+\tau\bm{I}_{d})}^{-1}\bm{\mu}\}.
    \label{eq:margin-risk-optimal-direction}
  \end{equation}
\end{lemma}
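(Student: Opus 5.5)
The proof has three parts: show that $\mathcal{J}_{\epsilon}$ is strictly convex and coercive, then identify its minimizer through the first-order condition and Stein's lemma, and finally match that condition to the equation defining $\tau$.

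\textbf{Convexity and coercivity.} The map $\bm{h}\mapsto\bm{h}^{\top}\bm{a}-\epsilon\sqrt{d}\|\bm{h}\|_{2}$ is concave, and $\mathcal{L}$ is convex and decreasing, so each integrand is convex in $\bm{h}$. For strict convexity, take $\bm{h}\neq\bm{h}'$. If $\mathcal{J}_{\epsilon}$ were affine on the segment between them, then for almost every $\bm{a}$ the composite would be affine along it. Because $\mathcal{L}''>0$, this forces the inner margin to be constant in the segment parameter. The margin varies with $\bm{a}$ through $(\bm{h}-\bm{h}')^{\top}\bm{a}$, and that quantity has positive variance under $\bm{\Sigma}\succ\bm{0}_{d,d}$, which gives a contradiction.

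For coercivity, write $\bm{h}=\rho\bm{v}$ with $\|\bm{v}\|_{2}=1$. The margin is then $\rho(\bm{v}^{\top}\bm{a}-\epsilon\sqrt{d})$. The event $\bm{v}^{\top}\bm{a}-\epsilon\sqrt{d}<-1$ has probability bounded below by some $p>0$ uniformly over the compact sphere, because $\bm{v}^{\top}\bm{a}$ is Gaussian with variance at least $\lambda_{\min}(\bm{\Sigma})$. On that event $\mathcal{L}(\text{margin})\geq\mathcal{L}(-\rho)$. Since $\mathcal{L}$ is strictly convex and decreasing, $\mathcal{L}(-\rho)\to\infty$, so $\mathcal{J}_{\epsilon}\to\infty$. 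Continuity follows by dominated convergence, using the at-most-linear growth of $\mathcal{L}$ implied by bounded $\mathcal{L}'$. A unique minimizer therefore exists.

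\textbf{The minimizer is nonzero.} At $\bm{h}=\bm{0}_{d}$ the function is only directionally differentiable. Along $\bm{h}=s\bm{v}$ the directional derivative is $\mathcal{L}'(0)(\bm{\mu}^{\top}\bm{v}-\epsilon\sqrt{d})$. Choosing $\bm{v}=\bm{\mu}/\|\bm{\mu}\|_{2}$ makes this negative, because $\mathcal{L}'(0)<0$ and $\epsilon\sqrt{d}<\|\bm{\mu}\|_{2}$. Hence $\bm{0}_{d}$ is not the minimizer.

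\textbf{First-order condition and Stein's lemma.} At the nonzero minimizer $\bm{h}^{\star}$, differentiating under the integral (justified by the bound on $|\mathcal{L}'|$) gives
\begin{equation*}
\mathbb{E}\bigl[\mathcal{L}'(m)\bigl(\bm{a}-\epsilon\sqrt{d}\,\bm{h}^{\star}/\|\bm{h}^{\star}\|_{2}\bigr)\bigr]=\bm{0}_{d},
\qquad m:=\bm{h}^{\star\top}\bm{a}-\epsilon\sqrt{d}\|\bm{h}^{\star}\|_{2}.
\end{equation*}
Set $\alpha:=\mathbb{E}[\mathcal{L}'(m)]<0$ and $\beta:=\mathbb{E}[\mathcal{L}''(m)]>0$, both finite by Assumption (d). Stein's lemma gives
\begin{equation*}
\mathbb{E}[\mathcal{L}'(m)\bm{a}]=\alpha\bm{\mu}+\beta\bm{\Sigma}\bm{h}^{\star}.
\end{equation*}
Substituting into the first-order condition and rearranging yields
\begin{equation*}
\beta\Bigl(\bm{\Sigma}+\tfrac{-\alpha\epsilon\sqrt{d}}{\beta\|\bm{h}^{\star}\|_{2}}\bm{I}_{d}\Bigr)\bm{h}^{\star}=-\alpha\bm{\mu}.
\end{equation*}
Define $\tau':=-\alpha\epsilon\sqrt{d}/(\beta\|\bm{h}^{\star}\|_{2})\geq0$ and $\omega:=-\alpha/\beta>0$. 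Then $\bm{h}^{\star}=\omega(\bm{\Sigma}+\tau'\bm{I}_{d})^{-1}\bm{\mu}$ and $\|\bm{h}^{\star}\|_{2}=\omega\|(\bm{\Sigma}+\tau'\bm{I}_{d})^{-1}\bm{\mu}\|_{2}$. Putting the norm back into the definition of $\tau'$ gives
\begin{equation*}
\tau'\|(\bm{\Sigma}+\tau'\bm{I}_{d})^{-1}\bm{\mu}\|_{2}=\epsilon\sqrt{d}.
\end{equation*}
By the uniqueness part of \cref{lem:symmetric-gaussian-robust-bayes-classifier}, $\tau'=\tau$. Taking $\omega_{\epsilon}:=\omega$ proves the claim.

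The step needing the most care is the Stein step. It uses the bounds on $\mathcal{L}'$ and $\mathcal{L}''$ from Assumption (d) and the fact that $\beta>0$. The cancellation that recovers exactly the defining equation for $\tau$ is the crux.
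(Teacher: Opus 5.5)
Your proposal is correct and follows the paper's proof essentially step for step. You use convexity of the concave margin composed with the convex decreasing loss, a uniform lower bound on the unit sphere for coercivity, and the negative one-sided derivative along $\bm{\mu}/\|\bm{\mu}\|_{2}$ at the origin; you then combine the first-order condition with Gaussian integration by parts to obtain a regularized normal equation whose self-consistency reproduces the defining equation of $\tau$. The only cosmetic difference is in coercivity: you bound $\mathcal{J}_{\epsilon}(\rho\bm{v})$ below by $p\,\mathcal{L}(-\rho)$ on a uniformly likely event, whereas the paper uses the tangent-line bound $-\mathcal{L}'(0)\,\rho\,\mathbb{E}[(\epsilon\sqrt{d}-\bm{v}^{\top}\bm{a})_{+}]$.
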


\begin{proof}
  For every fixed $\bm{a}$, the map $\bm{h}\mapsto\bm{h}^{\top}\bm{a}-\epsilon\sqrt{d}\|\bm{h}\|_{2}$ is concave, so its composition with the convex nonincreasing function $\mathcal{L}$ is convex.
  For two distinct directions, the corresponding random margins differ almost surely because $\bm{\Sigma}\succ\bm{0}_{d,d}$.
  Strict convexity of $\mathcal{L}$, together with concavity of the margin map, therefore makes the expected risk strictly convex.
  Moreover, for every unit vector $\bm{u}$ and $t>0$,
  \begin{equation}
    \mathcal{J}_{\epsilon}(t\bm{u})\geq-\mathcal{L}'(0)t\mathbb{E}_{\bm{a}\sim\mathcal{N}(\bm{\mu},\bm{\Sigma})}[{(\epsilon\sqrt{d}-\bm{u}^{\top}\bm{a})}_{+}].
  \end{equation}
  This bound follows from the tangent inequality for $\mathcal{L}$ on the event $\bm{u}^{\top}\bm{a}<\epsilon\sqrt{d}$ and nonnegativity of $\mathcal{L}$ on its complement.
  The expectation on the right is positive and continuous in $\bm{u}$, so compactness of the unit sphere gives a positive uniform lower bound.
  Hence, $\mathcal{J}_{\epsilon}$ is coercive and has a unique minimizer.
  Its one-sided derivative at the origin along $\bm{\mu}/\|\bm{\mu}\|_{2}$ equals $\mathcal{L}'(0)(\|\bm{\mu}\|_{2}-\epsilon\sqrt{d})<0$, so the minimizer is nonzero.

  Let $\bm{h}:=\bm{h}_{\epsilon}^{\star}$ and set $Z:=\bm{h}^{\top}\bm{a}-\epsilon\sqrt{d}\|\bm{h}\|_{2}$.
  The derivative bounds and Gaussian moments justify differentiation under the expectation and the Gaussian integration by parts below.
  The first-order condition at the nonzero minimizer is
  \begin{equation}
    \mathbb{E}_{\bm{a}\sim\mathcal{N}(\bm{\mu},\bm{\Sigma})}\ab[\mathcal{L}'(Z)\ab(\bm{a}-\epsilon\sqrt{d}\frac{\bm{h}}{\|\bm{h}\|_{2}})]=\bm{0}_{d}.
  \end{equation}
  Gaussian integration by parts gives
  \begin{equation}
    \mathbb{E}_{\bm{a}\sim\mathcal{N}(\bm{\mu},\bm{\Sigma})}[\mathcal{L}'(Z)\bm{a}]=\mathbb{E}_{\bm{a}\sim\mathcal{N}(\bm{\mu},\bm{\Sigma})}[\mathcal{L}'(Z)]\bm{\mu}+\mathbb{E}_{\bm{a}\sim\mathcal{N}(\bm{\mu},\bm{\Sigma})}[\mathcal{L}''(Z)]\bm{\Sigma}\bm{h}.
    \label{eq:margin-risk-stein-identity}
  \end{equation}
  Substitution into the first-order condition yields
  \begin{equation}
    \ab(c\bm{\Sigma}+\frac{\epsilon\sqrt{d}}{\|\bm{h}\|_{2}}\bm{I}_{d})\bm{h}=\bm{\mu}, \qquad c:=\frac{\mathbb{E}_{\bm{a}\sim\mathcal{N}(\bm{\mu},\bm{\Sigma})}[\mathcal{L}''(Z)]}{-\mathbb{E}_{\bm{a}\sim\mathcal{N}(\bm{\mu},\bm{\Sigma})}[\mathcal{L}'(Z)]}>0.
    \label{eq:margin-risk-regularized-normal-equation}
  \end{equation}
  Setting $\lambda:=\epsilon\sqrt{d}/(c\|\bm{h}\|_{2})$ gives
  \begin{equation}
    \bm{h}=\frac{1}{c}{(\bm{\Sigma}+\lambda\bm{I}_{d})}^{-1}\bm{\mu}, \qquad \lambda\|{(\bm{\Sigma}+\lambda\bm{I}_{d})}^{-1}\bm{\mu}\|_{2}=\epsilon\sqrt{d}.
  \end{equation}
  Uniqueness in \cref{lem:symmetric-gaussian-robust-bayes-classifier} gives $\lambda=\tau$.
  Setting $\omega_{\epsilon}:=1/c$ proves \cref{eq:margin-risk-optimal-direction}.
\end{proof}

\subsection{Infinite-sample minimizers and finite-context transfer}

The infinite-task, infinite-context risk is
\begin{equation}
  \mathcal{R}_{\epsilon,\infty,\infty}(\bm{\Theta}_{L})=\mathbb{E}_{\bm{U}\in\mathcal{O}(d)}[\mathcal{J}_{\epsilon}(\bm{U}^{\top}\bm{\theta}_{\bm{\Theta}_{L}}(\operatorname{diag}(\bm{U},1)\bm{G}_{\star}\operatorname{diag}(\bm{U}^{\top},1)))].
  \label{eq:infinite-sample-fixed-depth-margin-risk}
\end{equation}

The preceding result also identifies the infinite-sample global optimum of the unrestricted transformer risk, independently of gradient dynamics.

\begin{theorem}[Functional identification of unrestricted fixed-depth infinite-sample minimizers]\label{thm:unrestricted-fixed-depth-global-minimizers}
  Suppose that $L\geq r$ and $0\leq\epsilon\sqrt{d}<\|\bm{\mu}\|_{2}$.
  The unrestricted infinite-task, infinite-context risk in \cref{eq:infinite-sample-fixed-depth-margin-risk} attains the lower bound
  \begin{equation}
    \inf_{\bm{\Theta}_{L}}\mathcal{R}_{\epsilon,\infty,\infty}(\bm{\Theta}_{L})=\mathcal{J}_{\epsilon}(\bm{h}_{\epsilon}^{\star}).
    \label{eq:unrestricted-fixed-depth-global-risk}
  \end{equation}
  Every global minimizer satisfies
  \begin{equation}
    \bm{U}^{\top}\bm{\theta}_{\bm{\Theta}_{L}}(\operatorname{diag}(\bm{U},1)\bm{G}_{\star}\operatorname{diag}(\bm{U}^{\top},1))=\bm{h}_{\epsilon}^{\star}
    \label{eq:unrestricted-global-minimizer-direction}
  \end{equation}
  for every $\bm{U}\in\mathcal{O}(d)$.
  In particular, standard training identifies the direction $\bm{\Sigma}^{-1}\bm{\mu}$, whereas adversarial training identifies the direction $\bm{h}_{\tau}$ in \cref{eq:robust-bayes-direction}.
\end{theorem}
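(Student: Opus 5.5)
The plan is a sandwich argument: a pointwise lower bound on the integrand, a matching construction, and strict convexity to pin down minimizers. For any parameter tuple $\bm{\Theta}_{L}$ and any $\bm{U}\in\mathcal{O}(d)$, the vector $\bm{U}^{\top}\bm{\theta}_{\bm{\Theta}_{L}}(\operatorname{diag}(\bm{U},1)\bm{G}_{\star}\operatorname{diag}(\bm{U}^{\top},1))$ is just some element of $\mathbb{R}^{d}$. By \cref{lem:infinite-sample-margin-optimal-directions}, $\mathcal{J}_{\epsilon}$ has the unique global minimizer $\bm{h}_{\epsilon}^{\star}$, so the integrand in \cref{eq:infinite-sample-fixed-depth-margin-risk} is bounded below by $\mathcal{J}_{\epsilon}(\bm{h}_{\epsilon}^{\star})$ for every $\bm{U}$. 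Integrating over the Haar measure then gives $\inf_{\bm{\Theta}_{L}}\mathcal{R}_{\epsilon,\infty,\infty}\geq\mathcal{J}_{\epsilon}(\bm{h}_{\epsilon}^{\star})$. For measurability I would note that the integrand is a continuous function of $\bm{U}$: $\bm{\theta}$ is polynomial in the context matrix, and $\mathcal{J}_{\epsilon}$ is continuous by dominated convergence, since $\mathcal{L}$ has bounded derivative and therefore grows at most linearly.

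For attainment, I would apply \cref{lem:sufficient-depth-comparator} with the $\tau$ of \cref{eq:robust-shift-equation} and $\omega=\omega_{\epsilon}$ from \cref{lem:infinite-sample-margin-optimal-directions}, which requires $L\geq r$. This yields $\bm{\Theta}_{L,\star}$ whose rotated effective direction equals $\omega_{\epsilon}\bm{h}_{\tau}=\bm{h}_{\epsilon}^{\star}$ for every $\bm{U}$. The integrand is then constantly $\mathcal{J}_{\epsilon}(\bm{h}_{\epsilon}^{\star})$, so the infimum is attained and \cref{eq:unrestricted-fixed-depth-global-risk} holds.

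For the characterization of minimizers, take a global minimizer $\bm{\Theta}_{L}$ and let $\phi(\bm{U})$ be its integrand. We have $\phi\geq\mathcal{J}_{\epsilon}(\bm{h}_{\epsilon}^{\star})$ and the two have equal integrals, so equality holds Haar-almost everywhere. Since $\phi$ is continuous and the Haar measure has full support on $\mathcal{O}(d)$, equality holds for every $\bm{U}$. Uniqueness of the minimizer of $\mathcal{J}_{\epsilon}$ then gives \cref{eq:unrestricted-global-minimizer-direction}. The step needing the most care is this upgrade from almost everywhere to everywhere. I would argue it directly: the set where $\phi$ exceeds the minimum is open by continuity, and a nonempty open set has positive Haar measure.

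Finally, the identification: at $\epsilon=0$ the unique solution is $\tau=0$, so $\bm{h}_{0}^{\star}\propto\bm{\Sigma}^{-1}\bm{\mu}$; for $\epsilon>0$ it is a positive multiple of $\bm{h}_{\tau}$. Both follow from \cref{eq:margin-risk-optimal-direction}.
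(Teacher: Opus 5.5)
Your proposal is correct and follows the paper's proof essentially step for step. You use the pointwise lower bound from the unique minimizer in \cref{lem:infinite-sample-margin-optimal-directions}, attainment via the comparator of \cref{lem:sufficient-depth-comparator} with $\omega=\omega_{\epsilon}$, and the continuity-plus-full-support argument that upgrades Haar-almost-everywhere equality to equality for every $\bm{U}$ before invoking uniqueness; your explicit justification of the continuity of $\mathcal{J}_{\epsilon}$ (which also follows because $\mathcal{J}_{\epsilon}$ is a finite convex function on $\mathbb{R}^{d}$) fills in a detail the paper leaves implicit.
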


\begin{proof}
  Fix $\epsilon\geq0$ satisfying $\epsilon\sqrt{d}<\|\bm{\mu}\|_{2}$.
  Conditional on $\bm{U}$, orthogonality gives
  \begin{equation}
    \mathcal{J}_{\epsilon}(\bm{U}^{\top}\bm{\theta}_{\bm{\Theta}_{L}}(\operatorname{diag}(\bm{U},1)\bm{G}_{\star}\operatorname{diag}(\bm{U}^{\top},1)))\geq\mathcal{J}_{\epsilon}(\bm{h}_{\epsilon}^{\star}).
    \label{eq:conditional-margin-lower-bound}
  \end{equation}
  Apply \cref{lem:sufficient-depth-comparator} with the solution $\tau$ of \cref{eq:robust-shift-equation} and $\omega=\omega_{\epsilon}$.
  Equation \cref{eq:margin-risk-optimal-direction} shows that the resulting comparator attains equality simultaneously for every $\bm{U}$.
  Hence, the lower bound in \cref{eq:conditional-margin-lower-bound} is attainable simultaneously for every $\bm{U}$, proving \cref{eq:unrestricted-fixed-depth-global-risk}.
  If a parameter tuple attains this value, the nonnegative conditional optimality gap in \cref{eq:conditional-margin-lower-bound} has expectation zero.
  The effective direction in \cref{eq:context-fixed-depth-effective-direction} is continuous in $\bm{U}$, so this conditional optimality gap is continuous on $\mathcal{O}(d)$.
  Since normalized Haar measure has full support, a nonnegative continuous function with zero Haar integral vanishes everywhere.
  Strict convexity from \cref{lem:infinite-sample-margin-optimal-directions} therefore gives \cref{eq:unrestricted-global-minimizer-direction} for every $\bm{U}$.
\end{proof}

For $\epsilon\geq0$ and $\bm{h}\neq\bm{0}_{d}$, define
\begin{equation}
  \Gamma_{\epsilon}(\bm{h}):=\frac{\bm{\mu}^{\top}\bm{h}-\epsilon\sqrt{d}\|\bm{h}\|_{2}}{\sqrt{\bm{h}^{\top}\bm{\Sigma}\bm{h}}},
  \label{eq:robust-gaussian-score}
\end{equation}
and set $\Gamma_{\epsilon}(\bm{0}_{d}):=-\infty$.

\begin{lemma}[Uniform finite-context transfer]\label{lem:uniform-finite-context-transfer}
  Fix a depth-$L$ parameter tuple $\bm{\Theta}_{L}$ and suppose that there is a nonzero vector $\bm{h}\in\mathbb{R}^{d}$ such that its population-context effective direction in \cref{eq:context-fixed-depth-effective-direction} satisfies
  \begin{equation}
    \bm{U}^{\top}\bm{\theta}_{\bm{\Theta}_{L}}(\operatorname{diag}(\bm{U},1)\bm{G}_{\star}\operatorname{diag}(\bm{U}^{\top},1))=\bm{h}, \qquad \bm{U}\in\mathcal{O}(d).
  \end{equation}
  Then, for every $\epsilon\geq0$ satisfying $\epsilon\sqrt{d}<\|\bm{\mu}\|_{2}$,
  \begin{equation}
    \lim_{N\to\infty}\sup_{\bm{U}\in\mathcal{O}(d)}\ab|\mathbb{E}_{\mathcal{D}_{N}\mid\bm{U}}[\mathcal{E}_{\epsilon,\bm{U}}(f(\,\cdot\,; \bm{\Theta}_{L}, \mathcal{D}_{N}))]-\Phi(-\Gamma_{\epsilon}(\bm{h}))|=0.
    \label{eq:uniform-finite-context-error-transfer}
  \end{equation}
\end{lemma}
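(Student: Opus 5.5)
The plan is to reduce everything to latent coordinates and exploit the continuity of the Gaussian score $\Gamma_{\epsilon}$ at the nonzero point $\bm{h}$. Fix $\bm{U}$. Exactly as in the proof of \cref{lem:sufficient-depth-comparator-rate}, induction in \cref{eq:fixed-depth-context-recursion} should give $\bm{U}^{\top}\bm{\theta}_{\bm{\Theta}_{L}}(\bm{G}_{N}(\bm{U}))=\bm{\theta}_{\bm{\Theta}_{L}}(\bm{G}_{a,N})$. Admittedly that lemma only treated the comparator. For a general $\bm{\Theta}_{L}$, however, I would instead use the hypothesis together with the fact that $\bm{G}_{a,N}$ has the same law for every $\bm{U}$. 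The cleanest route is to write $\widehat{\bm{h}}_{N}(\bm{U}):=\bm{U}^{\top}\bm{\theta}_{\bm{\Theta}_{L}}(\operatorname{diag}(\bm{U},1)\bm{G}_{a,N}\operatorname{diag}(\bm{U}^{\top},1))$, which is a polynomial map $\Psi(\bm{U},\bm{G}_{a,N})$. By \cref{eq:finite-context-query-linear-representation} and \cref{eq:linear-robust-error} (applied conditional on the context), the conditional error equals $\Phi(-\Gamma_{\epsilon}(\widehat{\bm{h}}_{N}(\bm{U})))$. When $\widehat{\bm{h}}_{N}=\bm{0}_{d}$ the error is one, which matches the convention $\Gamma_{\epsilon}(\bm{0}_{d})=-\infty$. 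Thus the quantity to control is $\sup_{\bm{U}}|\mathbb{E}[\Phi(-\Gamma_{\epsilon}(\Psi(\bm{U},\bm{G}_{a,N})))]-\Phi(-\Gamma_{\epsilon}(\bm{h}))|$, and by hypothesis $\Psi(\bm{U},\bm{G}_{\star})=\bm{h}$ for all $\bm{U}$.

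Next I need a uniform modulus of continuity. The map $\Psi$ is a polynomial in the entries of $\bm{U}$ and $\bm{G}$. Since $\mathcal{O}(d)$ is compact, $\Psi$ is uniformly continuous on $\mathcal{O}(d)\times\{\bm{G}:\|\bm{G}-\bm{G}_{\star}\|_{\mathrm{F}}\leq1\}$. Hence for every $\eta>0$ there is $\rho\in(0,1]$ such that $\|\bm{G}-\bm{G}_{\star}\|_{\mathrm{F}}\leq\rho$ implies $\|\Psi(\bm{U},\bm{G})-\bm{h}\|_{2}\leq\eta$ for all $\bm{U}$. Separately, $\Gamma_{\epsilon}$ is continuous on a ball around the nonzero vector $\bm{h}$, because $\bm{h}^{\top}\bm{\Sigma}\bm{h}>0$ there, and $\Phi$ is $1$-Lipschitz. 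So given $\gamma>0$, I can choose $\eta<\|\bm{h}\|_{2}/2$ with $|\Phi(-\Gamma_{\epsilon}(\bm{h}'))-\Phi(-\Gamma_{\epsilon}(\bm{h}))|\leq\gamma$ whenever $\|\bm{h}'-\bm{h}\|_{2}\leq\eta$.

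Finally I split the expectation on the event $\{\|\bm{G}_{a,N}-\bm{G}_{\star}\|_{\mathrm{F}}\leq\rho\}$ and its complement. On the event the difference is at most $\gamma$ uniformly in $\bm{U}$. On the complement the integrand difference is bounded by one, so its contribution is at most $\mathbb{P}(\|\bm{G}_{a,N}-\bm{G}_{\star}\|_{\mathrm{F}}>\rho)$. This probability does not depend on $\bm{U}$, since the law of $\bm{a}_{1:N}$ is independent of $\bm{U}$. By Chebyshev and \cref{eq:gaussian-empirical-moment-rate} with $k=1$, it is at most $C_{1}/(\rho^{2}N)$. Hence $\limsup_{N}\sup_{\bm{U}}|\cdot|\leq\gamma$ for every $\gamma>0$, which gives the claim.

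The main obstacle is the uniformity in $\bm{U}$ without assuming exact rotation equivariance of a general $\bm{\Theta}_{L}$. Compactness of $\mathcal{O}(d)$ together with joint continuity of $\Psi$ handles it. A second point needing care is the measure-zero degeneracy $\widehat{\bm{h}}_{N}=\bm{0}_{d}$, which is absorbed by the complement event because $\eta<\|\bm{h}\|_{2}$.
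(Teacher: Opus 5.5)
Your proposal is correct and follows essentially the same route as the paper. Both reduce to the latent context $\bm{G}_{a,N}$, whose law does not depend on $\bm{U}$; both use joint continuity of the polynomial recursion and compactness of $\mathcal{O}(d)$ to make the latent direction close to $\bm{h}$ uniformly in $\bm{U}$, then finish with continuity of $\Phi(-\Gamma_{\epsilon}(\cdot))$ at the nonzero $\bm{h}$ and boundedness of the error. The only difference is minor: the paper combines the strong law with bounded convergence on a coupled $\sup_{\bm{U}}$, whereas you split on $\{\|\bm{G}_{a,N}-\bm{G}_{\star}\|_{\mathrm{F}}\leq\rho\}$ and bound the complement by Chebyshev with the Gaussian moment bound, which is slightly more explicit.
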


\begin{proof}
  Couple the contexts for all task bases using the same latent vectors $\bm{a}_{1},\ldots,\bm{a}_{N}$.
  Equations \cref{eq:context-blocks,eq:latent-finite-context-matrix} and the strong law give
  \begin{equation}
    \sup_{\bm{U}\in\mathcal{O}(d)}\|\bm{G}_{N}(\bm{U})-\operatorname{diag}(\bm{U},1)\bm{G}_{\star}\operatorname{diag}(\bm{U}^{\top},1)\|_{\mathrm{F}}=\|\bm{G}_{a,N}-\bm{G}_{\star}\|_{\mathrm{F}}\to0
  \end{equation}
  almost surely.
  Continuity of the fixed-depth recursion in \cref{eq:fixed-depth-context-recursion} and compactness of $\mathcal{O}(d)$ imply that the finite-context latent direction converges to $\bm{h}$ uniformly over $\bm{U}$ almost surely.
  Since $\bm{h}\neq\bm{0}_{d}$, the map $\bm{v}\mapsto\Phi(-\Gamma_{\epsilon}(\bm{v}))$ is continuous on a neighborhood of the limiting directions.
  By \cref{lem:robust-bayes-characterization}, the conditional robust-error formula therefore converges uniformly over $\bm{U}$ almost surely.
  Since the robust errors and their limit belong to $[0,1]$, bounded convergence gives
  \begin{equation}
    \mathbb{E}_{\bm{a}_{1},\ldots,\bm{a}_{N}}\ab[\sup_{\bm{U}\in\mathcal{O}(d)}\ab|\mathcal{E}_{\epsilon,\bm{U}}(f(\,\cdot\,; \bm{\Theta}_{L}, \mathcal{D}_{N}))-\Phi(-\Gamma_{\epsilon}(\bm{h}))|]\to0.
  \end{equation}
  Under the coupling above, the distribution of each context agrees with that of $\mathcal{D}_{N}$ conditional on its task basis, and hence
  \begin{equation}
    \sup_{\bm{U}\in\mathcal{O}(d)}\ab|\mathbb{E}_{\mathcal{D}_{N}\mid\bm{U}}[\mathcal{E}_{\epsilon,\bm{U}}(f(\,\cdot\,; \bm{\Theta}_{L}, \mathcal{D}_{N}))]-\Phi(-\Gamma_{\epsilon}(\bm{h}))|\leq\mathbb{E}_{\bm{a}_{1},\ldots,\bm{a}_{N}}\ab[\sup_{\bm{U}\in\mathcal{O}(d)}\ab|\mathcal{E}_{\epsilon,\bm{U}}(f(\,\cdot\,; \bm{\Theta}_{L}, \mathcal{D}_{N}))-\Phi(-\Gamma_{\epsilon}(\bm{h}))|],
  \end{equation}
  which proves \cref{eq:uniform-finite-context-error-transfer}.
\end{proof}

\subsection{Finite-task adversarial risk minimization}\label{sec:appendix-finite-learning}

In statements and arguments involving a single generic task, we suppress the task index $t$ from $\bm{U}_{t}$, $\mathcal{D}_{t,N}$, $\bm{x}_{t,n}$, $y_{t,n}$, and derived quantities, whereas we retain $t$ when the $T$ training tasks are considered jointly.
The parameter constraint in \cref{eq:main-adversarial-erm} makes the empirical minimum attainable and is assumed to contain the comparator from \cref{lem:sufficient-depth-comparator}.
For parameter tuples, $\|\bm{\Theta}_{L}\|_{\mathrm{F}}$ denotes the Euclidean product of the $2L$ matrix Frobenius norms.

Let $\mathbb{P}_{\mathrm{task}}$ denote the distribution of a generic task $(\mathcal{D}_{N}, \bm{x}, y)$ induced by the generative process.
Let $\mathcal{S}_{T,N}:={\{(\mathcal{D}_{t,N},\bm{x}_{t},y_{t})\}}_{t=1}^{T}$ consist of $T$ independent tasks drawn from $\mathbb{P}_{\mathrm{task}}$.
For the finite-context representation above, the finite-sample risk in \cref{eq:main-finite-sample-risk} at perturbation level $\epsilon$ is
\begin{equation}
  \mathcal{R}_{\epsilon,T,N}(\bm{\Theta}_{L})=\frac{1}{T}\sum_{t=1}^{T}\mathcal{L}(y_{t}{\bm{\theta}_{\bm{\Theta}_{L}}(\bm{G}_{N}(\bm{U}_{t}))}^{\top}\bm{x}_{t}-\epsilon\sqrt{d}\|\bm{\theta}_{\bm{\Theta}_{L}}(\bm{G}_{N}(\bm{U}_{t}))\|_{2}).
  \label{eq:finite-task-margin-risk}
\end{equation}
Its infinite-task counterpart at context length $N$ is
\begin{equation}
  \mathcal{R}_{\epsilon,\infty,N}(\bm{\Theta}_{L}):=\mathbb{E}_{(\bm{U},\mathcal{D}_{N})}[\mathcal{J}_{\epsilon}(\bm{U}^{\top}\bm{\theta}_{\bm{\Theta}_{L}}(\bm{G}_{N}(\bm{U})))].
\end{equation}

\begin{tcolorbox}[theorem box]
  \adversarialtrainingbayesoptimality*
\end{tcolorbox}

\begin{proof}
  Let $\bm{\Theta}_{L,\star}$ be the comparator from \cref{lem:sufficient-depth-comparator} with the solution $\tau$ of \cref{eq:robust-shift-equation} and $\omega=\omega_{\epsilon}$ from \cref{eq:margin-risk-optimal-direction}.

  \textbf{Uniform concentration over training tasks.}
  Let $q_{\bm{\Theta}_{L}}(\mathcal{Z}_{N})$ denote the summand in \cref{eq:finite-task-margin-risk}, where $\mathcal{Z}_{N}$ contains one context and its independent query.
  Equations \cref{eq:fixed-depth-context-recursion,eq:context-fixed-depth-effective-direction} show that $\bm{\theta}_{\bm{\Theta}_{L}}(\bm{G}_{N}(\bm{U}))$ is a polynomial of fixed degree in $\bm{\Theta}_{L}$ and $\bm{G}_{N}(\bm{U})$.
  Compactness of the parameter constraint in \cref{eq:main-adversarial-erm} and boundedness of $\mathcal{L}'$ imply the existence of nonnegative envelopes $F$ and $M$ such that
  \begin{equation}
    |q_{\bm{\Theta}_{L}}(\mathcal{Z}_{N})|\leq F(\mathcal{Z}_{N}), \qquad |q_{\bm{\Theta}_{L}}(\mathcal{Z}_{N})-q_{\widetilde{\bm{\Theta}}_{L}}(\mathcal{Z}_{N})|\leq M(\mathcal{Z}_{N})\|\bm{\Theta}_{L}-\widetilde{\bm{\Theta}}_{L}\|_{\mathrm{F}}.
  \end{equation}
  The envelopes can be chosen as fixed polynomials in $1+\|\bm{G}_{N}(\bm{U})\|_{\mathrm{F}}$ and $1+\|\bm{x}\|_{2}$, with coefficients independent of $T$ and $N$.
  The Gaussian task model and the averaging in $\bm{G}_{N}(\bm{U})$ give
  \begin{equation}
    \sup_{N\in\mathbb{N}}\mathbb{E}_{\mathcal{Z}_{N}}[{F(\mathcal{Z}_{N})}^{2}+{M(\mathcal{Z}_{N})}^{2}]<\infty.
    \label{eq:finite-task-uniform-envelope-moments}
  \end{equation}
  The constrained parameter set is compact and finite-dimensional and hence has logarithmic Euclidean covering entropy in the covering radius.
  Define
  \begin{equation}
    \Delta_{T,N}:=\max_{\substack{\bm{\Theta}_{L}={\{(\bm{P}_{\ell},\bm{Q}_{\ell})\}}_{\ell=1}^{L} \\ \max_{\ell\in[L]}\{\|\bm{P}_{\ell}\|_{\mathrm{F}},\|\bm{Q}_{\ell}\|_{\mathrm{F}}\}\leq B}}|\mathcal{R}_{\epsilon,T,N}(\bm{\Theta}_{L})-\mathcal{R}_{\epsilon,\infty,N}(\bm{\Theta}_{L})|.
  \end{equation}
  Symmetrization, the Euclidean covering bound, and Cauchy--Schwarz with \cref{eq:finite-task-uniform-envelope-moments} yield a constant $C_{\mathrm{conc}}>0$ such that, for every $T,N\in\mathbb{N}$,
  \begin{equation}
    \mathbb{E}_{\mathcal{S}_{T,N}\sim\mathbb{P}_{\mathrm{task}}^{\otimes T}}[\Delta_{T,N}]\leq C_{\mathrm{conc}}T^{-1/2}.
  \end{equation}

  \textbf{Loss geometry and calibration.}
  Define $\psi(\bm{h}):=\Phi(-\Gamma_{\epsilon}(\bm{h}))$, $\psi^{\star}:=\psi(\bm{h}_{\epsilon}^{\star})$, and
  \begin{equation}
    m(\bm{h}):=\min\{\|\bm{h}-\bm{h}_{\epsilon}^{\star}\|_{2}^{2},1\}.
  \end{equation}
  By \cref{lem:infinite-sample-margin-optimal-directions}, $\bm{h}_{\epsilon}^{\star}=\omega_{\epsilon}\bm{h}_{\tau}$.
  Positive scale invariance of $\Gamma_{\epsilon}$ and \cref{eq:robust-gaussian-score,eq:linear-robust-error,eq:taskwise-robust-bayes-error} therefore give
  \begin{equation}
    \psi^{\star}=\Phi(-\Gamma_{\epsilon}(\bm{h}_{\tau}))=\mathcal{E}_{\epsilon,\bm{U}}(\bm{x}\mapsto{\bm{h}_{\tau}}^{\top}\bm{U}^{\top}\bm{x})=\mathcal{E}_{\epsilon,\bm{U}}^{\star}, \qquad \bm{U}\in\mathcal{O}(d).
  \end{equation}
  The function $\mathcal{J}_{\epsilon}(\,\cdot\,)$ is twice continuously differentiable near the nonzero vector $\bm{h}_{\epsilon}^{\star}$, and its gradient vanishes there.
  Set $\bm{u}^{\star}:=\bm{h}_{\epsilon}^{\star}/\|\bm{h}_{\epsilon}^{\star}\|_{2}$ and $Z^{\star}:={\bm{h}_{\epsilon}^{\star}}^{\top}\bm{a}-\epsilon\sqrt{d}\|\bm{h}_{\epsilon}^{\star}\|_{2}$.
  Direct differentiation at $\bm{h}_{\epsilon}^{\star}$ gives, for every $\bm{v}\neq\bm{0}_{d}$,
  \begin{align}
    &\bm{v}^{\top}\nabla^{2}\mathcal{J}_{\epsilon}(\bm{h}_{\epsilon}^{\star})\bm{v} \\
    &= \mathbb{E}_{\bm{a}\sim\mathcal{N}(\bm{\mu},\bm{\Sigma})}[\mathcal{L}''(Z^{\star}){({\bm{v}}^{\top}(\bm{a}-\epsilon\sqrt{d}\bm{u}^{\star}))}^{2}] \\
    &-\frac{\epsilon\sqrt{d}\mathbb{E}_{\bm{a}\sim\mathcal{N}(\bm{\mu},\bm{\Sigma})}[\mathcal{L}'(Z^{\star})]}{\|\bm{h}_{\epsilon}^{\star}\|_{2}}(\|\bm{v}\|_{2}^{2}-{({\bm{v}}^{\top}\bm{u}^{\star})}^{2}) \\
    &>0.
  \end{align}
  The strict inequality holds because the first expectation is positive by $\mathcal{L}''>0$ and $\bm{\Sigma}\succ\bm{0}_{d,d}$, while the second term is nonnegative by $\mathcal{L}'<0$.
  Taylor's theorem gives a quadratic upper and lower bound for the margin-risk gap on a neighborhood of $\bm{h}_{\epsilon}^{\star}$.
  Boundedness of $\mathcal{L}'$ makes $\mathcal{J}_{\epsilon}(\,\cdot\,)$ globally Lipschitz, so its quadratic upper bound extends outside this neighborhood because $\|\bm{h}-\bm{h}_{\epsilon}^{\star}\|_{2}$ is then bounded away from zero.
  Uniqueness and coercivity of $\mathcal{J}_{\epsilon}(\,\cdot\,)$ extend the truncated lower bound.
  Positive scale invariance of $\psi$, the identity $\bm{h}_{\epsilon}^{\star}=\omega_{\epsilon}\bm{h}_{\tau}$ from \cref{eq:margin-risk-optimal-direction}, and \cref{eq:global-quadratic-robust-error-stability} give $\psi(\bm{h})-\psi^{\star}\leq C_{\psi}\max\{\omega_{\epsilon}^{-2},1\}m(\bm{h})$.
  Consequently, there are constants $C_{\mathcal{J}},c_{m},C_{m}>0$ such that, for every $\bm{h}\in\mathbb{R}^{d}$,
  \begin{align}
    0\leq\mathcal{J}_{\epsilon}(\bm{h})-\mathcal{J}_{\epsilon}(\bm{h}_{\epsilon}^{\star})\leq C_{\mathcal{J}}\|\bm{h}-\bm{h}_{\epsilon}^{\star}\|_{2}^{2}, \\
    c_{m}m(\bm{h})\leq\mathcal{J}_{\epsilon}(\bm{h})-\mathcal{J}_{\epsilon}(\bm{h}_{\epsilon}^{\star}), \\
    0\leq\psi(\bm{h})-\psi^{\star}\leq C_{m}m(\bm{h}).
    \label{eq:global-robust-margin-calibration}
  \end{align}

  \textbf{Finite-context comparator and finite-sample risk.}
  Set $\bm{\delta}_{N}:=\bm{\theta}_{\bm{\Theta}_{L,\star}}(\bm{G}_{a,N})-\bm{h}_{\epsilon}^{\star}$.
  Applying \cref{lem:sufficient-depth-comparator-rate} with $\bm{U}=\bm{I}_{d}$ gives
  \begin{equation}
    \mathbb{E}_{\bm{a}_{1},\ldots,\bm{a}_{N}}[\|\bm{\delta}_{N}\|_{2}^{2}]\leq\frac{C_{\mathrm{cmp}}}{N}.
  \end{equation}
  The infinite-context direction of $\bm{\Theta}_{L,\star}$ is $\bm{h}_{\epsilon}^{\star}$, so \cref{eq:infinite-sample-fixed-depth-margin-risk} gives $\mathcal{R}_{\epsilon,\infty,\infty}(\bm{\Theta}_{L,\star})=\mathcal{J}_{\epsilon}(\bm{h}_{\epsilon}^{\star})$.
  The quadratic upper bound in \cref{eq:global-robust-margin-calibration} therefore gives
  \begin{equation}
    \mathcal{R}_{\epsilon,\infty,N}(\bm{\Theta}_{L,\star})-\mathcal{R}_{\epsilon,\infty,\infty}(\bm{\Theta}_{L,\star})\leq\frac{C_{\mathcal{J}}C_{\mathrm{cmp}}}{N}.
  \end{equation}
  Every conditional latent risk is at least $\mathcal{J}_{\epsilon}(\bm{h}_{\epsilon}^{\star})$ by \cref{lem:infinite-sample-margin-optimal-directions}.
  The minimizing property in \cref{eq:main-adversarial-erm} and the definition of $\Delta_{T,N}$ yield
  \begin{equation}
    \begin{aligned}
      0&\leq\mathcal{R}_{\epsilon,\infty,N}(\widehat{\bm{\Theta}}_{L,\epsilon,T,N})-\mathcal{J}_{\epsilon}(\bm{h}_{\epsilon}^{\star}) \\
      &\leq2\Delta_{T,N}+\mathcal{R}_{\epsilon,\infty,N}(\bm{\Theta}_{L,\star})-\mathcal{R}_{\epsilon,\infty,\infty}(\bm{\Theta}_{L,\star}) \\
      &\leq2\Delta_{T,N}+\frac{C_{\mathcal{J}}C_{\mathrm{cmp}}}{N}.
    \end{aligned}
    \label{eq:finite-task-erm-margin-risk-bound}
  \end{equation}

  \textbf{Uniform calibration over the task basis.}
  We next record the finite-dimensional comparison that turns Haar averages into worst-task bounds without changing the rate.
  Let $D_{L}\in\mathbb{N}$ be an upper bound on the total degree in the entries of $\bm{U}$ of the latent direction $\bm{U}^{\top}\bm{\theta}_{\bm{\Theta}_{L}}(\bm{G}_{N}(\bm{U}))$, and let $\mathcal{V}_{d,L}$ be the vector space of restrictions to $\mathcal{O}(d)$ of $\mathbb{R}^{d}$-valued polynomials of degree at most $D_{L}$.
  Since $\mathcal{V}_{d,L}$ is finite dimensional and normalized Haar measure has full support, its uniform norm and its $L^{2}$ norm under Haar measure are equivalent.
  Hence, there is $C_{d,L}>0$ such that
  \begin{equation}
    \|\bm{q}\|_{\infty}^{2}\leq C_{d,L}\mathbb{E}_{\bm{U}\in\mathcal{O}(d)}[\|\bm{q}(\bm{U})\|_{2}^{2}], \qquad \bm{q}\in\mathcal{V}_{d,L}.
  \end{equation}
  Consequently,
  \begin{equation}
    \mathbb{E}_{\bm{U}\in\mathcal{O}(d)}[\min\{\|\bm{q}(\bm{U})\|_{2}^{2},1\}]\geq\frac{\mathbb{E}_{\bm{U}\in\mathcal{O}(d)}[\|\bm{q}(\bm{U})\|_{2}^{2}]}{\max\{\|\bm{q}\|_{\infty}^{2},1\}}\geq\frac{\min\{\|\bm{q}\|_{\infty}^{2},1\}}{C_{d,L}}.
  \end{equation}
  Since $\min\{\|\bm{q}\|_{\infty}^{2},1\}=\sup_{\bm{U}\in\mathcal{O}(d)}\min\{\|\bm{q}(\bm{U})\|_{2}^{2},1\}$, every $\bm{q}\in\mathcal{V}_{d,L}$ satisfies
  \begin{equation}
    \sup_{\bm{U}\in\mathcal{O}(d)}\min\{\|\bm{q}(\bm{U})\|_{2}^{2},1\}\leq C_{d,L}\mathbb{E}_{\bm{U}\in\mathcal{O}(d)}[\min\{\|\bm{q}(\bm{U})\|_{2}^{2},1\}].
    \label{eq:polynomial-worst-average-comparison}
  \end{equation}

  Using the coupled latent context matrix $\bm{G}_{a,N}$ from \cref{eq:latent-finite-context-matrix}, the ambient context is $\bm{G}_{N}(\bm{U})=\operatorname{diag}(\bm{U},1)\bm{G}_{a,N}\operatorname{diag}(\bm{U}^{\top},1)$.
  The recursion in \cref{eq:fixed-depth-context-recursion} uses only matrix addition and multiplication, so for fixed latent vectors and fixed $\bm{\Theta}_{L}$, the map
  \begin{equation}
    \bm{U}\mapsto\bm{U}^{\top}\bm{\theta}_{\bm{\Theta}_{L}}(\bm{G}_{N}(\bm{U}))-\bm{h}_{\epsilon}^{\star}
  \end{equation}
  belongs to $\mathcal{V}_{d,L}$, with a degree bound independent of $N$, $\bm{\Theta}_{L}$, and the realized latent vectors.
  Applying \cref{eq:global-robust-margin-calibration,eq:polynomial-worst-average-comparison} for each fixed latent context and then taking expectations gives
  \begin{equation}
    \begin{aligned}
      0&\leq\sup_{\bm{U}\in\mathcal{O}(d)}\mathbb{E}_{\mathcal{D}_{N}\mid\bm{U}}[\mathcal{E}_{\epsilon,\bm{U}}(f(\,\cdot\,; \bm{\Theta}_{L}, \mathcal{D}_{N}))-\mathcal{E}_{\epsilon,\bm{U}}^{\star}] \\
      &\leq C_{m}\mathbb{E}_{\bm{a}_{1:N}}\ab[\sup_{\bm{U}\in\mathcal{O}(d)}m(\bm{U}^{\top}\bm{\theta}_{\bm{\Theta}_{L}}(\bm{G}_{N}(\bm{U})))] \\
      &\leq C_{m}C_{d,L}\mathbb{E}_{(\bm{U},\bm{a}_{1:N})}[m(\bm{U}^{\top}\bm{\theta}_{\bm{\Theta}_{L}}(\bm{G}_{N}(\bm{U})))] \\
      &\leq\frac{C_{m}C_{d,L}}{c_{m}}\{\mathcal{R}_{\epsilon,\infty,N}(\bm{\Theta}_{L})-\mathcal{J}_{\epsilon}(\bm{h}_{\epsilon}^{\star})\}.
    \end{aligned}
    \label{eq:uniform-taskwise-robust-margin-calibration}
  \end{equation}
  Conditional on the pretraining sample, apply \cref{eq:finite-task-erm-margin-risk-bound} to \cref{eq:uniform-taskwise-robust-margin-calibration} with $\bm{\Theta}_{L}=\widehat{\bm{\Theta}}_{L,\epsilon,T,N}$, and then take the expectation over the pretraining sample.
  Define
  \begin{equation}
    C:=\frac{C_{m}C_{d,L}}{c_{m}}(2C_{\mathrm{conc}}+C_{\mathcal{J}}C_{\mathrm{cmp}}).
  \end{equation}
  Then, for every fixed $\bm{U} \in \mathcal{O}(d)$,
  \begin{equation}
    \mathbb{E}_{\mathcal{S}_{T,N},\mathcal{D}_{N}\mid\bm{U}}[\mathcal{E}_{\epsilon,\bm{U}}(f(\,\cdot\,; \widehat{\bm{\Theta}}_{L,\epsilon,T,N}, \mathcal{D}_{N}))-\mathcal{E}_{\epsilon,\bm{U}}^{\star}]\leq C(T^{-1/2}+N^{-1}).
  \end{equation}
  Since the excess error is nonnegative, Markov's inequality shows that, for every $\delta \in (0,1)$,
  \begin{equation}
    \mathbb{P}_{\mathcal{S}_{T,N},\mathcal{D}_{N}\mid\bm{U}}\ab(\mathcal{E}_{\epsilon,\bm{U}}(f(\,\cdot\,; \widehat{\bm{\Theta}}_{L,\epsilon,T,N}, \mathcal{D}_{N}))-\mathcal{E}_{\epsilon,\bm{U}}^{\star}>\frac{C}{\delta}(T^{-1/2}+N^{-1}))\leq\delta.
  \end{equation}
  This proves \cref{eq:main-adversarial-training-probability-rate}.
\end{proof}

\begin{tcolorbox}[theorem box]
  \standardtrainingbayesoptimality*
\end{tcolorbox}

\begin{proof}
  Define $\bm{h}_{0}:=\bm{\Sigma}^{-1}\bm{\mu}$.
  Applying \cref{eq:margin-risk-optimal-direction} at $\epsilon=0$, the unique minimizer of the standard latent margin risk is $\bm{h}_{0}^{\star}=\omega_{0}\bm{h}_{0}$.
  Let $\bm{\Theta}_{L,\mathrm{std}}$ be the comparator from \cref{lem:sufficient-depth-comparator} with $\tau=0$ and $\omega=\omega_{0}$, and take $B$ sufficiently large that this comparator is feasible in \cref{eq:main-standard-erm}.
  The conditional lower bound in \cref{eq:conditional-margin-lower-bound} and the comparator construction give
  \begin{equation}
    \mathcal{R}_{0,\infty,\infty}(\bm{\Theta}_{L})
    \geq \mathcal{J}_{0}(\bm{h}_{0}^{\star})
    = \mathcal{R}_{0,\infty,\infty}(\bm{\Theta}_{L,\mathrm{std}}),
  \end{equation}
  for every parameter tuple $\bm{\Theta}_{L}$.
  Therefore, the minimizing property in \cref{eq:main-standard-erm} implies that $\widehat{\bm{\Theta}}_{L,0,\infty,\infty}$ is also an unrestricted global minimizer of the infinite-sample standard risk.
  Applying \cref{thm:unrestricted-fixed-depth-global-minimizers} yields, for every $\bm{U}\in\mathcal{O}(d)$,
  \begin{equation}
    \bm{U}^{\top}\bm{\theta}_{\widehat{\bm{\Theta}}_{L,0,\infty,\infty}}(\operatorname{diag}(\bm{U},1)\bm{G}_{\star}\operatorname{diag}(\bm{U}^{\top},1))
    =\omega_{0}\bm{h}_{0}.
  \end{equation}
  Hence, query linearity, positive scale invariance of $\Gamma_{\epsilon}$, and \cref{lem:robust-bayes-characterization} give
  \begin{equation}
    \begin{aligned}
      &\mathcal{E}_{\epsilon,\bm{U}}(f(\,\cdot\,;\widehat{\bm{\Theta}}_{L,0,\infty,\infty},\mathcal{D}_{\infty}))-\mathcal{E}_{\epsilon,\bm{U}}^{\star} \\
      &\qquad=\Phi(-\Gamma_{\epsilon}(\bm{h}_{0}))-\Phi(-\sqrt{\bm{h}_{\tau}^{\top}\bm{\Sigma}\bm{h}_{\tau}}).
    \end{aligned}
    \label{eq:standard-training-exact-robust-gap}
  \end{equation}
  In particular, the right-hand side is independent of $\bm{U}$.

  If $\epsilon=0$, then $\tau=0$ and $\bm{h}_{\tau}=\bm{h}_{0}$, so \cref{eq:standard-training-exact-robust-gap} is zero.
  Suppose next that $r=1$, and let $\lambda>0$ be the unique active eigenvalue of $\bm{\Sigma}$.
  Then $\bm{\Sigma}^{-1}\bm{\mu}=\lambda^{-1}\bm{\mu}$ and $\bm{h}_{\tau}=(\lambda+\tau)^{-1}\bm{\mu}$, so the two directions are positive multiples of each other and \cref{eq:standard-training-exact-robust-gap} is again zero.

  It remains to consider $\epsilon>0$ and $r\geq2$.
  Equation \cref{eq:robust-shift-equation} implies $\tau>0$.
  Choose two distinct active eigenvalues $\lambda_{i}\neq\lambda_{j}$ of $\bm{\Sigma}$, with spectral projectors $\bm{E}_{i}$ and $\bm{E}_{j}$.
  If $\bm{h}_{0}=c\bm{h}_{\tau}$ for some $c>0$, then projecting this identity onto either active eigenspace gives
  \begin{equation}
    c=\frac{\lambda_{k}+\tau}{\lambda_{k}}=1+\frac{\tau}{\lambda_{k}}, \qquad k\in\{i,j\}.
  \end{equation}
  This is impossible because $\tau>0$ and $\lambda_{i}\neq\lambda_{j}$.
  Thus, $\bm{h}_{0}$ is not a positive multiple of $\bm{h}_{\tau}$, so the uniqueness conclusion of \cref{lem:robust-bayes-characterization} shows that its robust error is strictly larger than the robust Bayes error.
  Consequently, the constant
  \begin{equation}
    C:=\Phi(-\Gamma_{\epsilon}(\bm{h}_{0}))-\Phi(-\sqrt{\bm{h}_{\tau}^{\top}\bm{\Sigma}\bm{h}_{\tau}})
  \end{equation}
  is strictly positive, and \cref{eq:standard-training-exact-robust-gap} equals $C$ for every $\bm{U}\in\mathcal{O}(d)$.
  In the zero-gap cases, any positive choice of $C$ is admissible because that constant does not appear in the corresponding branch of \cref{eq:main-standard-training-probability-rate}.
\end{proof}

\section{Gradient Flow to Robust Bayes Optimality}

For each task $t$, the adversarial loss as a function of the last-layer parameter takes the form
\begin{equation*}
  \bm{Q}\mapsto\mathcal{L}(y_{t}{\bm{\theta}_{t}(\bm{Q})}^{\top}\bm{x}_{t}-\epsilon\sqrt{d}\|\bm{\theta}_{t}(\bm{Q})\|_{2}).
\end{equation*}
Because the Euclidean norm is nondifferentiable at $\bm{0}_{d}$, this objective need not be differentiable when $\bm{\theta}_{t}(\bm{Q})=\bm{0}_{d}$; we therefore use subgradient flow, which agrees with ordinary gradient flow wherever all effective directions are nonzero.

\begin{theorem}[Bayes consistency of last-layer gradient flow]\label{thm:main-last-layer-empirical-subgradient-flow}
  Assume $L = 2$ and $\bm{P}_1 = \bm{P}_2 = \bm{Q}_1 = \bm{I}_{d+1}$.
  Suppose that \cref{asm:summary} holds.
  For sufficiently large $B > 0$, let $\mathcal{Q}_{B} := \{\bm{Q} \in \mathbb{R}^{(d+1) \times (d+1)}: \|\bm{Q}\|_{\mathrm{F}} \leq B\}$.
  Fix any $\bm{Q}_{0}\in\mathbb{R}^{(d+1)\times(d+1)}$.
  Let $\bm{Q}_{T,N}(s)$ be the unique global solution to the Frobenius subgradient flow
  \begin{equation}
    \dot{\bm{Q}}_{T,N}(s)\in-\partial_{\bm{Q}}\mathcal{R}_{\epsilon,T,N}(\bm{Q}_{T,N}(s)), \qquad \bm{Q}_{T,N}(0)=\bm{Q}_{0}.
    \label{eq:main-last-layer-empirical-subgradient-flow}
  \end{equation}
  Assume that $\mathcal{Q}_{B}$ contains a population-risk minimizer and, almost surely, the entire unconstrained subgradient-flow trajectory $\{\bm{Q}_{T,N}(s):s\geq0\}$.
  Then, for every $\bm{U}\in\mathcal{O}(d)$ and $s>0$,
  \begin{equation}
    \mathcal{E}_{\epsilon,\bm{U}}(f(\,\cdot\,; \bm{Q}_{T,N}(s), \mathcal{D}_{N}))-\mathcal{E}_{\epsilon,\bm{U}}^{\star}\lesssim_{\mathbb{P}}\frac{1}{s}+\frac{1}{\sqrt{T}}+\frac{1}{N}.
    \label{eq:main-last-layer-empirical-bayes-rate}
  \end{equation}
\end{theorem}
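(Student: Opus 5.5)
The plan is to exploit that, with $\bm{P}_{1}=\bm{P}_{2}=\bm{Q}_{1}=\bm{I}_{d+1}$ frozen, the effective direction is \emph{affine} in the trainable matrix $\bm{Q}:=\bm{Q}_{2}$. This makes the adversarial empirical risk convex. Convex subgradient flow then gives an optimization error of order $1/s$, and the statistical part reuses the proof of \cref{thm:main-adversarial-training-bayes-optimality} essentially verbatim.

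\textbf{Step 1: affine structure and convexity.}
From \cref{eq:fixed-depth-context-recursion}, $\bm{A}^{(1)}=\bm{I}_{d+1}+\bm{G}$ does not depend on $\bm{Q}$. The second layer gives
\begin{equation*}
  \bm{A}^{(2)}=\bigl(\bm{I}_{d+1}+\bm{A}^{(1)}\bm{G}\bm{A}^{(1)\top}\bm{Q}\bigr)\bm{A}^{(1)}.
\end{equation*}
Hence, by \cref{eq:context-fixed-depth-effective-direction}, $\bm{\theta}_{t}(\bm{Q})=\bm{c}_{t}+\mathcal{M}_{t}(\bm{Q})$, where $\bm{c}_{t}$ is a fixed vector and $\mathcal{M}_{t}$ is a linear map depending only on $\bm{G}_{N}(\bm{U}_{t})$.

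Each summand of \cref{eq:finite-task-margin-risk} has the form $\mathcal{L}(\text{concave function of }\bm{\theta}_{t})$. Since $\mathcal{L}$ is convex and nonincreasing, each summand is convex in $\bm{Q}$. Therefore $\mathcal{R}_{\epsilon,T,N}$ is convex and locally Lipschitz on $\mathbb{R}^{(d+1)\times(d+1)}$, and the subgradient flow is well posed.

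Fix any minimizer $\widehat{\bm{Q}}$ of $\mathcal{R}_{\epsilon,T,N}$ over $\mathcal{Q}_{B}$. Two standard facts about subgradient flows of convex functions (via the chain rule for absolutely continuous curves) will be used:
\begin{enumerate}[label = {(\roman*)}, leftmargin = *]
  \item $s\mapsto\mathcal{R}_{\epsilon,T,N}(\bm{Q}_{T,N}(s))$ is nonincreasing;
  \item $\frac{d}{ds}\tfrac12\|\bm{Q}_{T,N}(s)-\widehat{\bm{Q}}\|_{\mathrm{F}}^{2}\leq-\{\mathcal{R}_{\epsilon,T,N}(\bm{Q}_{T,N}(s))-\mathcal{R}_{\epsilon,T,N}(\widehat{\bm{Q}})\}$.
\end{enumerate}
Integrating (ii) and using (i) gives
\begin{equation*}
  \mathcal{R}_{\epsilon,T,N}(\bm{Q}_{T,N}(s))-\mathcal{R}_{\epsilon,T,N}(\widehat{\bm{Q}})\leq\frac{\|\bm{Q}_{0}-\widehat{\bm{Q}}\|_{\mathrm{F}}^{2}}{2s}\leq\frac{(\|\bm{Q}_{0}\|_{\mathrm{F}}+B)^{2}}{2s}.
\end{equation*}

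\textbf{Step 2: a comparator inside the last-layer family.}
I must show that some $\bm{Q}_{\star}\in\mathcal{Q}_{B}$ has population latent direction exactly $\bm{h}_{\epsilon}^{\star}=\omega_{\epsilon}\bm{h}_{\tau}$ for every $\bm{U}$. This plays the role of \cref{lem:sufficient-depth-comparator}; the hypothesis that $\mathcal{Q}_{B}$ contains a population-risk minimizer is presumably meant to supply it.

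I would restrict to rotation-equivariant $\bm{Q}=\alpha\bm{\Pi}_{x}+\beta\bm{\Pi}_{y}+\gamma\bm{e}_{d+1}\bm{e}_{d+1}^{\top}$-type blocks, plus the off-diagonal blocks $\bm{\Pi}_{x}\bm{Q}\bm{\Pi}_{y}$ and $\bm{\Pi}_{y}\bm{Q}\bm{\Pi}_{x}$ parametrized by scalar multiples of $\bm{U}\bm{\mu}$-type terms. In latent coordinates I then compute the population direction as $p(\bm{S})\bm{\mu}$, where $p$ is an affine function of the scalar parameters with values in the polynomials in $\bm{S}$.

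The goal is to show that the reachable set of such $p(\bm{S})\bm{\mu}$ is an affine subspace containing $\mathcal{K}_{2}(\bm{S},\bm{\mu})=\mathcal{K}_{2}(\bm{\Sigma},\bm{\mu})$, or at least containing the point $\omega_{\epsilon}(\bm{\Sigma}+\tau\bm{I}_{d})^{-1}\bm{\mu}$. Since $r\leq2$, part (ii) of \cref{lem:main-robust-bayes-realization-mechanism} places this target in $\mathcal{K}_{2}$.

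This is the step I expect to be the main obstacle. The fixed identity first layer mixes the feature and label rows, so the fixed offset $\bm{c}_{t}$ and the degrees of the resulting polynomials must be tracked explicitly. Concretely, I would need to check that two linear functionals of the free scalars can independently set the coefficients of $\bm{\mu}$ and $\bm{S}\bm{\mu}$, after absorbing the offset. I would try first the two-scalar subfamily $\alpha\bm{\Pi}_{x}+\beta\bm{\Pi}_{y}$ and verify that its $2\times2$ coefficient matrix is nonsingular.

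Once the comparator exists, \cref{thm:unrestricted-fixed-depth-global-minimizers}'s argument shows that $\mathcal{R}_{\epsilon,\infty,\infty}(\bm{Q}_{\star})=\mathcal{J}_{\epsilon}(\bm{h}_{\epsilon}^{\star})$. In addition, \cref{lem:sufficient-depth-comparator-rate} (the same polynomial-moment argument) gives the $1/N$ finite-context bound for $\bm{Q}_{\star}$.

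\textbf{Step 3: statistical and calibration transfer.}
Define $\Delta_{T,N}$ over $\mathcal{Q}_{B}$ exactly as before; the envelope and covering argument gives $\mathbb{E}[\Delta_{T,N}]\leq C_{\mathrm{conc}}T^{-1/2}$. Because the trajectory lies in $\mathcal{Q}_{B}$ almost surely, the decomposition
\begin{equation*}
  \mathcal{R}_{\epsilon,\infty,N}(\bm{Q}_{T,N}(s))-\mathcal{J}_{\epsilon}(\bm{h}_{\epsilon}^{\star})
  \leq2\Delta_{T,N}+\bigl[\mathcal{R}_{\epsilon,T,N}(\bm{Q}_{T,N}(s))-\mathcal{R}_{\epsilon,T,N}(\widehat{\bm{Q}})\bigr]+\bigl[\mathcal{R}_{\epsilon,\infty,N}(\bm{Q}_{\star})-\mathcal{J}_{\epsilon}(\bm{h}_{\epsilon}^{\star})\bigr]
\end{equation*}
holds. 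This uses $\mathcal{R}_{\epsilon,T,N}(\widehat{\bm{Q}})\leq\mathcal{R}_{\epsilon,T,N}(\bm{Q}_{\star})$. Substituting the bounds from Steps 1 and 2 gives a right-hand side of order $T^{-1/2}+s^{-1}+N^{-1}$ in expectation.

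Finally, I apply the uniform calibration chain \cref{eq:uniform-taskwise-robust-margin-calibration}. It applies here because $\bm{U}\mapsto\bm{U}^{\top}\bm{\theta}(\bm{G}_{N}(\bm{U}))-\bm{h}_{\epsilon}^{\star}$ is again polynomial in $\bm{U}$ of bounded degree. Taking expectations and then applying Markov's inequality yields \cref{eq:main-last-layer-empirical-bayes-rate}.
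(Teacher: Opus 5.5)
\textbf{Where the proposal matches the paper.} Your Steps 1 and 3 are the paper's argument:
\begin{itemize}
  \item affine dependence of $\bm{\theta}_{t}$ on $\bm{Q}$ and convexity of $\mathcal{R}_{\epsilon,T,N}$;
  \item the $\|\bm{Q}_{0}-\widehat{\bm{Q}}\|_{\mathrm{F}}^{2}/(2s)$ bound from the evolution inequality plus monotonicity;
  \item the same three-term decomposition with $2\Delta_{T,N}$;
  \item the $1/N$ comparator rate, the polynomial worst-to-average calibration, and Markov.
\end{itemize}

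\textbf{The gap is Step 2.} This is the only ingredient not inherited from \cref{thm:main-adversarial-training-bayes-optimality}, and you leave it as something to ``try'' and ``verify''. Neither of your two fallbacks supplies it.
\begin{itemize}
  \item The hypothesis that $\mathcal{Q}_{B}$ contains a population-risk minimizer (with $B$ large) only places a comparator in the constraint set once one is known to exist. A minimizer of the \emph{restricted} last-layer population risk need not attain $\mathcal{J}_{\epsilon}(\bm{h}_{\epsilon}^{\star})$ a priori.
  \item Your enlarged family is not admissible. $\bm{Q}$ must be one matrix shared by all tasks, so off-diagonal blocks built from $\bm{U}\bm{\mu}$ are task-dependent. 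The fixed matrices commuting with every $\operatorname{diag}(\bm{U},1)$ are only $q_{x}\bm{\Pi}_{x}+q_{y}\bm{\Pi}_{y}$, and your $\gamma\bm{e}_{d+1}\bm{e}_{d+1}^{\top}$ term is just $\bm{\Pi}_{y}$ again.
\end{itemize}

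\textbf{How to close it.} The two-scalar family does work, but it needs the explicit computation. At $\bm{G}_{\star}$, set $\bm{G}_{1}:=(\bm{I}_{d+1}+\bm{G}_{\star})\bm{G}_{\star}(\bm{I}_{d+1}+\bm{G}_{\star})$. The direction is the feature block of $(\bm{I}_{d+1}+\bm{G}_{\star})(\bm{e}_{d+1}+\bm{Q}\bm{G}_{1}\bm{e}_{d+1})$, and multiplying out gives
\begin{gather*}
  \bm{h}(q_{x},q_{y})=(1+zq_{y})\bm{\mu}+q_{x}p(\bm{S})\bm{\mu},\qquad z:=\bm{\mu}^{\top}\bm{S}\bm{\mu}+4\|\bm{\mu}\|_{2}^{2}+4>0,\\
  p(s):=(1+s)(s^{2}+3s+\|\bm{\mu}\|_{2}^{2}+4),\qquad p'(s)=3s^{2}+8s+\|\bm{\mu}\|_{2}^{2}+7>0.
\end{gather*}
Since $p$ is cubic, the right criterion is not independent control of the coefficients of $\bm{\mu}$ and $\bm{S}\bm{\mu}$. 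It is linear independence of $\bm{\mu}$ and $p(\bm{S})\bm{\mu}$, and two facts give it:
\begin{itemize}
  \item Since $z>0$, $q_{y}$ sets the $\bm{\mu}$-coefficient freely and absorbs the offset.
  \item Since $p$ is strictly increasing, for $r=2$ the two distinct active eigenvalues $s_{1}\neq s_{2}$ of $\bm{S}$ from \cref{lem:active-spectrum-count} satisfy $p(s_{1})\neq p(s_{2})$. Hence $\bm{\mu}=\bm{F}_{s_{1}}\bm{\mu}+\bm{F}_{s_{2}}\bm{\mu}$ and $p(\bm{S})\bm{\mu}=p(s_{1})\bm{F}_{s_{1}}\bm{\mu}+p(s_{2})\bm{F}_{s_{2}}\bm{\mu}$ span $\mathcal{K}_{r}(\bm{\Sigma},\bm{\mu})$, which contains $\bm{h}_{\epsilon}^{\star}$.
\end{itemize}
For $r=1$, take $q_{x}=0$. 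Rotation equivariance then gives the same latent direction for every $\bm{U}$. With this inserted, your proof is complete and coincides with the paper's.
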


\begin{proof}
  For task $t$, let $\bm{G}_{t,N}$ be its finite-context matrix in \cref{eq:context-blocks}.
  \begin{equation}
    \bm{A}_{t,1}:=\bm{I}_{d+1}+\bm{G}_{t,N}, \qquad \bm{G}_{t,1}:=\bm{A}_{t,1}\bm{G}_{t,N}\bm{A}_{t,1}^{\top}.
  \end{equation}
  The final cumulative transformation and its effective direction are
  \begin{equation}
    \bm{A}_{t,2}(\bm{Q})=\{\bm{I}_{d+1}+\bm{G}_{t,1}\bm{Q}\}\bm{A}_{t,1}, \qquad \bm{\theta}_{t}(\bm{Q}):=
    \begin{bmatrix}\bm{I}_{d} & \bm{0}_{d}
    \end{bmatrix}{\bm{A}_{t,2}(\bm{Q})}^{\top}\bm{e}_{d+1}.
    \label{eq:last-layer-finite-context-affine-direction}
  \end{equation}
  Thus, $\bm{\theta}_{t}(\bm{Q})$ is affine in $\bm{Q}$.
  Since $\mathcal{L}$ is decreasing, query linearity gives
  \begin{equation}
    \mathcal{R}_{\epsilon,T,N}(\bm{Q})=\frac{1}{T}\sum_{t=1}^{T}\mathcal{L}\ab(y_{t,N+1}{\bm{\theta}_{t}(\bm{Q})}^{\top}\bm{x}_{t,N+1}-\epsilon\sqrt{d}\|\bm{\theta}_{t}(\bm{Q})\|_{2}).
    \label{eq:last-layer-finite-sample-risk}
  \end{equation}
  For every $t$, the argument of $\mathcal{L}$ in \cref{eq:last-layer-finite-sample-risk} is concave in $\bm{Q}$.
  Because $\mathcal{L}$ is convex and decreasing, its composition with this argument is convex.
  Hence, $\mathcal{R}_{\epsilon,T,N}(\,\cdot\,)$ is finite, continuous, and convex.

  We next show that the restricted last-layer class contains an infinite-sample optimal comparator.
  For $q_{x},q_{y}\in\mathbb{R}$, define $\bm{Q}(q_{x},q_{y}):=q_{x}\bm{\Pi}_{x}+q_{y}\bm{\Pi}_{y}$.
  Replacing the finite-context matrix in \cref{eq:last-layer-finite-context-affine-direction} by the population context and multiplying the blocks at $\bm{U}=\bm{I}_{d}$ gives the population latent direction
  \begin{align}
    \bm{h}(q_{x},q_{y})=(1+zq_{y})\bm{\mu}+q_{x}\bm{v}, \\
    z:=\bm{\mu}^{\top}\bm{S}\bm{\mu}+4\|\bm{\mu}\|_{2}^{2}+4>0, \\
    \bm{v}:=(\bm{I}_{d}+\bm{S})\{\bm{S}^{2}+3\bm{S}+(\|\bm{\mu}\|_{2}^{2}+4)\bm{I}_{d}\}\bm{\mu}.
    \label{eq:last-layer-population-affine-direction}
  \end{align}
  The same latent direction is realized for every $\bm{U}\in\mathcal{O}(d)$ because the fixed prefix and $\bm{Q}(q_{x},q_{y})$ commute with $\operatorname{diag}(\bm{U},1)$.
  Let $\mathcal{K}:=\mathcal{K}_{r}(\bm{\Sigma},\bm{\mu})$ and recall that $r\leq2$ by assumption.
  The subspace $\mathcal{K}$ contains $\bm{\mu}$ and, by \cref{eq:margin-risk-optimal-direction}, $\bm{h}_{\epsilon}^{\star}$, and it is invariant under $\bm{S}$.
  By \cref{lem:active-spectrum-count}, $\bm{S}$ has exactly $r$ distinct active eigenvalues on $\mathcal{K}$.
  Write
  \begin{equation}
    \bm{v}=p(\bm{S})\bm{\mu}, \qquad p(s):=(1+s)(s^{2}+3s+\|\bm{\mu}\|_{2}^{2}+4), \qquad p'(s)=3s^{2}+8s+\|\bm{\mu}\|_{2}^{2}+7>0.
  \end{equation}
  If $r=1$, then $\bm{\mu}$ spans $\mathcal{K}$.
  If $r=2$, the spectral theorem and strict monotonicity of $p$ on the positive active eigenvalues make $\bm{\mu}$ and $\bm{v}$ linearly independent.
  Thus, $\operatorname{span}\{\bm{\mu},\bm{v}\}=\mathcal{K}$ in both cases.
  Since $z>0$, \cref{eq:last-layer-population-affine-direction} therefore gives scalars $q_{x}^{\star},q_{y}^{\star}$ such that $\bm{h}(q_{x}^{\star},q_{y}^{\star})=\bm{h}_{\epsilon}^{\star}$.
  Set $\bm{Q}^{\star}:=\bm{Q}(q_{x}^{\star},q_{y}^{\star})$, which belongs to $\mathcal{Q}_{B}$ by the choice of $B$.
  Equation \cref{eq:infinite-sample-fixed-depth-margin-risk} and the optimality of $\bm{h}_{\epsilon}^{\star}$ show that $\bm{Q}^{\star}$ minimizes $\mathcal{R}_{\epsilon,\infty,\infty}$ even within the unrestricted depth-two class.

  Continuity of the finite-sample risk and compactness of $\mathcal{Q}_{B}$ give a finite-sample minimizer
  \begin{equation}
    \widehat{\bm{Q}}_{T,N}\in\operatorname*{argmin}_{\bm{Q}\in\mathcal{Q}_{B}}\mathcal{R}_{\epsilon,T,N}(\bm{Q}).
    \label{eq:last-layer-empirical-minimizer}
  \end{equation}
  Since the finite-sample risk is finite, continuous, and convex, its subdifferential is maximal monotone.
  Therefore, the flow in \cref{eq:main-last-layer-empirical-subgradient-flow} exists uniquely for all $s\geq0$.

  Because $\bm{Q}_{T,N}(s)$ and $\widehat{\bm{Q}}_{T,N}$ belong to $\mathcal{Q}_{B}$, the finite-sample risk gap between them is nonnegative.
  The subgradient inequality gives, for almost every $s\geq0$,
  \begin{equation}
    \frac{\mathrm{d}}{\mathrm{d}s}\frac{1}{2}\|\bm{Q}_{T,N}(s)-\widehat{\bm{Q}}_{T,N}\|_{\mathrm{F}}^{2}\leq-\{\mathcal{R}_{\epsilon,T,N}(\bm{Q}_{T,N}(s))-\mathcal{R}_{\epsilon,T,N}(\widehat{\bm{Q}}_{T,N})\}.
  \end{equation}
  The convex subgradient-flow chain rule shows that the finite-sample risk is nonincreasing along the flow.
  Integrating the preceding inequality and using this monotonicity and $\|\bm{Q}_{0}-\widehat{\bm{Q}}_{T,N}\|_{\mathrm{F}}\leq2B$ gives, for every $s>0$,
  \begin{equation}
    0\leq\mathcal{R}_{\epsilon,T,N}(\bm{Q}_{T,N}(s))-\mathcal{R}_{\epsilon,T,N}(\widehat{\bm{Q}}_{T,N})\leq\frac{\|\bm{Q}_{0}-\widehat{\bm{Q}}_{T,N}\|_{\mathrm{F}}^{2}}{2s}\leq\frac{2B^{2}}{s}.
    \label{eq:last-layer-empirical-flow-comparison}
  \end{equation}

  It remains to transfer empirical optimization to robust Bayes error.
  Since $\bm{Q}^{\star}$ is fixed, \cref{eq:fixed-depth-context-recursion} makes its finite-context direction a fixed polynomial in the empirical latent moments whose population value is $\bm{h}_{\epsilon}^{\star}$.
  The Gaussian sample-moment bound in \cref{eq:gaussian-empirical-moment-rate} therefore gives, for every $\bm{U}\in\mathcal{O}(d)$,
  \begin{equation}
    \mathbb{E}_{\mathcal{D}_{N}\mid\bm{U}}[\|\bm{U}^{\top}\bm{\theta}_{\bm{Q}^{\star}}(\bm{G}_{N}(\bm{U}))-\bm{h}_{\epsilon}^{\star}\|_{2}^{2}]\lesssim\frac{1}{N}.
    \label{eq:last-layer-finite-context-comparator-rate}
  \end{equation}
  The quadratic upper bound in \cref{eq:global-robust-margin-calibration} therefore yields
  \begin{equation}
    \mathcal{R}_{\epsilon,\infty,N}(\bm{Q}^{\star})-\mathcal{R}_{\epsilon,\infty,\infty}(\bm{Q}^{\star})\lesssim\frac{1}{N}.
    \label{eq:last-layer-finite-context-comparator-risk}
  \end{equation}
  Define the restricted uniform deviation
  \begin{equation}
    \Delta_{T,N}:=\max_{\bm{Q}\in\mathcal{Q}_{B}}|\mathcal{R}_{\epsilon,T,N}(\bm{Q})-\mathcal{R}_{\epsilon,\infty,N}(\bm{Q})|.
  \end{equation}
  The uniform concentration argument in the proof of \cref{thm:main-adversarial-training-bayes-optimality}, with parameter bound $\max\{B,\sqrt{d+1}\}$ and restricted to the compact last-layer class, gives
  \begin{equation}
    \mathbb{E}_{\mathcal{S}_{T,N}\sim\mathbb{P}_{\mathrm{task}}^{\otimes T}}[\Delta_{T,N}]\lesssim\frac{1}{\sqrt{T}},
    \label{eq:last-layer-uniform-empirical-deviation}
  \end{equation}
  uniformly over $N$.
  Fix any $\bm{U}\in\mathcal{O}(d)$.
  Since $\widehat{\bm{Q}}_{T,N}$ minimizes the finite-sample risk over $\mathcal{Q}_{B}$ and $\bm{Q}^{\star}\in\mathcal{Q}_{B}$, \cref{eq:uniform-taskwise-robust-margin-calibration,eq:last-layer-empirical-flow-comparison} give
  \begin{equation}
    \begin{aligned}
      0&\leq\mathbb{E}_{\mathcal{D}_{N}\mid\bm{U}}[\mathcal{E}_{\epsilon,\bm{U}}(f(\,\cdot\,; \bm{Q}_{T,N}(s), \mathcal{D}_{N}))-\mathcal{E}_{\epsilon,\bm{U}}^{\star}] \\
      &\leq\frac{C_{m}C_{d,2}}{c_{m}}\{2\Delta_{T,N}+2B^{2}s^{-1}+\mathcal{R}_{\epsilon,\infty,N}(\bm{Q}^{\star})-\mathcal{R}_{\epsilon,\infty,\infty}(\bm{Q}^{\star})\}.
    \end{aligned}
  \end{equation}
  Taking the expectation over the pretraining sample and applying \cref{eq:last-layer-finite-context-comparator-risk,eq:last-layer-uniform-empirical-deviation} give a constant $C>0$ such that
  \begin{equation}
    \mathbb{E}_{\mathcal{S}_{T,N},\mathcal{D}_{N}\mid\bm{U}}[\mathcal{E}_{\epsilon,\bm{U}}(f(\,\cdot\,; \bm{Q}_{T,N}(s), \mathcal{D}_{N}))-\mathcal{E}_{\epsilon,\bm{U}}^{\star}]\leq C\ab(\frac{1}{s}+\frac{1}{\sqrt{T}}+\frac{1}{N}).
  \end{equation}
  Markov's inequality therefore gives, for every $\delta\in(0,1)$,
  \begin{equation}
    \mathbb{P}_{\mathcal{S}_{T,N},\mathcal{D}_{N}\mid\bm{U}}\ab(\mathcal{E}_{\epsilon,\bm{U}}(f(\,\cdot\,; \bm{Q}_{T,N}(s), \mathcal{D}_{N}))-\mathcal{E}_{\epsilon,\bm{U}}^{\star}>\frac{C}{\delta}\ab(\frac{1}{s}+\frac{1}{\sqrt{T}}+\frac{1}{N}))\leq\delta,
  \end{equation}
  which proves \cref{eq:main-last-layer-empirical-bayes-rate}.
\end{proof}

\section{Accuracy--Robustness Trade-Off}

\begin{theorem}[Adversarial training loses standard accuracy]\label{thm:main-accuracy-robustness-trade-off}
  Suppose that \cref{asm:summary} holds.
  Suppose also that $T,N\to\infty$; write $\mathcal{D}_{\infty} := \mathcal{D}_{N}$ and $\mathcal{R}_{\epsilon,\infty,\infty}:=\mathcal{R}_{\epsilon,T,N}$.
  For sufficiently large $B>0$, choose any population risk minimizer
  \begin{equation}
    \widehat{\bm{\Theta}}_{L,\epsilon,\infty,\infty}\in\operatorname*{argmin}_{\max_{\ell\in[L]}\{\|\bm{P}_{\ell}\|_{\mathrm{F}},\|\bm{Q}_{\ell}\|_{\mathrm{F}}\}\leq B}\mathcal{R}_{\epsilon,\infty,\infty}(\bm{\Theta}_{L}).
    \label{eq:main-adversarial-population-erm-for-standard-accuracy}
  \end{equation}
  Then, there exists a constant $C>0$ such that, for every $\bm{U}\in\mathcal{O}(d)$,
  \begin{equation}
    \mathcal{E}_{0,\bm{U}}(f(\,\cdot\,;\widehat{\bm{\Theta}}_{L,\epsilon,\infty,\infty},\mathcal{D}_{\infty}))-\mathcal{E}_{0,\bm{U}}^{\star}=
    \begin{cases}
      0 & (\epsilon=0~\mathrm{or}~r=1), \\
      C & (\epsilon>0~\mathrm{and}~r\geq2).
    \end{cases}
    \label{eq:main-adversarial-training-standard-accuracy-gap}
  \end{equation}
  The constant $C$ may depend on $d$, $\epsilon$, $\bm{\mu}$, and $\bm{\Sigma}$.
\end{theorem}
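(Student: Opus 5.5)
The proof mirrors that of \cref{thm:main-standard-training-bayes-optimality}, with the roles of the standard and robust criteria exchanged.

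\textbf{Identifying the learned direction.} First we show that the constrained population minimizer is an unrestricted global minimizer. Take the comparator $\bm{\Theta}_{L,\star}$ from \cref{lem:sufficient-depth-comparator} with the solution $\tau$ of \cref{eq:robust-shift-equation} and $\omega=\omega_{\epsilon}$ from \cref{eq:margin-risk-optimal-direction}, and choose $B$ large enough that this comparator is feasible in \cref{eq:main-adversarial-population-erm-for-standard-accuracy}. By \cref{eq:conditional-margin-lower-bound},
\begin{equation*}
  \mathcal{R}_{\epsilon,\infty,\infty}(\bm{\Theta}_{L})\geq\mathcal{J}_{\epsilon}(\bm{h}_{\epsilon}^{\star})=\mathcal{R}_{\epsilon,\infty,\infty}(\bm{\Theta}_{L,\star})
\end{equation*}
for every $\bm{\Theta}_{L}$. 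Hence $\widehat{\bm{\Theta}}_{L,\epsilon,\infty,\infty}$ attains the unrestricted infimum. \Cref{thm:unrestricted-fixed-depth-global-minimizers} then gives, for every $\bm{U}\in\mathcal{O}(d)$, the population latent direction $\bm{U}^{\top}\bm{\theta}=\omega_{\epsilon}\bm{h}_{\tau}$. With $\mathcal{D}_{\infty}$, query linearity \cref{eq:finite-context-query-linear-representation} makes the predictor $\bm{x}\mapsto\omega_{\epsilon}\bm{h}_{\tau}^{\top}\bm{U}^{\top}\bm{x}$.

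\textbf{Computing the standard gap.} Apply \cref{lem:robust-bayes-characterization} at perturbation level $0$. Part (a) with $\epsilon=0$, together with positive scale invariance, gives standard error $\Phi(-\bm{\mu}^{\top}\bm{h}_{\tau}/\sqrt{\bm{h}_{\tau}^{\top}\bm{\Sigma}\bm{h}_{\tau}})$. Part (b) with $\epsilon=0$ (so the shift is zero) gives $\mathcal{E}^{\star}_{0,\bm{U}}=\Phi(-\sqrt{\bm{\mu}^{\top}\bm{\Sigma}^{-1}\bm{\mu}})$, and says that the Bayes error is attained among linear functions exactly by positive multiples of $\bm{h}_{0}=\bm{\Sigma}^{-1}\bm{\mu}$. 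So the gap equals $C:=\Phi(-\Gamma_{0}(\bm{h}_{\tau}))-\Phi(-\sqrt{\bm{\mu}^{\top}\bm{\Sigma}^{-1}\bm{\mu}})\geq0$, independent of $\bm{U}$.

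\textbf{Case analysis.} If $\epsilon=0$, then $\tau=0$ and $\bm{h}_{\tau}=\bm{h}_{0}$, so the gap is zero. If $r=1$, both $\bm{h}_{\tau}$ and $\bm{h}_{0}$ are positive multiples of $\bm{\mu}$, since $\bm{\mu}$ lies in a single eigenspace with eigenvalue $\lambda$; the gap is again zero. If $\epsilon>0$ and $r\geq2$, then $\tau>0$. We reuse the projection argument from the proof of \cref{thm:main-standard-training-bayes-optimality}: an identity $\bm{h}_{\tau}=c\,\bm{h}_{0}$ with $c>0$, projected onto two distinct active eigenspaces $\lambda_{i}\neq\lambda_{j}$, forces $\lambda_{i}/(\lambda_{i}+\tau)=\lambda_{j}/(\lambda_{j}+\tau)$, which is impossible. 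Uniqueness in \cref{lem:robust-bayes-characterization}(b) at $\epsilon=0$ then makes $C>0$. In the zero cases the constant is irrelevant.

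\textbf{Main obstacle.} The only delicate point is the first step: the argmin over the constrained set must coincide with an unrestricted minimizer. This needs $B$ large enough to include the robust comparator, and also that \cref{thm:unrestricted-fixed-depth-global-minimizers} (which uses $L\geq r$ and full support of Haar measure) applies pointwise in $\bm{U}$. Both are already available, so I expect no genuine difficulty beyond this bookkeeping.
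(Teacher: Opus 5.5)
Your proposal is correct and follows the paper's proof essentially step by step. You use the comparator from \cref{lem:sufficient-depth-comparator} with $\omega=\omega_{\epsilon}$ to show the constrained minimizer is an unrestricted one, and \cref{thm:unrestricted-fixed-depth-global-minimizers} to pin its population direction to $\omega_{\epsilon}\bm{h}_{\tau}$. You then obtain the exact $\bm{U}$-independent standard gap from \cref{lem:robust-bayes-characterization} at radius zero, with the same eigenspace-projection argument giving $C>0$ when $\epsilon>0$ and $r\geq2$.
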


\begin{proof}
  Define $\bm{h}_{0}:=\bm{\Sigma}^{-1}\bm{\mu}$.
  By \cref{eq:margin-risk-optimal-direction}, the unique minimizer of the adversarial latent margin risk is $\bm{h}_{\epsilon}^{\star}=\omega_{\epsilon}\bm{h}_{\tau}$ for some $\omega_{\epsilon}>0$.
  Let $\bm{\Theta}_{L,\mathrm{adv}}$ be the comparator from \cref{lem:sufficient-depth-comparator} with the solution $\tau$ of \cref{eq:robust-shift-equation} and $\omega=\omega_{\epsilon}$, and take $B$ sufficiently large that this comparator is feasible in \cref{eq:main-adversarial-population-erm-for-standard-accuracy}.
  The conditional lower bound in \cref{eq:conditional-margin-lower-bound} and the comparator construction give
  \begin{equation}
    \mathcal{R}_{\epsilon,\infty,\infty}(\bm{\Theta}_{L})\geq\mathcal{J}_{\epsilon}(\bm{h}_{\epsilon}^{\star})=\mathcal{R}_{\epsilon,\infty,\infty}(\bm{\Theta}_{L,\mathrm{adv}})
  \end{equation}
  for every parameter tuple $\bm{\Theta}_{L}$.
  Therefore, the minimizing property in \cref{eq:main-adversarial-population-erm-for-standard-accuracy} implies that $\widehat{\bm{\Theta}}_{L,\epsilon,\infty,\infty}$ is also an unrestricted global minimizer of the infinite-sample adversarial risk.
  Applying \cref{thm:unrestricted-fixed-depth-global-minimizers} yields, for every $\bm{U}\in\mathcal{O}(d)$,
  \begin{equation}
    \bm{U}^{\top}\bm{\theta}_{\widehat{\bm{\Theta}}_{L,\epsilon,\infty,\infty}}(\operatorname{diag}(\bm{U},1)\bm{G}_{\star}\operatorname{diag}(\bm{U}^{\top},1))=\omega_{\epsilon}\bm{h}_{\tau}.
  \end{equation}
  Hence, query linearity, positive scale invariance of $\Gamma_{0}$, and \cref{lem:robust-bayes-characterization} give
  \begin{equation}
    \begin{aligned}
      &\mathcal{E}_{0,\bm{U}}(f(\,\cdot\,;\widehat{\bm{\Theta}}_{L,\epsilon,\infty,\infty},\mathcal{D}_{\infty}))-\mathcal{E}_{0,\bm{U}}^{\star} \\
      &\qquad=\Phi\ab(-\frac{\bm{\mu}^{\top}\bm{h}_{\tau}}{\sqrt{\bm{h}_{\tau}^{\top}\bm{\Sigma}\bm{h}_{\tau}}})-\Phi(-\sqrt{\bm{\mu}^{\top}\bm{\Sigma}^{-1}\bm{\mu}}).
    \end{aligned}
    \label{eq:adversarial-training-exact-standard-gap}
  \end{equation}
  In particular, the right-hand side is independent of $\bm{U}$.

  If $\epsilon=0$, then $\tau=0$ and $\bm{h}_{\tau}=\bm{h}_{0}$, so \cref{eq:adversarial-training-exact-standard-gap} is zero.
  Suppose next that $r=1$, and let $\lambda>0$ be the unique active eigenvalue of $\bm{\Sigma}$.
  Then $\bm{h}_{0}=\lambda^{-1}\bm{\mu}$ and $\bm{h}_{\tau}=(\lambda+\tau)^{-1}\bm{\mu}$, so the two directions are positive multiples of each other and \cref{eq:adversarial-training-exact-standard-gap} is again zero.

  It remains to consider $\epsilon>0$ and $r\geq2$.
  Equation \cref{eq:robust-shift-equation} implies $\tau>0$.
  Choose two distinct active eigenvalues $\lambda_{i}\neq\lambda_{j}$ of $\bm{\Sigma}$, with spectral projectors $\bm{E}_{i}$ and $\bm{E}_{j}$.
  If $\bm{h}_{\tau}=c\bm{h}_{0}$ for some $c>0$, then projecting this identity onto either active eigenspace gives
  \begin{equation}
    c=\frac{\lambda_{k}}{\lambda_{k}+\tau}, \qquad k\in\{i,j\}.
  \end{equation}
  This is impossible because $\tau>0$ and $\lambda_{i}\neq\lambda_{j}$.
  Thus, $\bm{h}_{\tau}$ is not a positive multiple of $\bm{h}_{0}$, so the uniqueness conclusion of \cref{lem:robust-bayes-characterization}, applied at perturbation radius zero, shows that its standard error is strictly larger than the standard Bayes error.
  Consequently, the constant
  \begin{equation}
    C:=\Phi\ab(-\frac{\bm{\mu}^{\top}\bm{h}_{\tau}}{\sqrt{\bm{h}_{\tau}^{\top}\bm{\Sigma}\bm{h}_{\tau}}})-\Phi(-\sqrt{\bm{\mu}^{\top}\bm{\Sigma}^{-1}\bm{\mu}})
  \end{equation}
  is strictly positive, and \cref{eq:adversarial-training-exact-standard-gap} equals $C$ for every $\bm{U}\in\mathcal{O}(d)$.
\end{proof}

\section{Demonstration Complexity}\label{sec:demonstration-complexity-proof}

\begin{theorem}[Robust prediction requires more demonstrations]\label{thm:main-adversarial-demonstration-complexity}
  Suppose that, for constants $0<c<C<\infty$,
  \begin{equation}
    c\sqrt{d}\leq\|\bm{\mu}\|_{2}\leq C\sqrt{d},
    \qquad
    c\bm{I}_{d}\preceq\bm{\Sigma}\preceq C\bm{I}_{d},
    \qquad
    c\leq\|\bm{\mu}\|_{2}-\epsilon\sqrt{d}\leq C.
    \label{eq:main-high-dimensional-demonstration-assumptions}
  \end{equation}
  Fix any depth $L\in\mathbb{N}$.
  For every target error $\eta>\mathcal{E}_{\epsilon,\bm{U}}^{\star}$, define
  \begin{equation}
    N_{\epsilon}(\eta)
    :=\min\{N\in\mathbb{N}:\inf_{\bm{\Theta}_{L}}\sup_{\bm{U}\in\mathcal{O}(d)}\mathbb{E}_{\mathcal{D}_{N}\mid\bm{U}}[\mathcal{E}_{\epsilon,\bm{U}}(f(\,\cdot\,;\bm{\Theta}_{L},\mathcal{D}_{N}))]\leq\eta\}.
    \label{eq:main-high-dimensional-demonstration-complexity}
  \end{equation}
  Then, for every fixed $\delta\in(0,1/2)$ and all sufficiently large $d$,
  \begin{equation}
    N_{0}(\mathcal{E}_{0,\bm{U}}^{\star}+\delta)=1,
    \qquad
    N_{\epsilon}(\mathcal{E}_{\epsilon,\bm{U}}^{\star}+\delta)=\Theta(\sqrt{d}).
    \label{eq:main-adversarial-demonstration-complexity-separation}
  \end{equation}
  The constants hidden in $\Theta$ depend only on $c$, $C$, and $\delta$.
\end{theorem}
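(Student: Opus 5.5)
The plan is to handle the two halves of \cref{eq:main-adversarial-demonstration-complexity-separation} separately. Throughout, I use that every depth-$L$ predictor is query-linear (\cref{eq:finite-context-query-linear-representation}) with a context-dependent latent direction $\bm{h}=\bm{U}^{\top}\bm{\theta}_{\bm{\Theta}_{L}}(\bm{G}_{N}(\bm{U}))$. By \cref{lem:robust-bayes-characterization}, its conditional robust error is $\Phi(-\Gamma_{\epsilon}(\bm{h}))$, with $\Gamma_{\epsilon}$ as in \cref{eq:robust-gaussian-score}. So everything reduces to how well $\bm{h}/\|\bm{h}\|_{2}$ can align with the relevant optimal direction from $N$ samples.

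\textbf{Standard case, $N_{0}=1$.} I will take a single layer with $\bm{P}_{1}=\bm{\Pi}_{y}$ and $\bm{Q}_{1}=\bm{\Pi}_{x}$ (extra layers zero). The latent direction is then the empirical mean $\bar{\bm{a}}=\bm{\mu}+\bm{g}$, where $\bm{g}\sim\mathcal{N}(\bm{0}_{d},\bm{\Sigma}/N)$; this direction is the same for every $\bm{U}$, so the $\sup_{\bm{U}}$ is harmless. With $N=1$:
\begin{itemize}
  \item the numerator of $\Gamma_{0}(\bar{\bm{a}})$ is $\|\bm{\mu}\|_{2}^{2}+O_{\mathbb{P}}(\sqrt{d})\asymp d$;
  \item the squared denominator is $\bar{\bm{a}}^{\top}\bm{\Sigma}\bar{\bm{a}}\leq C\|\bar{\bm{a}}\|_{2}^{2}=O_{\mathbb{P}}(d)$.
\end{itemize}
Hence $\Gamma_{0}\gtrsim\sqrt{d}$ with high probability, and bounded convergence gives expected error $\to0$. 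Since $N\geq1$ is forced and the target $\mathcal{E}^{\star}_{0,\bm{U}}+\delta$ exceeds the achieved error for large $d$, we get $N_{0}=1$.

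\textbf{Robust upper bound, $N_{\epsilon}\lesssim\sqrt{d}$.} I use the same one-layer estimator. Writing $\gamma:=\|\bm{\mu}\|_{2}-\epsilon\sqrt{d}\in[c,C]$, I expand
\begin{equation*}
  \bm{\mu}^{\top}\bar{\bm{a}}-\epsilon\sqrt{d}\|\bar{\bm{a}}\|_{2}
  =\|\bm{\mu}\|_{2}\gamma+\bm{\mu}^{\top}\bm{g}\Bigl(1-\tfrac{\epsilon\sqrt{d}}{\|\bm{\mu}\|_{2}}\Bigr)-\epsilon\sqrt{d}\,\frac{\|\bm{g}\|_{2}^{2}}{2\|\bm{\mu}\|_{2}}+\cdots.
\end{equation*}
The three displayed terms behave as follows.
\begin{itemize}
  \item The leading term is $\|\bm{\mu}\|_{2}\gamma\asymp\sqrt{d}$.
  \item The linear term is $O_{\mathbb{P}}(\sqrt{d/N})\cdot\gamma/\|\bm{\mu}\|_{2}=O_{\mathbb{P}}(N^{-1/2})$, which is small.
  \item The quadratic term is $\asymp d/N$, using $\|\bm{g}\|_{2}^{2}\approx\operatorname{tr}\bm{\Sigma}/N$.
\end{itemize}
Dividing by the denominator $\sqrt{\bar{\bm{a}}^{\top}\bm{\Sigma}\bar{\bm{a}}}\asymp\sqrt{d}$, the robust score equals $\Gamma_{\epsilon}(\bm{\mu})-O(\sqrt{d}/N)+o_{\mathbb{P}}(1)$. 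Two more ingredients finish the upper bound. First, I will check that $\Gamma_{\epsilon}(\bm{\mu})$ is within $o(1)$ of the robust Bayes score. The reason is that $\tau\asymp d$ when $\gamma=O(1)$, so $\bm{h}_{\tau}$ is nearly parallel to $\bm{\mu}$; this needs a short computation with the spectral bounds on $\bm{\Sigma}$. Second, choosing $N=K\sqrt{d}$ with $K=K(c,C,\delta)$ large makes the expected excess at most $\delta$.

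\textbf{Robust lower bound, $N_{\epsilon}\gtrsim\sqrt{d}$.} This is the main obstacle. I first bound $\sup_{\bm{U}}$ below by the Haar average, turning the problem into a Bayesian one: estimate the random unit vector $\bm{v}=\bm{U}\bm{\mu}/\|\bm{\mu}\|_{2}$ from $N$ samples $y_{n}\bm{x}_{n}\sim\mathcal{N}(\bm{U}\bm{\mu},\bm{U}\bm{\Sigma}\bm{U}^{\top})$. For any unit $\hat{\bm{w}}$ in ambient coordinates, the robust margin numerator is $\|\bm{\mu}\|_{2}\hat{\bm{w}}^{\top}\bm{v}-\epsilon\sqrt{d}=\gamma-\|\bm{\mu}\|_{2}(1-\hat{\bm{w}}^{\top}\bm{v})$, and the noise standard deviation lies in $[\sqrt{c},\sqrt{C}]$. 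Consequently, an excess error below $\delta$ forces $\|\bm{\mu}\|_{2}\,\mathbb{E}[1-\hat{\bm{w}}^{\top}\bm{v}]\lesssim1$ on an event of constant probability. I will pass from the expectation of $\Phi$ to this conclusion using monotonicity of $\Phi$ and Markov's inequality. The conclusion is equivalent to $\mathbb{E}\|\hat{\bm{w}}-\bm{v}\|_{2}^{2}\lesssim1/\sqrt{d}$.

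To contradict this when $N\ll\sqrt{d}$, I will use a van Trees (Bayesian Cram\'er--Rao) inequality for estimating $\bm{U}\bm{\mu}$ under the rotation-invariant prior. Because the covariance $\bm{U}\bm{\Sigma}\bm{U}^{\top}$ also depends on $\bm{U}$, I plan to parametrize by $\bm{U}$ and bound the total Fisher information per sample by $O(\|\bm{\mu}\|_{2}^{2}/c+d)$. If that is too crude, I will fall back on a Fano argument over a packing of rotations. Either route should give $\mathbb{E}\|\hat{\bm{w}}-\bm{v}\|_{2}^{2}\gtrsim(d-1)/(N\|\bm{\mu}\|_{2}^{2}/c+d)\cdot\text{const}\asymp1/N$. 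Comparing with the requirement $\lesssim1/\sqrt{d}$ then yields $N\gtrsim\sqrt{d}$. The delicate points are controlling the $\bm{U}$-dependence of the covariance in the Fisher information, and tracking constants so that they depend only on $c$, $C$, and $\delta$.
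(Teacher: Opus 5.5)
Your standard-case argument and your robust upper bound are sound and essentially coincide with the paper's. The paper uses the same empirical-mean predictor and bounds the directional deficit directly by $\|\bm{\mu}\|_{2}(1-\overline{\bm{\mu}}^{\top}\widehat{\bm{q}})\leq 2\|\bm{\xi}_{N}\|_{2}^{2}/\|\bm{\mu}\|_{2}$ instead of Taylor-expanding, but the content is the same. One slip: under the assumptions, $\tau\asymp\sqrt{d}$, not $d$. For instance, $\tau\|\bm{\mu}\|_{2}/(C+\tau)\leq\epsilon\sqrt{d}=\|\bm{\mu}\|_{2}-\gamma$ gives $\tau\leq C\|\bm{\mu}\|_{2}/\gamma=O(\sqrt{d})$. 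The vector $\bm{h}_{\tau}$ is still nearly parallel to $\bm{\mu}$. A simpler route is the paper's observation that every maximizer $\bm{q}^{\star}$ of the robust score has a positive numerator, which forces $\|\bm{q}^{\star}-\overline{\bm{\mu}}\|_{2}^{2}<2\gamma/\|\bm{\mu}\|_{2}=O(d^{-1/2})$.

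The genuine gap is in the lower bound, at the reduction step. From $\mathbb{E}[\Phi(-\Gamma_{\epsilon}(\widehat{\bm{w}}))]\leq\mathcal{E}^{\star}_{\epsilon,\bm{U}}+\delta$, Markov only gives that $D:=\|\bm{\mu}\|_{2}(1-\widehat{\bm{w}}^{\top}\bm{v})=O(1)$ on an event of probability at least some $p_{0}>0$. Here $p_{0}\approx1-\mathcal{E}^{\star}_{\epsilon,\bm{U}}-\delta$ can be small, because $\delta$ ranges up to $1/2$. This is \emph{not} equivalent to $\mathbb{E}\|\widehat{\bm{w}}-\bm{v}\|_{2}^{2}\lesssim d^{-1/2}$: on the complementary event $\|\widehat{\bm{w}}-\bm{v}\|_{2}^{2}$ may be of order one. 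For example, a direction that is accurate with probability $1-p$ and orthogonal to $\bm{v}$ otherwise has excess robust error about $p(1-\mathcal{E}^{\star}_{\epsilon,\bm{U}})$ but mean squared error about $2p$. Hence a van Trees bound $\mathbb{E}\|\widehat{\bm{w}}-\bm{v}\|_{2}^{2}\gtrsim 1/N$ contradicts nothing. What you need instead is an upper bound, below $p_{0}$, on the \emph{probability} that the normalized effective direction is $O(d^{-1/4})$-accurate when $N\leq\alpha\sqrt{d}$.

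Fano's inequality supplies this, but only if you use it as a probability-of-error bound rather than converting it into a mean-squared-error bound as in your final comparison. The paper takes $J_{d}\geq e^{kd}$ rotations $\bm{U}_{j}$ mapping $\overline{\bm{\mu}}$ to $\bm{u}_{j}$ at angle $\beta_{d}=\sqrt{B}d^{-1/4}$, with $\|\bm{u}_{i}-\bm{u}_{j}\|_{2}\geq\beta_{d}/2$. The per-sample KL divergence is $O(\|\bm{\mu}\|_{2}^{2}\beta_{d}^{2})=O(B\sqrt{d})$; the covariance contributes only $O(\beta_{d}^{2})$, which disposes of your concern about the $\bm{U}$-dependence of the covariance. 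So for $N\leq\alpha\sqrt{d}$, nearest-point decoding of the normalized direction fails with probability at least $1-O(\alpha B)-o(1)$. On failure, $D\geq\|\bm{\mu}\|_{2}\beta_{d}^{2}/32\geq cB/32$, so the robust error is at least $\Phi((cB/32-C)/\sqrt{C})$. Choosing $B$ large first and then $\alpha$ small pushes the packed-average robust error above $\Phi(-c/\sqrt{C})+\delta\geq\mathcal{E}^{\star}_{\epsilon,\bm{U}}+\delta$.
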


\begin{proof}
  Write $\overline{\bm{\mu}}:=\bm{\mu}/\|\bm{\mu}\|_{2}$.

  \textbf{The robust Bayes error is asymptotically attained by the mean direction.}
  For every unit vector $\bm{q}\in\mathbb{R}^{d}$, define its robust standardized margin by
  \begin{equation}
    M(\bm{q})
    :=\frac{\bm{\mu}^{\top}\bm{q}-\epsilon\sqrt{d}}{\sqrt{\bm{q}^{\top}\bm{\Sigma}\bm{q}}}
    =\frac{\|\bm{\mu}\|_{2}-\epsilon\sqrt{d}-\|\bm{\mu}\|_{2}(1-\overline{\bm{\mu}}^{\top}\bm{q})}{\sqrt{\bm{q}^{\top}\bm{\Sigma}\bm{q}}}.
    \label{eq:high-dimensional-robust-standardized-margin}
  \end{equation}
  By \cref{lem:robust-bayes-characterization}, the robust Bayes error is $\Phi(-M^{\star})$, where $M^{\star}:=\max_{\|\bm{q}\|_{2}=1}M(\bm{q})$.
  Substituting $\bm{q}=\overline{\bm{\mu}}$ gives
  \begin{equation}
    M^{\star}\geq\frac{\|\bm{\mu}\|_{2}-\epsilon\sqrt{d}}{\sqrt{\overline{\bm{\mu}}^{\top}\bm{\Sigma}\overline{\bm{\mu}}}}>0.
  \end{equation}
  Every maximizer $\bm{q}^{\star}$ therefore has a positive numerator in \cref{eq:high-dimensional-robust-standardized-margin}, and hence
  \begin{equation}
    \|\bm{q}^{\star}-\overline{\bm{\mu}}\|_{2}^{2}
    =2(1-\overline{\bm{\mu}}^{\top}\bm{q}^{\star})
    <\frac{2(\|\bm{\mu}\|_{2}-\epsilon\sqrt{d})}{\|\bm{\mu}\|_{2}}
    =O(d^{-1/2}).
    \label{eq:robust-bayes-direction-approaches-mean}
  \end{equation}
  The spectral bounds and \cref{eq:robust-bayes-direction-approaches-mean} imply
  \begin{equation}
    |{\bm{q}^{\star}}^{\top}\bm{\Sigma}\bm{q}^{\star}-\overline{\bm{\mu}}^{\top}\bm{\Sigma}\overline{\bm{\mu}}|
    \leq2C\|\bm{q}^{\star}-\overline{\bm{\mu}}\|_{2}
    =O(d^{-1/4}).
  \end{equation}
  The numerator of $M(\bm{q}^{\star})$ is at most $\|\bm{\mu}\|_{2}-\epsilon\sqrt{d}$, while every denominator in \cref{eq:high-dimensional-robust-standardized-margin} is at least $\sqrt{c}$.
  It follows that
  \begin{equation}
    0\leq M^{\star}-\frac{\|\bm{\mu}\|_{2}-\epsilon\sqrt{d}}{\sqrt{\overline{\bm{\mu}}^{\top}\bm{\Sigma}\overline{\bm{\mu}}}}=O(d^{-1/4}).
    \label{eq:robust-bayes-margin-mean-approximation}
  \end{equation}
  Since the standard normal density is bounded, \cref{eq:robust-bayes-margin-mean-approximation} yields
  \begin{equation}
    \mathcal{E}_{\epsilon,\bm{U}}^{\star}
    =\Phi\ab(-\frac{\|\bm{\mu}\|_{2}-\epsilon\sqrt{d}}{\sqrt{\overline{\bm{\mu}}^{\top}\bm{\Sigma}\overline{\bm{\mu}}}})+O(d^{-1/4})
    \label{eq:robust-bayes-error-mean-approximation}
  \end{equation}
  uniformly over $\bm{U}\in\mathcal{O}(d)$.

  \textbf{The empirical-mean predictor.}
  Given $N$ demonstrations, define
  \begin{equation}
    \overline{\bm{v}}_{N}:=\frac{1}{N}\sum_{n=1}^{N}y_{n}\bm{x}_{n},
    \qquad
    f_{\mathrm{mean}}(\bm{x};\mathcal{D}_{N}):=\overline{\bm{v}}_{N}^{\top}\bm{x}.
    \label{eq:high-dimensional-empirical-mean-predictor}
  \end{equation}
  For every depth $L\in\mathbb{N}$, this predictor is realized by setting $\bm{P}_{\ell}=\bm{Q}_{\ell}=\bm{0}_{d+1,d+1}$ for $1\leq\ell<L$, $\bm{P}_{L}=\bm{\Pi}_{y}$, and $\bm{Q}_{L}=\bm{I}_{d+1}$.
  Conditional on a task basis $\bm{U}$, its latent direction is
  \begin{equation}
    \overline{\bm{a}}_{N}:=\bm{\mu}+\bm{\xi}_{N},
    \qquad
    \bm{\xi}_{N}\sim\mathcal{N}\ab(\bm{0}_{d},\frac{\bm{\Sigma}}{N}),
    \qquad
    \overline{\bm{v}}_{N}=\bm{U}\overline{\bm{a}}_{N}.
    \label{eq:high-dimensional-empirical-mean-representation}
  \end{equation}

  \textbf{One demonstration suffices for standard prediction.}
  For $N=1$, \cref{eq:high-dimensional-empirical-mean-representation} and the spectral bounds give
  \begin{equation}
    \bm{\mu}^{\top}\overline{\bm{a}}_{1}=\|\bm{\mu}\|_{2}^{2}+O_{\mathbb{P}}(\|\bm{\mu}\|_{2}),
    \qquad
    \overline{\bm{a}}_{1}^{\top}\bm{\Sigma}\overline{\bm{a}}_{1}=O_{\mathbb{P}}(d).
    \label{eq:one-demonstration-standard-margin-components}
  \end{equation}
  Hence, the standardized clean margin diverges in probability:
  \begin{equation}
    \frac{\bm{\mu}^{\top}\overline{\bm{a}}_{1}}{\sqrt{\overline{\bm{a}}_{1}^{\top}\bm{\Sigma}\overline{\bm{a}}_{1}}}\xrightarrow{\mathbb{P}}\infty.
    \label{eq:one-demonstration-standard-margin-divergence}
  \end{equation}
  Equations \cref{eq:linear-robust-error,eq:one-demonstration-standard-margin-divergence}, boundedness of the error, and convergence in probability imply
  \begin{equation}
    \sup_{\bm{U}\in\mathcal{O}(d)}\mathbb{E}_{\mathcal{D}_{1}\mid\bm{U}}[\mathcal{E}_{0,\bm{U}}(f_{\mathrm{mean}}(\,\cdot\,;\mathcal{D}_{1}))]\to0.
    \label{eq:one-demonstration-standard-error-vanishes}
  \end{equation}
  The standard Bayes error is nonnegative, so \cref{eq:one-demonstration-standard-error-vanishes} shows that, for every fixed $\delta>0$ and all sufficiently large $d$,
  \begin{equation}
    N_{0}(\mathcal{E}_{0,\bm{U}}^{\star}+\delta)=1.
    \label{eq:one-demonstration-standard-complexity}
  \end{equation}

  \textbf{Upper bound for robust prediction.}
  Fix $a>0$, set $N:=\lceil a\sqrt{d}\rceil$, and define $\widehat{\bm{q}}:=\overline{\bm{a}}_{N}/\|\overline{\bm{a}}_{N}\|_{2}$.
  Let $\mathcal{A}:=\{\|\bm{\xi}_{N}\|_{2}\leq\|\bm{\mu}\|_{2}/2\}$.
  Markov's inequality and \cref{eq:main-high-dimensional-demonstration-assumptions} give
  \begin{equation}
    \mathbb{P}(\mathcal{A}^{\mathsf{c}})
    \leq\frac{4\mathbb{E}[\|\bm{\xi}_{N}\|_{2}^{2}]}{\|\bm{\mu}\|_{2}^{2}}
    =\frac{4\operatorname{tr}(\bm{\Sigma})}{N\|\bm{\mu}\|_{2}^{2}}
    =O(d^{-1/2}).
    \label{eq:empirical-mean-large-noise-probability}
  \end{equation}
  On $\mathcal{A}$, normalization is Lipschitz and therefore
  \begin{equation}
    \|\bm{\mu}\|_{2}(1-\overline{\bm{\mu}}^{\top}\widehat{\bm{q}})
    =\frac{\|\bm{\mu}\|_{2}}{2}\|\widehat{\bm{q}}-\overline{\bm{\mu}}\|_{2}^{2}
    \leq\frac{2\|\bm{\xi}_{N}\|_{2}^{2}}{\|\bm{\mu}\|_{2}}.
    \label{eq:empirical-mean-directional-deficit-bound}
  \end{equation}
  Taking expectations in \cref{eq:empirical-mean-directional-deficit-bound} yields
  \begin{equation}
    \mathbb{E}[\|\bm{\mu}\|_{2}(1-\overline{\bm{\mu}}^{\top}\widehat{\bm{q}})\mathds{1}\{\mathcal{A}\}]
    \leq\frac{2\operatorname{tr}(\bm{\Sigma})}{N\|\bm{\mu}\|_{2}}
    \leq\frac{K}{a}
    \label{eq:empirical-mean-expected-directional-deficit}
  \end{equation}
  for a constant $K>0$ depending only on $c$ and $C$.
  Moreover, Cauchy--Schwarz, \cref{eq:empirical-mean-large-noise-probability}, and the same normalization bound give
  \begin{equation}
    \mathbb{E}[\|\widehat{\bm{q}}-\overline{\bm{\mu}}\|_{2}\mathds{1}\{\mathcal{A}\}]=O(d^{-1/4}),
    \qquad
    \mathbb{P}(\mathcal{A}^{\mathsf{c}})=O(d^{-1/2}).
    \label{eq:empirical-mean-direction-convergence}
  \end{equation}
  The map $\bm{q}\mapsto(\bm{q}^{\top}\bm{\Sigma}\bm{q})^{-1/2}$ is uniformly Lipschitz on the unit sphere under \cref{eq:main-high-dimensional-demonstration-assumptions}.
  Combining \cref{eq:linear-robust-error,eq:robust-bayes-error-mean-approximation,eq:empirical-mean-large-noise-probability,eq:empirical-mean-expected-directional-deficit,eq:empirical-mean-direction-convergence} with the Lipschitz continuity of $\Phi$ gives
  \begin{equation}
    \sup_{\bm{U}\in\mathcal{O}(d)}\mathbb{E}_{\mathcal{D}_{N}\mid\bm{U}}[\mathcal{E}_{\epsilon,\bm{U}}(f_{\mathrm{mean}}(\,\cdot\,;\mathcal{D}_{N}))]-\mathcal{E}_{\epsilon,\bm{U}}^{\star}
    \leq\frac{K'}{a}+o(1)
    \label{eq:robust-demonstration-upper-bound}
  \end{equation}
  for a constant $K'>0$ depending only on $c$ and $C$.
  Choosing $a>2K'/\delta$ proves
  \begin{equation}
    N_{\epsilon}(\mathcal{E}_{\epsilon,\bm{U}}^{\star}+\delta)=O(\sqrt{d})
    \label{eq:robust-demonstration-complexity-upper-bound}
  \end{equation}
  uniformly over $L$.

  \textbf{Lower bound for robust prediction.}
  A standard volumetric packing argument gives unit vectors $\bm{v}_{1},\ldots,\bm{v}_{J_{d}}\in\overline{\bm{\mu}}^{\perp}$ such that
  \begin{equation}
    J_{d}\geq\exp(kd),
    \qquad
    \|\bm{v}_{i}-\bm{v}_{j}\|_{2}\geq1
    \quad(i\neq j)
    \label{eq:local-rotation-packing}
  \end{equation}
  for a numerical constant $k>0$.
  Fix a constant $B\geq32C/c$, to be chosen sufficiently large below, set $\beta_{d}:=\sqrt{B}d^{-1/4}$, and, for each $j\in[J_{d}]$, let $\bm{U}_{j}$ be the rotation through angle $\beta_{d}$ in the plane spanned by $\overline{\bm{\mu}}$ and $\bm{v}_{j}$ that maps $\overline{\bm{\mu}}$ to
  \begin{equation}
    \bm{u}_{j}:=\cos(\beta_{d})\overline{\bm{\mu}}+\sin(\beta_{d})\bm{v}_{j}
    \label{eq:packed-task-directions}
  \end{equation}
  and acts as the identity on the orthogonal complement of this plane.
  For all sufficiently large $d$, \cref{eq:local-rotation-packing} implies
  \begin{equation}
    \|\bm{u}_{i}-\bm{u}_{j}\|_{2}\geq\frac{\beta_{d}}{2}
    \quad(i\neq j).
    \label{eq:packed-task-direction-separation}
  \end{equation}
  Let $\mathbb{P}_{j}$ be the distribution of one labeled demonstration under task $\bm{U}_{j}$.
  The label is independent of the task, while the signed feature has distribution
  \begin{equation}
    y\bm{x}\sim\mathcal{N}(\bm{U}_{j}\bm{\mu},\bm{U}_{j}\bm{\Sigma}\bm{U}_{j}^{\top}).
    \label{eq:packed-task-distribution}
  \end{equation}
  Each difference $\bm{U}_{i}-\bm{U}_{j}$ has rank at most four and Frobenius norm $O(\beta_{d})$.
  In particular,
  \begin{equation}
    \|\bm{U}_{i}\bm{\mu}-\bm{U}_{j}\bm{\mu}\|_{2}=O(\|\bm{\mu}\|_{2}\beta_{d}),
    \qquad
    \|\bm{U}_{i}\bm{\Sigma}\bm{U}_{i}^{\top}-\bm{U}_{j}\bm{\Sigma}\bm{U}_{j}^{\top}\|_{\mathrm{F}}=O(\beta_{d}).
    \label{eq:packed-task-parameter-separation}
  \end{equation}
  To control the covariance contribution to the Gaussian relative entropy, apply $x-1-\log x\leq K_{0}(x-1)^{2}$ on the compact interval $[c/C,C/c]$ to the eigenvalues of
  \begin{equation}
    {(\bm{U}_{j}\bm{\Sigma}\bm{U}_{j}^{\top})}^{-1/2}\bm{U}_{i}\bm{\Sigma}\bm{U}_{i}^{\top}{(\bm{U}_{j}\bm{\Sigma}\bm{U}_{j}^{\top})}^{-1/2}.
  \end{equation}
  The Gaussian relative-entropy formula, \cref{eq:main-high-dimensional-demonstration-assumptions,eq:packed-task-parameter-separation}, and this scalar bound give
  \begin{equation}
    \max_{i,j\in[J_{d}]}\operatorname{KL}(\mathbb{P}_{i}\|\mathbb{P}_{j})
    \leq K''\|\bm{\mu}\|_{2}^{2}\beta_{d}^{2}
    \leq K''C^{2}B\sqrt{d}
    \label{eq:packed-task-relative-entropy}
  \end{equation}
  for a constant $K''>0$ depending only on $c$ and $C$.
  Fix $\alpha>0$ and suppose that $N\leq\alpha\sqrt{d}$.
  Tensorization, \cref{eq:local-rotation-packing,eq:packed-task-relative-entropy}, and Fano's inequality show that every estimator $\widehat{J}$ based on $N$ demonstrations satisfies
  \begin{equation}
    \frac{1}{J_{d}}\sum_{j=1}^{J_{d}}\mathbb{P}_{j}^{\otimes N}(\widehat{J}\neq j)
    \geq1-\frac{K''C^{2}\alpha Bd+\log2}{kd}.
    \label{eq:packed-task-fano-bound}
  \end{equation}
  By \cref{eq:finite-context-query-linear-representation}, every linear transformer has an effective direction determined by the demonstrations.
  Assign an arbitrary unit direction when this effective direction is zero, and otherwise normalize it to obtain $\widehat{\bm{q}}$.
  Decode the task by choosing the element of $\{\bm{u}_{1},\ldots,\bm{u}_{J_{d}}\}$ nearest to $\widehat{\bm{q}}$.
  Whenever this decoder is incorrect, \cref{eq:packed-task-direction-separation} gives
  \begin{equation}
    1-\bm{u}_{j}^{\top}\widehat{\bm{q}}
    =\frac{1}{2}\|\widehat{\bm{q}}-\bm{u}_{j}\|_{2}^{2}
    \geq\frac{\beta_{d}^{2}}{32}.
    \label{eq:packed-task-directional-deficit}
  \end{equation}
  On this event, the numerator in the conditional robust-error formula is bounded below by
  \begin{equation}
    \epsilon\sqrt{d}-\|\bm{\mu}\|_{2}\bm{u}_{j}^{\top}\widehat{\bm{q}}
    =\|\bm{\mu}\|_{2}(1-\bm{u}_{j}^{\top}\widehat{\bm{q}})-(\|\bm{\mu}\|_{2}-\epsilon\sqrt{d})
    \geq\frac{cB}{32}-C.
    \label{eq:packed-task-adversarial-margin}
  \end{equation}
  The choice $B\geq32C/c$ makes this lower bound nonnegative.
  Its standard deviation is at most $\sqrt{C}$ by \cref{eq:main-high-dimensional-demonstration-assumptions}.
  Thus, \cref{eq:linear-robust-error,eq:packed-task-fano-bound,eq:packed-task-adversarial-margin} lower-bound the robust error averaged over the packed tasks by
  \begin{equation}
    \ab(1-\frac{K''C^{2}\alpha B}{k}-o(1))\Phi\ab(\frac{cB/32-C}{\sqrt{C}}).
    \label{eq:packed-task-robust-error-lower-bound}
  \end{equation}
  The zero effective direction itself has robust error one, so the arbitrary assignment made above does not weaken the lower bound.
  The mean direction is a feasible robust classifier, so
  \begin{equation}
    \mathcal{E}_{\epsilon,\bm{U}}^{\star}
    \leq\Phi\ab(-\frac{\|\bm{\mu}\|_{2}-\epsilon\sqrt{d}}{\sqrt{\overline{\bm{\mu}}^{\top}\bm{\Sigma}\overline{\bm{\mu}}}})
    \leq\Phi\ab(-\frac{c}{\sqrt{C}})
    <\frac{1}{2}.
    \label{eq:robust-bayes-error-uniform-upper-bound}
  \end{equation}
  Since $\Phi(-c/\sqrt{C})+\delta<1$, first choose $B\geq32C/c$ sufficiently large and then $\alpha$ sufficiently small that the lower bound in \cref{eq:packed-task-robust-error-lower-bound} is strictly larger than $\Phi(-c/\sqrt{C})+\delta$ for all sufficiently large $d$.
  The worst-task error is at least the average over the packed tasks, so \cref{eq:robust-bayes-error-uniform-upper-bound} implies
  \begin{equation}
    N_{\epsilon}(\mathcal{E}_{\epsilon,\bm{U}}^{\star}+\delta)=\Omega(\sqrt{d})
    \label{eq:robust-demonstration-complexity-lower-bound}
  \end{equation}
  uniformly over $L$.
  Combining \cref{eq:one-demonstration-standard-complexity,eq:robust-demonstration-complexity-upper-bound,eq:robust-demonstration-complexity-lower-bound} proves \cref{eq:main-adversarial-demonstration-complexity-separation}.
\end{proof}

\stopcontents[sections]

\end{document}